%% file: main.tex
\documentclass[11pt]{article}

\usepackage[utf8]{inputenc}
\usepackage[T1]{fontenc}
\usepackage[english]{babel}

\usepackage{booktabs}
\usepackage{adjustbox}
\usepackage{mathtools}
\usepackage{tabularx}
\usepackage{array}
\usepackage[shortlabels]{enumitem}
\usepackage{amsmath} \DeclareRobustCommand{\widebar}[1]{\overline{#1}}
\usepackage{amssymb}
\usepackage{caption}
\usepackage[right=1in, top=1in, bottom=1in, left=1in]{geometry}
\usepackage{lmodern}
\usepackage{setspace}
\usepackage[dvipsnames]{xcolor}
\usepackage[colorlinks=true, linkcolor=blue, citecolor=blue, hypertexnames=false]{hyperref}
\usepackage{nameref}
\usepackage{cleveref}
\usepackage[authoryear,round]{natbib}
\usepackage{amsthm}

\DeclarePairedDelimiter{\ind}{\mathbf{1}\{}{\}}

\usepackage{algorithm}
\usepackage{algpseudocode}
\usepackage{float}
\floatname{algorithm}{ALGORITHM}
\algrenewcommand\alglinenumber[1]{\footnotesize #1:}
\algrenewcommand\algorithmiccomment[1]{\hfill{\(\triangleright\)~#1}}

\renewcommand{\arraystretch}{1.15}

\newtheorem{lemma}{Lemma}

\newtheorem{corollary}{Corollary}
\newtheorem{theorem}{Theorem}
\newtheorem*{theorem*}{Theorem}
\newtheorem{assumption}{Assumption}

\newtheorem{definition}{Definition}

\newtheorem{remark}{Remark}

\usepackage[runin]{abstract}

\abslabeldelim{.\quad}

\input{macros_ec}

\commenter{sohom}{cyan}
\commenter{duy}{green}
\commenter{yunbei}{red}
\commenter{richard}{magenta}

\title{In-Context Learning for Data-Driven Censored Inventory Control}

\author{\begingroup
\setlength{\tabcolsep}{6pt}
\renewcommand{\arraystretch}{0.88}
\begin{tabular}{cc}
\begin{tabular}{@{}c@{}}
Sohom Mukherjee\thanks{Equal contribution.}\\[-0.35ex]
{\footnotesize Julius-Maximilians-Universit\"at W\"urzburg}\\[-0.55ex]
{\footnotesize \texttt{sohom.mukherjee@uni-wuerzburg.de}}
\end{tabular}
&
\begin{tabular}{@{}c@{}}
Anh-Duy Pham\footnotemark[1]\\[-0.35ex]
{\footnotesize Julius-Maximilians-Universit\"at W\"urzburg}\\[-0.55ex]
{\footnotesize \texttt{anh-duy.pham@uni-wuerzburg.de}}
\end{tabular}
\\[5.0ex]
\begin{tabular}{@{}c@{}}
Richard Pibernik\\[-0.35ex]
{\footnotesize Julius-Maximilians-Universit\"at W\"urzburg}\\[-0.55ex]
{\footnotesize Zaragoza Logistics Center}\\[-0.55ex]
{\footnotesize \texttt{richard.pibernik@uni-wuerzburg.de}}
\end{tabular}
&
\begin{tabular}{@{}c@{}}
Yunbei Xu\\[-0.35ex]
{\footnotesize National University of Singapore}\\[-0.55ex]
{\footnotesize \texttt{yunbei@nus.edu.sg}}
\end{tabular}
\end{tabular}
\endgroup
}
\date{}

\begin{document}

\maketitle

\begin{center}
\textbf{Abstract}
\end{center}
\vspace{-0.6em}

\begingroup
\setstretch{1.0}
\footnotesize
\noindent
We study inventory control with decision-dependent censoring. While this class encompasses a wide range of operational decision problems with continuous action spaces, we focus on the canonical example of the censored or repeated newsvendor (R-NV). Each period, the decision-maker chooses an order quantity and observes the sales, so the demand is only partially observed. Two complementary schools of work have been popular in this domain, but suffer from their own limitations: parametric Thompson sampling (might fail under prior mismatch), and offline imputation (might fail to transfer to online environments). Based on the recent line of work on the predictive view of decision making, we take a fresh look at combining the two approaches above, by taking oracle actions on learned completions of the latent demand. We propose in-context generative posterior sampling (ICGPS) that makes this possible via modern generative models capable of offline meta-training and online in-context autoregressive generation. Theoretically, we show that the Bayesian regret of the deployed ICGPS policy using the learned completion kernel is bounded by the Bayesian regret of the TS benchmark with an ideal completion kernel plus a deployment penalty that scales as $\sqrt{T}$ times the square root of the completion mismatch. This provides a template where one can plug in the Bayesian regret of TS for any known operational problem. In particular, for the R-NV problem, we derive a sublinear Bayesian regret by showing that censored feedback can be reduced to a bandit convex optimization feedback. Moreover, the completion mismatch is controlled by the offline predictive mismatch, i.e., offline quality translates to online performance, under reasonable assumptions. We instantiate ICGPS in practice by proposing a novel ChronosFlow architecture that combines a frozen time-series transformer backbone with a trainable conditional normalizing-flow head for fast conditional sampling. ChronosFlow-ICGPS performs at par with TS and outperforms Myopic and UCB-style baselines for benchmark experiments, and shows robustness against prior mismatch and distribution shift. ChronosFlow-ICGPS also performs well for the real-world SuperStore dataset, especially under heavy censoring.

\par\medskip
\noindent\textbf{Keywords:} censored newsvendor; generative posterior sampling; in-context learning; offline-to-online learning
\endgroup

\input{sec1-intro}

\input{sec2-general_framework}

\input{sec3-repeated_nv}
\input{sec4-experiments}

{\setstretch{1.0}
\bibliographystyle{plainnat}
\bibliography{ec-bibliography}
}
\newpage
\appendix
\counterwithin{table}{section}
\renewcommand{\thetable}{\thesection.\arabic{table}}

\input{sec-apx_method}

\input{sec-apx_theory}

\input{sec-apx_experiment}

\end{document}

%% file: sec1-intro.tex
\section{Introduction}
\label{sec:intro}

Many operational decision problems involve decision-dependent uncertainty: the decisions we take determine what we get to observe. A canonical example is retail inventory under stockouts, where we observe sales but not demand whenever inventory is insufficient. This \emph{right-censoring} creates an exploration--exploitation tension: ordering conservatively reduces immediate holding costs but also increases censoring, making future learning harder. The aforementioned problem, usually referred to as the repeated or censored newsvendor, has therefore become a central testbed for understanding learning under censored feedback.
While Bayesian methods, dating back to the popular posterior sampling approach introduced by Thompson \cite{thompson1933likelihood}, have proven to be powerful in solving this problem \cite{zhang2025thompson}, the bottleneck of using such methods in modern practice remains the choice of the prior distribution \cite{xu2025bayesian}. On the other hand, imputation methods that perform some form of demand de-censoring are confined mainly to offline settings and fail to provide online guarantees \cite{clausen2025iterative}.

To overcome this challenge, we adopt a missing-data view of uncertainty in sequential decision-making.
Rather than viewing learning as estimating parameters, we treat the unobserved components of the trajectory as \emph{missing outcomes} that can be \emph{completed} in a way that is consistent with the feedback mechanism.
If one could sample a completion from the true posterior, then posterior sampling would act by drawing a plausible complete
trajectory and choosing the corresponding oracle action. Recent work has shown that this perspective can be made algorithmic by replacing explicit posteriors with modern generative models that can sample missing data \emph{in-context}, i.e., only by conditioning on examples provided in the context, and no re-training \citep{cai2024active, zhang2025contextual}. However, such work has been confined to finite discrete action spaces and uncensored feedback, limiting their application to operational problems with decision-dependent uncertainty.

In this paper, we generalize this work to solve a broad class of operational sequential decision problems where feedback is decision-dependent (censoring) and the decision (action) space is continuous and scalar. 
Our goal is online learning with performance guarantees in continuous action spaces: we seek sublinear regret against the
Bayes-optimal benchmark induced by the data-generating model. We propose in-context \emph{Generative Posterior Sampling (GPS)} for the aforementioned class of operational problems. Offline, we learn a conditional generative model that can \emph{complete} the latent trajectory given the observed
history (including censoring indicators) by autoregressive sampling.
Online, at each time, GPS draws a completion and takes the oracle decision. In particular, for the repeated newsvendor, the oracle is the critical-fractile map, and the completion model generates demand trajectories consistent with right-censored sales data.

To make in-context GPS (ICGPS) practical at scale, we instantiate the completion model using a modular architecture, \emph{ChronosFlow}. ChronosFlow combines a pretrained probabilistic time-series backbone (Chronos-2) \citep{ansari2025chronos2}, which is a successor of Chronos \citep{ansari2024chronos}, with a lightweight
conditional flow head that supports fast conditional sampling and exact enforcement of censoring constraints. Crucially, ChronosFlow is trained \emph{offline} and then \emph{frozen} at deployment: online learning is performed entirely through in-context conditioning on the growing history, avoiding expensive online gradient updates while still producing posterior-sampling-like exploration.

\subsection{Contributions}

\emph{Methodology.} Our first contribution is a methodological one whereby we extend the ICGPS framework proposed in \cite{cai2024active, zhang2025contextual} to continuous action spaces and censored feedback. Continuous actions are handled in theory by proving an equivalence between function and outcome posterior sampling, and in practice using a generative architecture with a conditional flow head for sampling continuous distributions. Decision-dependent censoring makes the generative step nontrivial, since the completions are not arbitrary and must satisfy hard \emph{consistency constraints} induced by the censoring map. This necessitates using a censored negative log likelihood loss in the offline training, and constraint-aware sampling in the online phase. In particular, we derive the censoring-consistent autoregressive factorization and a tail-conditioning procedure that guarantees generated completions satisfy the censoring constraints, leading to the online Algorithm~\ref{alg:nv-gps}.

\emph{Theory.} We provide a regret decomposition that separates the regret of the deployed GPS policy (using a learned completion kernel) into the regret of the \emph{ideal} GPS policy (using the true kernel) plus a \emph{deployment penalty} controlled by an on-policy completion mismatch. We show that under mild conditions, the R-NV problem admits reduction to a feedback structure identical to that of posterior sampling for a convex bandit, and can therefore inherit $\widetilde O(\sqrt{T})$ Bayesian regret for the first term of the above decomposition. This provides the first Bayesian regret analysis of TS for the R-NV problem, using the information-ratio framework, and might be of independent interest. For the second term, we derive concrete sufficient conditions\footnote{Censoring creates fundamental \emph{identifiability} and \emph{coverage} issues: the learner may never see certain regions of the latent space under conservative policies, so offline predictive quality may fail to translate to online decision quality, without additional assumptions \citep{hssaine2024censored}.} under which censoring-aware offline training objectives control the completion mismatch.

\emph{Experiments.} We propose ChronosFlow, a practical instantiation of ICGPS that produces censoring-consistent completions via a conditional flow head and exact tail-conditioning. Experiment~\ref{subsec:exp1} is a correctly specified sanity check (Weibull), where ChronosFlow-ICGPS matches the conjugate TS baseline across service levels and censoring severities. Experiment~\ref{subsec:exp2} isolates the offline$\to$online link: holding the online GPS wrapper fixed, we vary (i) offline data scale and (ii) CNF head capacity, and observe a monotone relationship between censoring-aware validation fit ($\widehat{\Delta}_{\mathrm{obs}}$) and online regret, consistent with the deployment-penalty interpretation. Experiment~\ref{subsec:exp3} then stresses robustness under (a) out-of-family distribution shift and (b) severe prior mismatch, where ChronosFlow-ICGPS is substantially less brittle than parametric TS variants. Finally, Experiment~\ref{subsec:exp4} benchmarks real censored-demand datasets; the Meta-trained variant yields the largest gains in the most heavily censored regimes while remaining competitive when censoring is mild.

\begin{figure}[t]
    \centering
    \includegraphics[width=0.75\linewidth]{"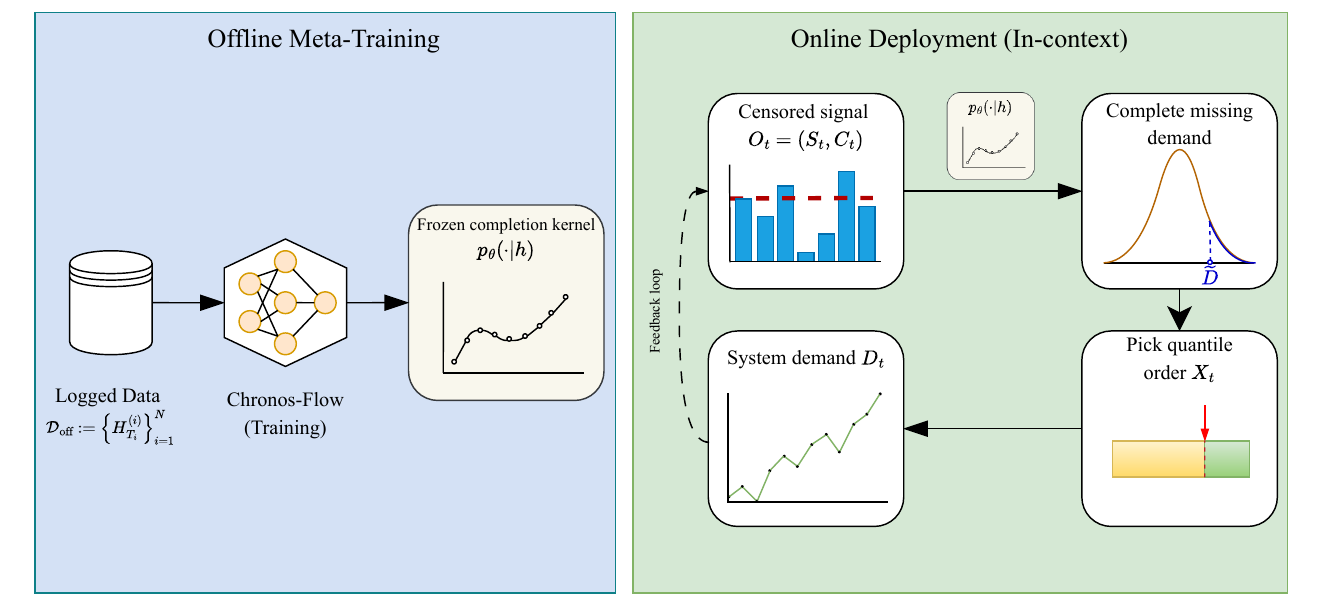"}
    \caption{Schematic diagram for our proposed in-context GPS applied to the repeated newsvendor problem.}
    \label{fig:concept}
\end{figure}

\subsection{Related Work}
\label{subsec:rw}

\emph{Posterior sampling and Bayesian regret.}
Thompson sampling (TS), the most common form of posterior sampling,  dates back to \citet{thompson1933likelihood} and has become a central algorithmic paradigm for exploration in Bayesian bandits.
A large body of theoretical literature establishes sublinear regret guarantees in increasingly rich stochastic models, beginning with
logarithmic regret for classical multi-armed bandits \citep{agrawal2012ts}, and extending to general Bayesian regret
bounds via information-theoretic and posterior-variance arguments \citep{russo2016info,russo2017tutorial}.
Our work shares the same Bayesian regret objective, but the uncertainty we address is induced by \emph{censored feedback}. We implement posterior sampling by \emph{generating missing outcomes} consistent with the censoring mechanism, and our regret bounds separate an ideal Thompson-sampling term from a learned-model \emph{deployment penalty}.

\emph{Bandit convex optimization and continuous-action bandits.}
Bandit convex optimization (BCO) studies online optimization of an unknown convex loss function using only bandit
evaluations.
Foundational work introduced gradient-estimation approaches for bandit convex optimization \citep{flaxman2005bandit_oco},
and continuum-armed bandits more broadly \citep{kleinberg2004continuum}.
In one dimension, the minimax regret improves to $\widetilde{\Theta}(\sqrt{T})$ under convexity and boundedness
\citep{bubeck2015bco1d}, and recent work proves that Thompson sampling itself attains $\widetilde O(\sqrt{T})$
Bayesian regret for BCO with scalar actions \citep{bakhtiari2025bco}. We leverage this result to obtain the Bayesian regret bound for TS on the R-NV problem, via a reduction of the censored feedback to the BCO feedback.

\emph{Posterior sampling via generation.}
A recent line of work reinterprets posterior sampling as \emph{sampling missing data} and then optimizing an oracle
decision rule on the completed dataset, enabling posterior sampling to be implemented by a generative model rather than an explicit posterior \citep{cai2024active,zhang2025contextual}.
This aligns with the broader \emph{prediction-centric} view of uncertainty quantification \citep{fortini2023prediction, Shen2024PredictionCentricUQ,shirvaikar2024general} and decision-making \citep{wen2021predictions}.
Our framework builds directly on the missing-data perspective, but extends it in two directions needed for operational
decision-making: (i) we connect completion sampling to continuous-action convex regret guarantees via BCO, and (ii) we
develop censoring-consistent sampling online, and censored-likelihood training objectives offline, which are tailored to the R-NV problem.

\emph{Learning the newsvendor under censored demand.}
Learning under stockouts has a long history in operations and revenue management.
Early distribution-free approaches for censored newsvendor-type problems include \citet{godfrey2001adaptive},
while \citet{huh2009nonparametric} provides a nonparametric asymptotic analysis under censored demand.
A prominent estimator-based approach uses Kaplan--Meier methods to correct censoring and yields adaptive inventory control
policies with performance guarantees \citep{kaplan1958nonparametric,huh2011adaptive}.
Recent work sharpens the exploration--exploitation perspective and quantifies when active exploration is essential
\citep{besbes2022tradeoff}, while data-driven formulations highlight identifiability barriers under severe censoring
\citep{hssaine2024censored}.
In contrast, we treat the censored newsvendor as an instantiation of a general missing-data posterior-sampling framework:
our algorithm explores by sampling \emph{feasible demand completions}.

\emph{Data-driven approaches for censored newsvendor.}
A growing literature develops offline procedures to correct censoring and to improve order decisions in the R-NV using robust or data-driven approaches.
Examples include data-driven DRO corrections for censored demand \citep{su2025bridging}, iterative maximum-likelihood
procedures tailored to censored observations \citep{clausen2025iterative}, and target-oriented data-driven policies
\citep{wang2025target}.
These works focus primarily on \emph{offline} decision quality and robustness, whereas our focus is \emph{sequential}
decision-making: we integrate censoring-consistent modeling and sampling into a posterior-sampling-style online policy and
provide regret guarantees that explicitly quantify the cost of using a learned completion model.

\emph{Meta-learning in bandits.}
Many operational settings exhibit repeated learning across related tasks (e.g., products, regions, seasons), motivating
transfer and meta-learning approaches.
In revenue management, \citet{bastani2022metapricing} studies meta dynamic pricing and shows how learning a shared prior
across experiments can reduce regret; more broadly, meta-learning with bandit feedback has been studied in adversarial and
stochastic formulations \citep{kveton2021meta, khodak2023metalearningbandits}.
Our empirical design similarly evaluates offline pretraining across heterogeneous demand distributions, but our algorithmic
mechanism is different: we transfer via an offline-trained \emph{completion model} that is used online purely through
in-context conditioning.

%% file: sec2-general_framework.tex
\section{Background}
\label{sec:framework}

This section introduces the repeated newsvendor (R-NV) problem along with other problems belonging to the same class of decision-making problems. This is followed by a background on generative posterior sampling (GPS) and in-context GPS (ICGPS) via autoregressive completion. The section ends with a generic analysis of ICGPS for the R-NV problem.

\subsection{Inventory control with censored demand}
\label{sec:nv_running_example}

We begin by describing the problem setup for inventory control with right-censored demand (repeated newsvendor), which will serve as a running example of an operational sequential decision-making problem with continuous actions and decision-dependent uncertainty. A general template for the aforementioned class of operational decision problems, that can be solved using our proposed method, is provided in Appendix~\ref{app:general_template}. In particular, Table~\ref{tab:app_examples} lists additional instantiations of the same class, including booking-limit (capacity controls) in revenue management \cite{TalluriVanRyzin2004,littlewood2005forecasting}, budget pacing under spend caps in digital advertising \cite{BalseiroEtAl2023RobustPacing,ConitzerEtAl2022PacingEq}, and posted-price mechanisms \cite{KleinbergLeighton2003,Myerson1981}.

\paragraph{Interaction and censoring.}
Fix a horizon $T$, and let the \emph{action} space be $\mathcal{X}:=[0,B]$ and the \emph{demand} space be $\mathcal{D}:=[0,B]$. At each period $t=1,\dots,T$, the decision-maker chooses an order quantity $X_t \in \mathcal X$. The environment then realizes a latent demand $D_t \in \mathcal D$, whose random trajectory is given by $D_{1:T} := (D_1,\dots,D_T)$. Under lost sales, the decision-maker observes only the right-censored feedback, which is, in this case, the sales and a stockout indicator
\[
S_t := \min\{D_t, X_t\},
\qquad
C_t := \ind{D_t \le X_t},
\qquad
O_t := (S_t, C_t) =: \psi(X_t, D_t).
\]
The history at time $t$ is \( H_t := \bigl( (X_s, O_s) \bigr)_{s=1}^t \in (X \times \mathcal{O})^t, H_0 := \emptyset, \)
and a (possibly randomized) policy $\pi$ maps $H_{t-1}$ to a distribution over $X$, from which $X_t$ is drawn. The per round loss is a known measurable function, given by the classical newsvendor cost $\ell(x,d)=h(x-d)_+ + b(d-x)_+$, where $h > 0$ is the unit overage cost, $b > 0$ is the unit underage cost, and we denote $(a)_+ := \max\{a, 0\}$.

\paragraph{Bayesian environment.}
The unknown environment is captured by a distribution $\P^\star$ over complete trajectories $D_{1:T}$.
All expectations below are with respect to $\P^\star$ and the policy randomness, unless stated otherwise. Let $x^\star \in \arg\min_{x\in\cX} f^\star(x)$ be an optimal action, and define the \emph{Bayes risk function} \( f^\star(x) := \E_{\P^\star}\bigl[\ell(x,D_t)\bigr]\). Let us denote the per-round regret as $r_t := f^\star(X_t) - f^\star(x^\star)$. The performance criterion is given by the \emph{cumulative} Bayesian regret, which is measured as
\[
\BReg_T(\pi;\P^\star) := \E_{\P^\star_\pi}\Bigl[\sum_{t=1}^T \bigl(f^\star(X_t)-f^\star(x^\star)\bigr)\Bigr].
\]

\paragraph{Oracle map}
We denote the critical fractile by $\gamma:=\frac{b}{b+h}\in(0,1)$. Let $d_{1:T}=(d_1,\dots,d_T)\in [0, B]^T$ be a completed demand trajectory. Define the empirical risk $\widehat f_{d_{1:T}}(x)
:= \frac1T \sum_{t=1}^T \ell(x,d_t)$. We define the oracle action $\hat x(\cdot)$ by empirical risk minimizer: $\hat x(d_{1:T}) \in \arg\min_{x\in[0,B]}\ \widehat f_{d_{1:T}}(x)$. Define the empirical CDF $\widehat F_T(x) := \frac1T \sum_{t=1}^T \ind{ d_t \le x}$, and its left limit $\widehat F_T(x^-):=\lim_{y\uparrow x}\widehat F_T(y)
=\frac1T\sum_{t=1}^T \ind{ d_t < x}$. In particular, for the repeated newsvendor problem, the oracle equals the \emph{left empirical $\gamma$-quantile} $\hat x(d_{1:T}) = \inf\{x: \widehat F_T(x)\ge \gamma\}$.

\subsection{Generative posterior sampling}
\label{subsec:missingdata}

Generative posterior sampling (GPS) is based on the missing data view of uncertainty put forward by the literature on predictive Bayes \citep{Shen2024PredictionCentricUQ, shirvaikar2024general,cai2024active,zhang2025contextual}. The key intuition is that \emph{uncertainty arises because we have not yet observed a complete dataset of outcomes.} If we had the complete dataset, we could fit an oracle decision rule and act without uncertainty. In our setting, the \emph{complete dataset} is the full trajectory $D_{1:T}$. 

A natural oracle action given $D_{1:T}$, for the purpose of illustration, is the empirical risk minimizer $\hat x(D_{1:T}) \in \arg\min_{x\in\cX}\; \hat f_{D_{1:T}}(x)$, where $\hat f_{D_{1:T}}(x) := \frac1T \sum_{t=1}^T \ell(x,D_t)$. At decision time $t$, the learner has only a partial history $H_{t-1}$. Under the Bayesian model $\P^\star$, this induces a posterior distribution over the \emph{complete} trajectory:
\(
\P^\star(\,\cdot\mid H_{t-1}).
\)
A generative posterior-sampling strategy (Algorithm \ref{alg:gps}) draws a plausible completion
$\widetilde D_{1:T}^{(t)} \sim \P^\star(\cdot\mid H_{t-1})$ and acts as if it were true by selecting
\begin{equation}
\label{eq:ideal-ps}
X_t = \hat x(\widetilde D_{1:T}^{(t)}).
\end{equation}

\begin{lemma}[Probability matching]\label{lem:probmatching}
Conditioned on $H_{t-1}$, the action $X_t$ in \eqref{eq:ideal-ps} is distributed as
the posterior distribution of the oracle action $\hat x(D_{1:T})$, i.e., we have:
\(
\Law(X_t\mid H_{t-1}) = \Law\bigl(\hat x(D_{1:T})\mid H_{t-1}\bigr).
\)
\end{lemma}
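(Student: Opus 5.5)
The argument is a pushforward of laws under the measurable oracle map, applied conditionally on the history. By construction of Algorithm~\ref{alg:gps}, at time $t$ the completion $\widetilde D^{(t)}_{1:T}$ is drawn, conditionally on $H_{t-1}$, from the kernel $\P^\star(\cdot\mid H_{t-1})$. This kernel is exactly the regular conditional distribution of the true trajectory $D_{1:T}$ given $H_{t-1}$ under the joint law $\P^\star_\pi$. So the first step is to record that $\Law(\widetilde D^{(t)}_{1:T}\mid H_{t-1}) = \Law(D_{1:T}\mid H_{t-1})$ as probability kernels on $[0,B]^T$. This identity holds because both sides are the same kernel.

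One point needs care here: the posterior of $D_{1:T}$ given $H_{t-1}$ must not depend on the policy beyond what is recorded in $H_{t-1}$. Actions $X_s$ are chosen from $H_{s-1}$ and independent internal randomness, and the observations are $O_s = \psi(X_s,D_s)$. The likelihood of $H_{t-1}$ given $D_{1:T}=d_{1:T}$ therefore factors as a policy term, which does not depend on $d_{1:T}$, times the indicators $\prod_s \ind{\psi(X_s,d_s)=O_s}$. By Bayes' rule the policy factor cancels, so the posterior is $\P^\star$ restricted to the censoring-consistent set and renormalised. This is what "$\P^\star(\cdot\mid H_{t-1})$" denotes.

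The second step is to fix a measurable selection of the oracle $\hat x:[0,B]^T\to[0,B]$. Here the left empirical $\gamma$-quantile $\inf\{x:\widehat F_T(x)\ge\gamma\}$ works. It is an order statistic, namely the $\lceil \gamma T\rceil$-th smallest $d_t$, and is hence continuous, in particular Borel. For any Borel set $A\subseteq[0,B]$,
\[
\P(X_t\in A\mid H_{t-1}) = \P\bigl(\widetilde D^{(t)}_{1:T}\in \hat x^{-1}(A)\mid H_{t-1}\bigr) = \P^\star\bigl(D_{1:T}\in\hat x^{-1}(A)\mid H_{t-1}\bigr) = \P\bigl(\hat x(D_{1:T})\in A\mid H_{t-1}\bigr),
\]
which is the claim.

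The main obstacle is the first step. We need to make rigorous that the sampler's kernel coincides with the conditional law of the true trajectory under the interaction measure, given that the history itself was generated by the policy. The factorisation argument above handles this. If $\hat x$ involves tie-breaking, one must use the same fixed measurable selection on both sides, which the quantile formula supplies.
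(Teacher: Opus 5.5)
Your proposal is correct and is essentially the paper's proof. The paper likewise takes $\widetilde D^{(t)}_{1:T}\mid H_{t-1}\sim\P^\star(\cdot\mid H_{t-1})$ as given by construction and concludes via the pushforward identity $\P(\hat x(\widetilde D^{(t)}_{1:T})\in A\mid H_{t-1})=\P(\hat x(D_{1:T})\in A\mid H_{t-1})$. Your additional remarks fill in details the paper leaves implicit: the left $\gamma$-quantile is the $\lceil\gamma T\rceil$-th order statistic and hence Borel, and the policy factor cancels from the posterior.

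One small imprecision: the phrase ``restricted to the censoring-consistent set and renormalised'' is not literally meaningful when uncensored rounds force $D_s=s_s$, which is a null event under a density. There the posterior should be read as a regular conditional distribution (disintegration), but this does not affect your argument.
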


\begin{proof}
Since $\widetilde D_{1:T}^{(t)}\mid H_{t-1} \sim \P^\star(\cdot\mid H_{t-1})$ by construction, and
$X_t$ is a measurable function of $\widetilde D_{1:T}^{(t)}$, the claim follows by the
pushforward measure identity: for any measurable $A\subseteq\cX$,
\(
\P(X_t\in A\mid H_{t-1})
=
\P\bigl(\hat x(\widetilde D_{1:T}^{(t)})\in A \mid H_{t-1}\bigr)
=
\P\bigl(\hat x(D_{1:T})\in A \mid H_{t-1}\bigr).
\)
\end{proof}

\subsection{In-context GPS via autoregressive completion}
\label{subsec:gen-ps}

In most operational problems, the true environment law $\P^\star$ is unknown, hence we cannot sample
$\P^\star(\cdot\mid H_{t-1})$ exactly. Following \citet{cai2024active,zhang2025contextual}, we instead
learn a \emph{generative simulator} offline and deploy it online through \emph{in-context conditioning}:
\begin{itemize}
    \item \textit{A learned simulator of complete outcomes.}
    Fix a model class $\{p_\theta:\theta\in\Theta\}$ of distributions over complete trajectories $D_{1:T}$,
    together with conditionals $p_\theta(\cdot\mid H_{t-1})$ that can be sampled. A standard implementation is an autoregressive factorisation $p_\theta(D_{1:T}) = \prod_{t=1}^T p_\theta(D_t\mid D_{1:t-1})$, trained by next-outcome prediction (with modifications when only coarsened feedback is available;
    see Section~\ref{sec:newsvendor}).
    \item \textit{Generative posterior sampling (GPS).}
    Online, at each time $t$, GPS samples a plausible completion from the learned conditional model and
    then applies the oracle map $\hat x(\cdot)$: $\widetilde D_{1:T}^{(t)} \sim p_\theta(\cdot\mid H_{t-1}), X_t \gets \hat x\!\bigl(\widetilde D_{1:T}^{(t)}\bigr).$ Crucially, \emph{no online parameter updates are required}; all adaptation occurs by conditioning on
    $H_{t-1}$ (in-context learning).
\end{itemize}

\begin{algorithm}[t]
\caption{In-context Generative Posterior Sampling (ICGPS)}
\label{alg:gps}
\small
\begin{algorithmic}[1]
\Require action set $\cX$, oracle map $\hat x(\cdot)$, learned simulator $p_\theta$
\State $H_0 \gets \emptyset$
\For{$t \gets 1,2,\ldots,T$}
  \State Sample completion $\widetilde D_{1:T}^{(t)} \sim p_\theta(\cdot\mid H_{t-1})$
  \Comment{completed trajectory conditional on history}
  \State Choose $X_t \gets \hat x(\widetilde D_{1:T}^{(t)})$
  \Comment{oracle action on the completion trajectory}
  \State Observe $O_t=\psi(X_t,D_t)$ and set $H_t \gets (H_{t-1},(X_t,O_t))$
  \Comment{update history}
\EndFor
\end{algorithmic}
\end{algorithm}

\subsection{Deployment penalty via completion kernel mismatch}\label{subsec:reduction}

In this Section, we quantify how much extra Bayesian regret is incurred when in-context GPS is deployed with a learned completion model instead of the ideal completion kernel. We denote by \(H_{t-1}\) the
history up to time \(t-1\). For any policy $\pi$, let $\P^\star_\pi$ denote the induced law of the interaction history
$H_T=((X_t,O_t))_{t=1}^T$ when $\pi$ is run in the true environment $\P^\star$. We compare the \emph{ideal} policy $\pi_\star := \pi_{\P^\star}$, which uses the \emph{true} completion kernel $\P^\star(\cdot\mid H_{t-1})$, and the \emph{deployed} policy $\pi_\theta := \pi_{p_\theta}$, which uses a learned kernel $p_\theta(\cdot\mid H_{t-1})$. 
At time \(t\), the \emph{true} completion kernel is the conditional law
\(
Q_t^\star(\,\cdot\,|h) := \Law_{\P^\star}\!\bigl(D_{1:T}\in\cdot \,\big|\, H_{t-1}=h\bigr),
\)
and a learned completion model with parameter \(\theta\) induces a kernel on the same space given by \(Q_{t,\theta}(\,\cdot \mid h) := \mathrm{Law}_\theta\bigl(\widetilde D_{1:T}^{(t)} \mid H_{t-1}=h\bigr),\).

\begin{definition}[On-policy completion mismatch]\label{def:completion-mismatch}
The cumulative \emph{completion mismatch} of \(Q_{t,\theta}\) relative to \(Q_t^\star\), measured along the
deployment trajectory of \(\pi_\theta\), is given by the following expression
\[
\Delta_T^{\mathrm{comp}}(\theta;\pi_\theta)
\ :=\
\sum_{t=1}^T
\mathbb{E}_{H_{t-1}\sim \P^\star_{\pi_\theta}}
\Bigl[
\KL\!\bigl(Q_t^\star(\cdot|H_{t-1})\,\|\,Q_{t,\theta}(\cdot|H_{t-1})\bigr)
\Bigr].
\]
\end{definition}

\begin{theorem}[Deployment penalty bound under $\P^\star$]
\label{thm:sec25_deploy}
Let $\pi_\star=\pi_{\P^\star}$ be the ideal GPS policy that uses the true completion kernel, and
let $\pi_\theta=\pi_{p_\theta}$ be the deployed GPS policy using the learned kernel. Assume there exists \(V_{\mathrm{comp}}\in[1,\infty)\) such that for all $t\in[T]$ and all histories \(h\in\mathcal{H}_{t-1}\),
\(Q_{t,\theta}(\cdot\mid h)\ll Q^\star_t(\cdot\mid h)\) and
\(
\sup_{F\in\mathcal{F}_{\mathcal{U}}}
\frac{Q_{t,\theta}(F\mid h)}{Q^\star_t(F\mid h)}
\;\le\;
V_{\mathrm{comp}}
\).
Moreover, assume that the per-round regret $r_t$ satisfies $\E_{\P^\star}[r_t \mid H_{t-1}] = \E_{\P^\star}[r_t]$ $\P^\star$-a.s..
Then
\begin{equation}
\label{eq:sec25_deploy}
\BReg_T(\pi_\theta;\P^\star)
\le
\underbrace{\BReg_T(\pi_\star;\P^\star)}_{\text{Term I: TS Bayes Regret}}
+
\underbrace{B^\prime\sqrt{\frac{T\left(2+\log V_{\mathrm{comp}}\right)}{2}\,\Delta^{\mathrm{comp}}_{T}(\theta;\pi_\theta)}}_{\text{Term II: Deployment Penalty}}
\,,
\end{equation}
where $B^\prime$ denotes the range length of the per round regret $r_t$, i.e, we have $\sup r_t - \inf r_t \le B'$ a.s.
\end{theorem}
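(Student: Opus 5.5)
The plan is to compare the two policies round by round through their action kernels, bound each per-round gap by a total-variation (or $\chi$-type) distance between the completion kernels, and then aggregate the rounds with Cauchy--Schwarz and Pinsker. First I decompose
\[
\BReg_T(\pi_\theta;\P^\star)-\BReg_T(\pi_\star;\P^\star)=\sum_{t=1}^T\Bigl(\E_{\P^\star_{\pi_\theta}}[r_t]-\E_{\P^\star_{\pi_\star}}[r_t]\Bigr).
\]
The assumption $\E_{\P^\star}[r_t\mid H_{t-1}]=\E_{\P^\star}[r_t]$ is what lets me avoid a change-of-measure over the whole history. Under either policy, the conditional expected regret at round $t$ depends on the history only through the action kernel at $H_{t-1}=h$. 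It therefore equals $\E_{X\sim \hat x_\# Q(\cdot\mid h)}[f^\star(X)-f^\star(x^\star)]$ with $Q=Q_{t,\theta}$ or $Q_t^\star$. The assumption then says the ideal policy's conditional regret is the constant $\bar r_t:=\E_{\P^\star_{\pi_\star}}[r_t]$, and this holds in particular on histories generated by $\pi_\theta$. So the $t$-th difference equals $\E_{H_{t-1}\sim\P^\star_{\pi_\theta}}[g_t(H_{t-1})]$, where
\[
g_t(h):=\int r\,d(\hat x_\# Q_{t,\theta}(\cdot\mid h))-\int r\,d(\hat x_\# Q_t^\star(\cdot\mid h)).
\]

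Second, I bound $g_t(h)$. Since $r_t$ has range at most $B'$, the difference of expectations is at most $B'\,\mathrm{TV}(Q_{t,\theta}(\cdot\mid h),Q_t^\star(\cdot\mid h))$, using data processing for the pushforward through $\hat x$. However, the KL in Definition~\ref{def:completion-mismatch} goes in the direction $\KL(Q^\star\|Q_\theta)$, while the ratio bound $V_{\mathrm{comp}}$ controls $dQ_\theta/dQ^\star$. Pinsker applies in either direction, so directly I get $g_t\le B'\sqrt{\tfrac12\KL(Q^\star_t\|Q_{t,\theta})}$. The factor $(2+\log V_{\mathrm{comp}})$ suggests the intended route is different. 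One would first bound the reverse divergence $\KL(Q_\theta\|Q^\star)$ or $\chi^2$, by the forward KL times $(2+\log V)$, using a standard inequality comparing the two KL directions under a bounded density ratio. Pinsker in that direction would then finish the bound. I would include that route to match the stated constant. The supremum over events $F\in\mathcal F_{\mathcal U}$ is exactly the density-ratio bound $dQ_\theta/dQ^\star\le V_{\mathrm{comp}}$ after taking the supremum over sets.

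Third, I sum over rounds and apply Jensen and then Cauchy--Schwarz:
\[
\sum_t \E\bigl[B'\sqrt{\tfrac{2+\log V}{2}\KL_t}\bigr]\le B'\sqrt{T\,\tfrac{2+\log V}{2}\sum_t\E[\KL_t]}=B'\sqrt{\tfrac{T(2+\log V)}{2}\Delta_T^{\mathrm{comp}}}.
\]
The main obstacle is the first step. I have to justify carefully that the independence assumption lets me evaluate the ideal policy's regret on the deployed policy's history distribution, so that no additional KL between the two history laws $\P^\star_{\pi_\theta}$ and $\P^\star_{\pi_\star}$ appears. The second, technical point is the reverse-KL comparison lemma with constant $2+\log V$. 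I would prove it by splitting the integrand $\rho\log\rho$ according to whether $\rho\le 1$ or $1<\rho\le V$, using $\rho\log\rho\le(\rho-1)+(\rho-1)^2$-type bounds on the bounded range. If that fails, Pinsker applied directly to the forward KL already gives the bound, since $2+\log V\ge 1$.
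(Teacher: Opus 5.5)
Your argument is correct, and it takes a different, in fact sharper, route than the paper.

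\textbf{The paper's route.} It works with whole trajectories. For $P=\mathbb{P}^\star_{\pi_\theta}$ and $Q=\mathbb{P}^\star_{\pi_\star}$ it applies Donsker--Varadhan to $\lambda R_T$. It uses the drift assumption only to make $R_T-\mathbb{E}_Q[R_T]$ a sum of bounded martingale differences under $Q$, which gives an Azuma--Hoeffding MGF bound. Optimizing $\lambda$ yields $B'\sqrt{\tfrac{T}{2}\mathrm{KL}(P\|Q)}$, and $\mathrm{KL}(P\|Q)$ is identified with the causal action mismatch $\sum_t\mathbb{E}_P[\mathrm{KL}(K_{t,\theta}\|K_t^\star)]$. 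Because this is the reverse direction, the paper then needs data processing plus the Hellinger sandwich $\mathrm{KL}(Q_{t,\theta}\|Q_t^\star)\le(2+\log V_{\mathrm{comp}})D_H^2\le(2+\log V_{\mathrm{comp}})\mathrm{KL}(Q_t^\star\|Q_{t,\theta})$ to reach $\Delta_T^{\mathrm{comp}}$.

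\textbf{Your route.} You use the drift assumption to move the benchmark's constant conditional regret onto the deployed history law. You then bound each round by a total-variation distance. Since TV is symmetric, Pinsker applies directly in the forward direction. Jensen plus Cauchy--Schwarz then give $B'\sqrt{\tfrac{T}{2}\Delta_T^{\mathrm{comp}}(\theta;\pi_\theta)}$, which implies the stated bound since $2+\log V_{\mathrm{comp}}\ge 2$.

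\textbf{Your flagged obstacle.} The step you call the main obstacle is settled by the qualitative half of the overlap hypothesis. $Q_{s,\theta}(\cdot\mid h)\ll Q_s^\star(\cdot\mid h)$ for all $s,h$ gives $K_{s,\theta}\ll K_s^\star$ by pushforward. Hence the law of $H_{t-1}$ under $\pi_\theta$ is absolutely continuous with respect to its law under $\pi_\star$. So the $Q$-a.s.\ identity $\mathbb{E}_Q[r_t\mid H_{t-1}]=\mathbb{E}_Q[r_t]$ also holds $P$-a.s. The paper's Donsker--Varadhan step needs the same $P\ll Q$.

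\textbf{What each route buys.} In your argument the quantitative constant $V_{\mathrm{comp}}$ is never used. This shows the $\log V_{\mathrm{comp}}$ factor in the theorem is an artifact of the KL direction forced by Donsker--Varadhan. In exchange, the paper's template gives a single trajectory-level divergence that controls any adapted bounded-increment functional with deterministic drift.

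\textbf{Drop the reverse-KL detour.} It is unnecessary. Moreover, the $\chi^2$ variant you sketch (via $\rho\log\rho\le(\rho-1)+(\rho-1)^2$) cannot give the constant $2+\log V_{\mathrm{comp}}$. Comparing $\chi^2(Q_{t,\theta}\|Q_t^\star)$ with $\mathrm{KL}(Q_t^\star\|Q_{t,\theta})$ costs a factor of order $V_{\mathrm{comp}}$ in general, as two-point laws with masses $\epsilon$ and $V_{\mathrm{comp}}\epsilon$ show. The logarithmic constant only comes out through Hellinger.
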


\paragraph{Proof Sketch.}
We provide a proof sketch here, with the main technical components, and refer the reader to
Appendix~\ref{apx:first_proof} for the full proof. Let \(P\coloneqq \P^\star_{\pi_\theta}\) and \(Q\coloneqq \P^\star_{\pi_\star}\). We denote the per-round regret as $r_t := f^\star(X_t) - \min_{x \in \mathcal X}f^\star(x)$, and note that $r_t$ has range length $B^\prime$. The cumulative regret is denoted as $R_T := \sum_{t=1}^T r_t$, and the cumulative Bayesian regret for a policy $\pi$ under environment $\P^\star$ is denoted by $\mathrm{BayesReg}_T(\pi;\P^\star) := \mathbb E_{P_{\pi_\star}}\!\big[R_T\big]$. The proof proceeds by first deriving a bound on \(\E_{P}[R_T]-\E_{Q}[R_T].\) First, we perform a change of measure by applying the Donsker--Varadhan variational formula (Lemma~\ref{lem:DV}) with $f=\lambda R_T$. Second, we apply a Hoeffding-type MGF bound for bounded adapted sums under $Q$ (Lemma \ref{lem:mgf-adapted}), and third, optimize over $\lambda$ to get \(E_P[R_T] - E_Q[R_T] \le B'\sqrt{\frac{T}{2}\,\KL(P\|Q)}\). Now, we note that Lemma~\ref{lem:B7-causal-chain-rule} relates the trajectory divergence to the causal action mismatch as $\KL(\P^\star_{\pi_\theta}\|\P^\star_{\pi_\star})=\Delta^{\mathrm{act}}_T(\theta)$. Moreover, Lemmas~\ref{lemma:B12-act-by-rev-comp}--\ref{cor:B13-rev-by-fwd-comp} bound the causal action mismatch by the reverse
completion mismatch, and then relate the reverse completion mismatch to the forward completion mismatch:
\(
\Delta^{\mathrm{act}}_T(\theta)
\le
\Delta^{\mathrm{comp}}_{T,\mathrm{rev}}(\theta;\pi_\theta)
\le
\bigl(2+\log V_{\mathrm{comp}}\bigr)\,\Delta^{\mathrm{comp}}_T(\theta;\pi_\theta)
\). This is done via the symmetry of the squared Hellinger distance, under Assumption \ref{assump:B-comp-overlap}. Combining this with the third step and using $E_P[R_T]=\BReg_T(\pi_\theta;\P^\star)$,
$E_Q[R_T]=\BReg_T(\pi_\star;\P^\star)$ yields the claimed result.

\begin{remark}[Interpretation of Theorem \ref{thm:sec25_deploy}]

This result shows that the additional price of using a learned completion model instead of the \textit{ideal} posterior-sampling benchmark $\pi_\star$, under the true environment, is a deployment penalty. This deployment penalty scales as \(\sqrt{T}\) times the square root of the on-policy completion mismatch. Importantly, note that in our theory, \emph{both} the deployed and the benchmark regret are defined under the \emph{same} true environment $\P^\star$. We depart from prior work \citep{zhang2025contextual} in this regard, whose benchmark is evaluated on the deployed history, i.e., counterfactual Bayesian regret. The rationale behind choosing this presentation is to have the benchmark regret equivalent to the TS on-policy regret. This provides a template to directly import any existing Bayesian regret results, thereby making our theory useful for application to a wide range of operational problems with known theoretical results. The overlap constant \(V_{\mathrm{comp}}\) makes explicit that severe support mismatch between \(Q_{t,\theta}\) and \(Q_t^\star\) can amplify this penalty. 
\end{remark}

%% file: sec3-repeated_nv.tex
\section{Methodology}
\label{sec:newsvendor}

In this section, we instantiate the generic ICGPS algorithm presented in Section \ref{sec:framework} to the particular case of the R-NV, by tailoring Algorithm \ref{alg:gps} to the particular requirements posed by the R-NV problem. We begin by listing two problem-specific obstacles that our method and analysis aim to address:

\emph{Obstacle 1: The per-round loss is continuous and not observable.} The per-round R-NV loss is convex and continuous, but not observable, and in general also not bounded. While the latter issue can be overcome by assuming a known bound $B>0$ such that $D_t\in[0,B]$ almost surely, the observability issue needs more careful analysis. We show that the R-NV admits a reduction to the feedback structure of bandit convex optimization (BCO) problem, and provide Bayesian regret guarantees using results from the BCO literature \cite{bakhtiari2025bco}.

\emph{Obstacle 2: Right-censoring creates challenges for training and sampling}. One must note that in the R-NV problem, the missing part of the data is not just unobserved future outcomes, but also the latent censored demand. This impacts \emph{both} the offline training objective and the online sampling routine:
\begin{itemize}
\item \textbf{Offline:} the appropriate likelihood is the likelihood of censored observations, not of raw demands.
This aligns with survival analysis: uncensored observations contribute a density term, whereas censored
observations contribute a survival term \citep{kaplan1958nonparametric}.
\item \textbf{Online:} autoregressive generation must respect the censoring constraints imposed by the observed
history; i.e., when $C_t=0$ the completion must satisfy $\widetilde D_t>X_t$.
\end{itemize}

Recent work on the censored newsvendor \citep{hssaine2024censored} shows that the aforementioned issues are significant, and censoring can create fundamental information loss and identifiability barriers unless sufficient coverage (large enough historical
orders) is present. To fix notation for later subsections, we record the censored likelihood identity. Lemma~\ref{lem:censored-likelihood} is the basic building block for the offline censoring-aware
negative log-likelihood used to train the completion model in Section~\ref{subsec:nv-offline},
and for the constraint-aware sampling procedure used online in Section~\ref{subsec:nv-online}.

\begin{lemma}[One-step censored likelihood]
\label{lem:censored-likelihood}
Fix a history $h$ and action $x\in\cX$.
Let $D$ be a demand random variable with conditional distribution (given $H_{t-1}=h$) admitting CDF $F(\cdot\mid h)$.
Assume $F(\cdot\mid h)$ is absolutely continuous with density $f(\cdot\mid h)$ on $(0,B)$.
Define the right-censored observation
\(
O=(S,C)=(\min\{D,x\},\mathbf{1}\{D\le x\}).
\)
Then the conditional density/mass of $O$ given $(h,x)$ is
\[
p_O(s,c\mid h,x)
=
\begin{cases}
f(s\mid h), & c=1,\; s<x,\\[2pt]
1-F(x\mid h), & c=0,\; s=x,\\[2pt]
0, & \text{otherwise.}
\end{cases}
\]
\end{lemma}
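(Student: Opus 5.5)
The plan is to compute the law of $O=(S,C)$ directly, with respect to the dominating measure on $\mathcal{O}=[0,B]\times\{0,1\}$ given by Lebesgue measure on $[0,x)\times\{1\}$ plus a point mass at $(x,0)$. Once this reference measure is fixed, the claimed "density/mass" is exactly the Radon--Nikodym derivative of $\Law(O\mid H_{t-1}=h, X_t=x)$ with respect to it. Throughout we condition on $H_{t-1}=h$ and on the action $x$. We treat $x$ as fixed (or as $H_{t-1}$-measurable plus independent randomization), so that the conditional law of $D$ is still $F(\cdot\mid h)$. This uses the implicit assumption that the order is placed before demand realizes and does not influence it.

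\textbf{Step 1 (partition by the censoring indicator).} The events $\{C=1\}=\{D\le x\}$ and $\{C=0\}=\{D>x\}$ partition the sample space. On $\{C=0\}$ we have $S=x$ deterministically, so $O=(x,0)$. Hence $\P(O=(x,0)\mid h,x)=\P(D>x\mid h)=1-F(x\mid h)$, which is the atom in the second case.

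\textbf{Step 2 (the uncensored part).} On $\{C=1\}$ we have $S=D\in[0,x]$. For any Borel $A\subseteq[0,x)$,
\[
\P(S\in A,\,C=1\mid h,x)=\P(D\in A\mid h)=\int_A f(s\mid h)\,ds .
\]
This holds by absolute continuity of $F(\cdot\mid h)$, and it identifies the density $f(s\mid h)$ for $c=1$, $s<x$. The boundary event $\{D=x\}$, which has $C=1$ and $S=x$, has probability zero under absolute continuity. This is why the case $c=1$, $s=x$ can be assigned density $0$, together with all other $(s,c)$ combinations (e.g.\ $c=0$, $s\neq x$, or $s>x$), each of which is a null event.

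\textbf{Step 3 (consistency check).} Summing, $\int_0^x f(s\mid h)\,ds+(1-F(x\mid h))=F(x\mid h)+1-F(x\mid h)=1$. So the stated $p_O$ is a genuine probability density with respect to the mixed reference measure, and it agrees with $\Law(O\mid h,x)$ on a generating $\pi$-system: sets of the form $A\times\{1\}$ together with $\{(x,0)\}$. Uniqueness of measures then finishes the proof.

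\textbf{Main obstacle.} The mathematics is elementary; the only delicate point is interpretive. We must state the mixed dominating measure explicitly and justify the treatment of the null boundary $D=x$, including possible atoms of $F$ at the endpoints $0$ or $B$ when "absolutely continuous on $(0,B)$" is read loosely. Beyond that, we must justify that conditioning on the action does not alter the demand law given $h$, which follows because $X_t$ is drawn from $\pi(\cdot\mid H_{t-1})$ using randomness independent of $D_t$.
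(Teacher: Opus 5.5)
Your proposal is correct and takes the same route as the paper. Both arguments partition on $C$, read off the atom $\P(D>x\mid h)=1-F(x\mid h)$ at $(x,0)$, and identify the density $f(s\mid h)$ on $\{C=1,\,s<x\}$. Your explicit mixed dominating measure and your handling of the null event $\{D=x\}$ (which the paper loosely calls ``impossible'') simply make the paper's three-line argument fully rigorous.
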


\begin{proof}
On the event $\{C=1\}$ we have $D\le x$ and $S=D$, so for $s<x$,
\(
\P(S\in ds,C=1\mid h,x)=\P(D\in ds\mid h)=f(s\mid h)\,ds.
\)
On the event $\{C=0\}$ we have $D>x$ and $S=x$, so
\(
\P(S=x,C=0\mid h,x)=\P(D>x\mid h)=1-F(x\mid h).
\)
All other $(s,c)$ pairs are impossible by definition of $(S,C)$.
\end{proof}

\subsection{In-context learning for R-NV}

In-context learning in modern generative models, including transformer-based architectures, involves two main steps: the first being offline meta-training, and the second being online in-context adaptation. Central to both these steps is the idea of tasks: episodes with their own demand law. The meta-training over tasks and in-context conditioning within a task is precisely the operational viewpoint in the missing-data Thompson sampling literature \citep{cai2024active, zhang2025contextual}.

\paragraph{Task.}
Let us denote by $\P$ the law of a latent task variable, with prior distribution $\xi$.
Conditional on $\P=\P^\star$, the demand sequence $(D_t)_{t=1}^T$ is generated from a (possibly unknown)
task-specific law $D_1,\dots,D_T \ \overset{\mathrm{i.i.d.}}{\sim} \P^\star$, where $\P^\star \sim \xi$. The conditional i.i.d.\ assumption is not necessary and is used only to obtain explicit likelihood factorizations later. Similar to before, the latent outcome at time $t$ is $D_t$, and the complete trajectory is $D_{1:T}$. The learner observes neither $\P^\star$ nor $D_t$, but only $O_t=\psi(X_t,D_t)$.

\paragraph{Bayes risk within a task and Bayes regret.}
For a realized task $\P=\P^\star$, define the (task-conditional) Bayes risk $f_{\P^\star} := \E_{\P^\star}\!\left[\ell(x,D_t)\right]$, where the RHS does not depend on $t$ in the conditionally i.i.d.\ (or stationary) case.
Let $x_{\P^\star}^\star\in\arg\min_{x\in\cX} f_{\P^\star}(x)$. Omitting the subscripts for brevity, the Bayesian regret is the expectation over the prior on $\P^\star$ and all randomness of the interaction:
\(
\BReg_T(\pi;\P^\star)
:= \E\!\left[\sum_{t=1}^T \bigl(f(X_t)-f(x^\star)\bigr)\right]
\), similar to the definition in Section \ref{sec:nv_running_example}.

\paragraph{Offline meta-training and Online in-context adaptation.}
Offline, the learner is given a dataset consisting of $N$ historical tasks (episodes)
\(\cD_{\mathrm{off}} := \bigl\{ H^{(i)}_{T_i} \bigr\}_{i=1}^N, H^{(i)}_{T_i} = \bigl((X^{(i)}_t, O^{(i)}_t)\bigr)_{t=1}^{T_i},\)
where each episode is generated by first sampling $\P^{\star}_i\sim \xi$ and then generating demands
$D^{(i)}_{1:T_i}\sim \P^{\star}_i$ and interacting under some (possibly unknown) \emph{behavior policy}
that selects $X^{(i)}_t$ and reveals only $O^{(i)}_t=\psi(X^{(i)}_t,D^{(i)}_t)$.
We do \emph{not} assume the behavior policy is optimal or even stationary; it is part of the offline data
collection mechanism. This offline dataset is used to \emph{meta-learn} a completion model (Section~\ref{subsec:nv-offline}). Online, the learner is evaluated on a fresh task $\P^\star \sim \xi$ and adapts by conditioning on the
within-task history $H_{t-1}$ (no online weight updates). This is exactly the \emph{in-context} usage
of the learned completion model described abstractly in Section~\ref{subsec:gen-ps}.

\subsection{Offline meta-learning: training completion model under censored demand}
\label{subsec:nv-offline}

This subsection specifies the \emph{offline} learning problem used to fit the completion model
$p_\theta(\cdot\mid h)$ required by generative posterior sampling. The key issue is that the offline data are \emph{right-censored}:
we observe $(X_t,O_t)$ with $O_t=(S_t,C_t)$ but generally do not observe the latent demand $D_t$
on stockout rounds (refer Obstacle 2 above). The goal of this section is to provide a \emph{trainable} conditional model of demand given a within-task history, which can then be used online to sample completions.

\paragraph{In-context training via random prefixes.}
Recall the task variable $\P^\star \sim \xi$. Offline we observe $N$ independent \emph{episodes} (tasks) $\cD_{\mathrm{off}} := \bigl\{ (H^{(i)}_{T_i}) \bigr\}_{i=1}^N$, as defined before. To reflect how the completion model will be used online (conditioning on a within-task prefix $H_{t-1}$),
we view each episode as supplying many training pairs
\[
\bigl(\text{prompt } H^{(i)}_{t-1},\ \text{next observation } O^{(i)}_t\bigr),
\qquad t=1,\dots,T_i.
\]

\paragraph{Model class: conditional law of  demand}
We posit a parametric family $\{p_\theta:\theta\in\Theta\}$ of conditional laws for $D_t$ given history.
Formally, for each $\theta\in\Theta$ and each history value $h_{t-1}$, let \(p_\theta(\cdot\mid h_{t-1})\) be a probability measure on $[0,B]$, with associated conditional CDF $F_\theta(x\mid h_{t-1}) := p_\theta(D_t\le x\mid h_{t-1})$ and (when it exists) density $f_\theta(x\mid h_{t-1}) := \frac{\partial}{\partial x} F_\theta(x\mid h_{t-1})$.
We allow $p_\theta(\cdot\mid h)$ to depend on the full observed history $h$ (including past actions),
which is natural for sequence models; this dependence is crucial for in-context adaptation.

\begin{remark}[Autoregressive sequence modeling viewpoint]
One can implement $p_\theta(\cdot\mid h_{t-1})$ by an autoregressive sequence model trained by next-token
prediction (e.g., a Transformer), where the input tokens encode the prompt $h_{t-1}$ and the output head
parameterizes a distribution over $D_t$.
For the theory below, we only need that the model outputs a valid CDF $F_\theta(\cdot\mid h)$ (and possibly a
density $f_\theta(\cdot\mid h)$).
\end{remark}

\subsubsection{Censoring-aware observed-data likelihood}

The offline dataset does not reveal $D_t$ when $C_t=0$ (stockout). Thus, we cannot train $p_\theta$
by direct demand log-likelihood. Instead, we maximize the likelihood of the \emph{censored observation}
$O_t=(S_t,C_t)$ induced by $p_\theta$. Fix a history $h_{t-1}$ and action $x\in[0,B]$.
Under a conditional law $p_\theta(\cdot\mid h_{t-1})$ for $D_t$, define the induced conditional law for
$O_t=(S_t,C_t)$ by the pushforward through the censoring map
\(
\psi(x,D)=(\min\{D,x\},\ind{D\le x}).
\)
When $p_\theta(\cdot\mid h_{t-1})$ has density $f_\theta(\cdot\mid h_{t-1})$, Lemma~\ref{lem:censored-likelihood}
specializes to:
\begin{equation}
\label{eq:nv-induced-obs-lik}
p_{\theta,\mathrm{obs}}(s,c\mid h_{t-1},x)
=
\begin{cases}
f_\theta(s\mid h_{t-1}), & c=1,\; s<x,\\[2pt]
1-F_\theta(x\mid h_{t-1}), & c=0,\; s=x,\\[2pt]
0, & \text{otherwise.}
\end{cases}
\end{equation}
This is the standard right-censoring likelihood: uncensored observations contribute a density term and
censored observations contribute a survival term \citep{kaplan1958nonparametric,kalbfleisch2002,klein1997}. The oracle requires completed demands, but censored observations only reveal whether $D_t$ lies above or below $X_t$ and, if below, its exact value. Training on the induced observation likelihood \eqref{eq:nv-induced-obs-lik} is therefore the
\emph{maximum-likelihood} way to learn the conditional demand law from censored data. Define the one-step predictive log-loss at time $t$ as
\begin{equation}
\label{eq:nv-one-step-nll}
\ell^{\mathrm{obs}}_\theta\bigl(O_t\mid H_{t-1},X_t\bigr)
:=
-\log p_{\theta,\mathrm{obs}}\bigl(O_t\mid H_{t-1},X_t\bigr),
\end{equation}
with $p_{\theta,\mathrm{obs}}$ given by \eqref{eq:nv-induced-obs-lik}. Writing $O_t=(S_t,C_t)$, we obtain the censoring-aware negative log-likelihood
\[
\ell^{\mathrm{obs}}_\theta(O_t\mid H_{t-1},X_t)
=
\begin{cases}
-\log f_\theta(S_t\mid H_{t-1}), & C_t=1\ \ (\text{uncensored}),\\[2pt]
-\log\bigl(1-F_\theta(X_t\mid H_{t-1})\bigr), & C_t=0\ \ (\text{censored}).
\end{cases}
\]

\paragraph{Population and empirical objective}

We now formalize what the offline objective estimates at the population level.
Fix any (possibly stochastic, history-dependent) behavior policy used to collect offline data.
Let $\P^\star_{\pi}$ denote the induced law of offline episodes under $\P^\star$ and that behavior policy.
Define the population censored predictive risk
\begin{equation}
\label{eq:nv-pop-risk}
\cL^{\mathrm{obs}}(\theta)
:=
\E_{\P^\star_{\pi}}\!\left[
\sum_{t=1}^{T} \ell^{\mathrm{obs}}_\theta\bigl(O_t\mid H_{t-1},X_t\bigr)
\right],
\end{equation}
(where $T$ is the episode length; for variable-length episodes use $\sum_{t=1}^{T_i}$ and average over $i$).
Let $\cL^{\mathrm{obs}}(\star)$ denote the same quantity evaluated at the true induced observation kernel
$\P^\star(O_t\in\cdot\mid H_{t-1},X_t)$.
Given offline episodes $\cD_{\mathrm{off}}=\{H^{(i)}_{T_i}\}_{i=1}^N$, we minimize the empirical version of
\eqref{eq:nv-pop-risk}:
\begin{equation}
\label{eq:nv-emp-risk}
\widehat \cL^{\mathrm{obs}}_N(\theta)
:=
\frac{1}{N}\sum_{i=1}^N
\sum_{t=1}^{T_i}
\ell^{\mathrm{obs}}_\theta\!\Bigl(O^{(i)}_t\ \big|\ H^{(i)}_{t-1},X^{(i)}_t\Bigr).
\end{equation}
A minimizer $\hat\theta\in\arg\min_{\theta\in\Theta} \widehat \cL^{\mathrm{obs}}_N(\theta)$ defines a learned
completion model $p_{\hat\theta}(\cdot\mid h)$.
A practical implementation samples random prefixes (prompts) $H^{(i)}_{t-1}$ from offline episodes and trains the
model to predict the next censored observation $O^{(i)}_t$. The objective is exactly the stochastic optimization
of \eqref{eq:nv-emp-risk}. The model is trained to perform conditional prediction given a prompt,
and at deployment time it adapts by conditioning on the online prompt rather than by updating parameters.

\subsection{Online algorithm: in-context GPS for R-NV via autoregressive sampling}
\label{subsec:nv-online}

This subsection specializes ICGPS (Algorithm~\ref{alg:gps}) to the repeated newsvendor with right-censored demand. The primary additional ingredient beyond Section~\ref{sec:framework} is that
the completion sampler must respect censoring constraints induced by the observed history. Throughout, we work with the bounded loss $\ell$ on $\cX=[0,B]$ and the oracle map $\hat x(\cdot)$ which is the empirical $\gamma$-quantile.

\paragraph{Completion variables and censoring constraints.} Recall that the latent outcome is the demand $D_t\in[0,B]$, and the observation is $O_t=(S_t,C_t)=\psi(X_t,D_t)=\bigl(\min\{D_t,X_t\},\ \ind{D_t\le X_t}\bigr)$. We treat $D_t$ as the completion variable. The history at time $t$ is \( H_t=\bigl((X_s,O_s)\bigr)_{s=1}^t\). Fix a realized history value $h_{t-1}=\bigl((x_s,(s_s,c_s))\bigr)_{s=1}^{t-1}$.
Any completed demand trajectory $D_{1:T}=(D_1,\dots,D_T)\in[0,B]^T$ is
\emph{consistent} with $h_{t-1}$ if for each $s<t$:
\begin{equation}
\label{eq:consistency-constraints}
\begin{cases}
D_s = s_s, & \text{if } c_s=1 \quad (\text{no stockout}),\\
D_s \in (x_s,B], & \text{if } c_s=0 \quad (\text{stockout}).
\end{cases}
\end{equation}

\subsubsection{Learned completion kernel and constrained sampling}

Section~\ref{subsec:nv-offline} defines a learned conditional model $p_\theta$ for  demand.
Online GPS requires sampling a \emph{completion} from a conditional law over entire trajectories:
\(\widetilde{D}_{1:T}^{(t)} \sim p_\theta(\cdot\mid H_{t-1})\). In practice, $p_\theta(\cdot\mid H_{t-1})$ is implemented by autoregressive generation conditioned on the prompt
$H_{t-1}$ (\citealp{cai2024active,zhang2025contextual}), with additional conditioning that enforce the hard
constraints \eqref{eq:consistency-constraints}.

\paragraph{Implementation via autoregressive factorization.}
We assume that, for each prompt $h_{t-1}$, the model admits an implementable factorization of the form
\begin{equation}
\label{eq:ar-factor}
p_\theta(D_{1:T}\mid h_{t-1})
=
\prod_{s=1}^T p_\theta\!\left(D_s \,\middle|\, D_{1:s-1},\, h_{t-1}\right),
\end{equation}
where each one-step conditional is samplable. This is exactly the interface provided by autoregressive sequence models: the prefix $D_{1:s-1}$ and the prompt $h_{t-1}$ are treated as ``context tokens,'' and the model outputs a distribution for the next value. When $s<t$ and $c_s=1$, the constraint $D_s=s_s$ is deterministic. When $s<t$ and $c_s=0$ with $x_s<B$, we
must sample $D_s$ conditional on the event $D_s>x_s$. The next lemma gives the exact conditional law. Lemma~\ref{lem:tail-conditioning} yields an exact, \emph{constraint-aware} sampling routine: sample $D_s$ from the one-step conditional \eqref{eq:ar-factor} when unconstrained, and sample from its tail conditional law \eqref{eq:tail-cdf} when the history indicates a stockout.

\begin{lemma}[Tail conditioning for  demand]
\label{lem:tail-conditioning}
Fix a prompt $h_{t-1}$ and a prefix $D_{1:s-1}$.
Let $\Pi(\cdot)$ denote the conditional law
\(
\Pi(\cdot) = p_\theta(D_s\in\cdot\mid D_{1:s-1}=D_{1:s-1},\,h_{t-1})
\)
on $[0,B]$, and write its CDF as $F(z)=\Pi([0,z])$.
Fix a threshold $x\in[0,B)$ such that $\Pi(D_s>x)>0$.
Then the conditional law of $D_s$ given $\{D_s>x\}$ is supported on $(x,B]$ and satisfies
\begin{equation}
\label{eq:tail-cdf}
\Pi(D_s\le z \mid D_s>x)
=
\frac{F(z)-F(x)}{1-F(x)},
\qquad z\in[x,B].
\end{equation}
If $\Pi$ admits a density $f$ on $(0,B)$, then the conditional density on $(x,B)$ is
\begin{equation}
\label{eq:tail-density}
f^{\mathrm{tail}}(z)
=
\frac{f(z)}{1-F(x)}\ \ind{z>x}.
\end{equation}
\end{lemma}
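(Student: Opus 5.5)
The plan is to argue directly from the definition of elementary conditional probability with respect to a conditioning event of positive mass. No earlier result is needed beyond the fact that $\Pi$ is a probability measure on $[0,B]$ with CDF $F$.

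First I would establish the support claim. Since $\Pi(D_s>x)=1-F(x)>0$, the conditional measure $\Pi(\cdot\mid D_s>x)=\Pi(\cdot\cap(x,B])/(1-F(x))$ is well defined. It assigns mass zero to $[0,x]$, and hence is supported on $(x,B]$.

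Next, for $z\in[x,B]$, I would write the event $\{D_s\le z\}\cap\{D_s>x\}$ as $\{x<D_s\le z\}$. Its $\Pi$-probability is $F(z)-F(x)$, because $[0,z]$ is the disjoint union of $[0,x]$ and $(x,z]$. Dividing by $\Pi(D_s>x)=1-F(x)$ gives \eqref{eq:tail-cdf}. The case $z=x$ gives $0$, which is consistent with the support claim.

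For the density statement, I would use absolute continuity. If $\Pi$ has density $f$ on $(0,B)$, then $F(z)-F(x)=\int_x^z f(u)\,du$, so the conditional CDF is $\int_x^z f(u)/(1-F(x))\,du$ for $z\in(x,B)$. The right-hand side of \eqref{eq:tail-density} is therefore a density of the conditional law. Equivalently, differentiating \eqref{eq:tail-cdf} in $z$ yields $f(z)/(1-F(x))$ almost everywhere on $(x,B)$.

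One subtlety needs a check: any atom of $\Pi$ at $0$ or $B$. An atom at $0$ is irrelevant because $0\le x$. An atom at $B$ is excluded under the density assumption, since $\Pi$ is then supported on $(0,B)$ up to null sets. In the general CDF statement, an atom at $B$ is handled automatically because $F(B)=1$.

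I do not expect any real obstacle. The only care needed is the non-degeneracy condition $1-F(x)>0$, which the hypothesis supplies, and the interpretation of the density as holding almost everywhere.
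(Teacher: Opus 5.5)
Your proposal is correct and follows the paper's proof: both compute $\Pi(x<D_s\le z)/\Pi(D_s>x)=(F(z)-F(x))/(1-F(x))$ directly from the definition of conditional probability and then read off the tail density from \eqref{eq:tail-cdf}. Your integral-form justification of the density (instead of pointwise differentiation) and your remarks on atoms are small added rigor within the same argument.
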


\begin{proof}
For $z\in[x,B]$,
\[
\Pi(D_s\le z \mid D_s>x)
=
\frac{\Pi(x<D_s\le z)}{\Pi(D_s>x)}
=
\frac{\Pi(D_s\le z)-\Pi(D_s\le x)}{1-\Pi(D_s\le x)}
=
\frac{F(z)-F(x)}{1-F(x)}.
\]
The density form follows by differentiating \eqref{eq:tail-cdf} on $(x,B)$.
\end{proof}

\paragraph{ICGPS Algorithm.} We can now state the online policy. At each round $t$, the algorithm: (i) samples a completion consistent with the observed censoring pattern, (ii) computes the oracle action (empirical $\gamma$-quantile) for that completion, and (iii) observes a new censored datum and appends it to the prompt. Algorithm~\ref{alg:nv-gps} performs \emph{no} online updates to $\theta$. All adaptation to the new (unknown) task is through conditioning on the prompt $H_{t-1}$ (in-context adaptation). If the one-step conditional in \eqref{eq:ar-factor} provides an invertible CDF $F_\theta(\cdot\mid \cdot)$, then tail sampling can be done by inverse transform: draw $U\sim \mathrm{Unif}(F_\theta(X_s\mid \cdot),1)$ and set $\widetilde{D}_s^{(t)} = F_\theta^{-1}(U\mid \cdot)$. Otherwise, one may use rejection sampling (sample from $p_\theta$ until $>X_s$) or any exact sampler for distributions. Our proposed architecture in the experimental section supports the former.

\subsection{Theoretical analysis of ICGPS for repeated newsvendor}
\label{subsec:nv-theory}

The Bayesian regret of the deployed ICGPS policy for the R-NV problem (Algorithm \ref{alg:nv-gps}) admits the decomposition into the two terms, put forward by Theorem \ref{thm:sec25_deploy}. In this section, we tailor this decomposition given by Equation \eqref{eq:sec25_deploy} to the specific feedback structure of the R-NV. In \S\ref{sec:rnv_bayes_regret}, we upper bound the first term by analyzing the Bayesian regret of Thompson sampling for the R-NV problem. This involves reducing our censored feedback structure to the feedback structure of bandit convex optimization. In \S\ref{subsubsec:nv-link-deploy-to-obs} we upper bound the second term (completion mismatch) by the \emph{observable} predictive objective used to train the completion model from censored data. This involves deriving assumptions on coverage and self-contraction (a form of harmonic decay) to account for the information loss due to censoring. In \S\ref{sec:final_regret_12} we combine the previous two bounds to obtain the final regret guarantee for ICGPS in the repeated newsvendor setting.

\subsubsection{Term I: Bayesian regret of Thompson sampling for repeated newsvendor}
\label{sec:rnv_bayes_regret}
We recall that $\xi$ denotes the Bayesian prior over $\P^\star$, and that $\xi(\cdot\mid H_{t-1})$ is the posterior given history. The oracle policy $\pi^\star=\pi_{\P^\star}$ implements Thompson sampling by drawing a posterior sample $\widetilde \P_t\sim \xi(\cdot\mid H_{t-1})$ and then playing
\(X_t \in \arg\min_{x\in[0,B]} f_{\widetilde \P_t}(x)\). Equivalently (Appendix~\ref{subsec:continuous}), one may view TS as sampling a \emph{random function} $f^{(t)}\sim \Law(f^\star\mid H_{t-1})$, and playing its minimizer: this function-sampling perspective provides a natural interface for continuous-action problems, and allows us to borrow ideas from the literature on bandit convex optimization.

\begin{theorem}[Bayesian regret of TS on R-NV]
\label{thm:rnv_bayes_regret}
Assume the demand is bounded: $D_t\in[0,B]$ almost surely for all $t$. Let $\pi_\star=\pi_{\P^\star}$ be the ideal GPS policy that uses the true completion kernel, i.e., $\pi_\star$ implements Thompson Sampling for the R-NV problem. Then
\begin{equation}
\label{eq:nv_bayes_regret}
\BReg_T(\pi_\star;\P^\star)
\le
\widetilde \cO (\sqrt{T})\,,
\end{equation}
where the notation $\widetilde{O}(\cdot)$ is used to hide polylogarithmic factors (and constants depending on $B,h,b$).
\end{theorem}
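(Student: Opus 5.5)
We reduce TS on the R-NV to Thompson sampling for one-dimensional Bayesian bandit convex optimization and then invoke the $\widetilde O(\sqrt{T})$ Bayesian regret bound of \citet{bakhtiari2025bco}. The plan has three steps.

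\emph{Step 1: the function-sampling view.} Using the equivalence in Appendix~\ref{subsec:continuous}, $\pi_\star$ is TS over the random convex function $f^\star=f_{\P^\star}$ on $[0,B]$. Concretely, it samples $f^{(t)}\sim\Law(f^\star\mid H_{t-1})$ and plays $X_t\in\arg\min f^{(t)}$. We need to check the structural hypotheses of the BCO result. Convexity of $f_{\P^\star}(x)=\E[h(x-D)_++b(D-x)_+]$ in $x$ is immediate, since $\ell(\cdot,d)$ is convex. Boundedness follows from $D_t\in[0,B]$, which gives $0\le f\le \max(h,b)B$. Lipschitz continuity also holds, with constant $\max(h,b)$. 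After rescaling $x\mapsto x/B$ and $f\mapsto f/(\max(h,b)B)$, the class lies in the bounded convex Lipschitz class on $[0,1]$ covered by that theorem. The rescaling constants are what enter the hidden constants.

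\emph{Step 2: reduce the feedback.} This is the key step and the main obstacle. Bandit feedback in BCO is a noisy evaluation $Y_t=f^\star(X_t)+\varepsilon_t$ with conditionally mean-zero, bounded (hence subgaussian) noise. Censored feedback $O_t=(\min\{D_t,X_t\},\ind{D_t\le X_t})$ does not reveal $\ell(X_t,D_t)$, because the underage cost needs $D_t$ on stockouts.

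The idea is to exhibit an observable unbiased surrogate. Write $\ell(x,d)=h(x-d)+(h+b)(d-x)_+$. Then $f(x)=h(x-\E D)+(h+b)\E(D-x)_+$. Both pieces are unknown, so we cannot build $f(X_t)$ directly from $O_t$. Instead we argue at the level of information. In the information-ratio analysis of TS for BCO, feedback enters only through the mutual information $I(f^\star;Y_t\mid H_{t-1},X_t)$. Because of the lower bounds on information gain used there, more informative feedback only helps.

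So we need to show that $O_t$ is at least as informative about $f^\star$ as some valid bandit observation. The observation $O_t$ determines the pair $(\min\{D_t,X_t\}, \ind{D_t\le X_t})$, and hence determines $g(X_t,D_t)$ for any function $g$ that depends on $D$ only through $\min(D,X)$. Take $\tilde\ell(x,d):=h\,x-(h+b)\min(d,x)$. Using $(x-d)_+=x-\min(d,x)$ and $(d-x)_+=d-\min(d,x)$, we get $\ell(x,d)=\tilde\ell(x,d)+b\,d$. The function $\tilde\ell(X_t,D_t)$ is observable, and $\tilde f(x):=\E\tilde\ell(x,D)=f(x)-b\,\E D$. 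This is $f$ shifted by a constant, so it has the same minimizers and the same regret differences.

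Therefore TS on R-NV is exactly TS on the BCO instance with loss $\tilde f^\star$, which is convex, bounded by $(h+b)B$, and Lipschitz. The bandit observation $Y_t=\tilde\ell(X_t,D_t)$ is a measurable function of $O_t$ with bounded noise. Since $O_t$ is finer than $Y_t$, the data-processing inequality shows the TS posterior information gain only increases. A cleaner route is to note that the information-ratio proof only requires $I(f^\star;O_t\mid\cdot)\ge I(f^\star;Y_t\mid\cdot)$. The entropy budget $H(x^\star)$, after discretizing $[0,B]$ at scale $1/\sqrt{T}$, remains $O(\log T)$. The step I expect to be delicate is verifying that Bakhtiari et al.'s bound is stated (or goes through) for this richer feedback and for a prior on $\P^\star$ rather than directly on $f$. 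I would handle the prior issue by pushing the prior on $\P^\star$ forward to a prior on $\tilde f^\star$; this is legitimate because regret depends on $\P^\star$ only through $\tilde f^\star$.

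\emph{Step 3: conclude.} Per-round regrets under $f$ and $\tilde f$ coincide. Applying the one-dimensional TS–BCO bound then gives $\BReg_T(\pi_\star;\P^\star)\le C(h,b,B)\sqrt{T}\,\mathrm{polylog}(T)$, which is \eqref{eq:nv_bayes_regret}.
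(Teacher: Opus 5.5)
Your proof is correct and follows the same skeleton as the paper's. Both shift $f_{\P}$ by the constant $b\,\E_{\P}[D]$, build an observable unbiased bounded feedback, rescale to $[0,1]$, and invoke \citet[Theorem~4]{bakhtiari2025bco}. Your $\tilde f(x)=hx-(h+b)\E[\min\{D,x\}]$ is exactly the paper's $g_{\P}(x)=(h+b)\E[(x-D)^+]-bx$.

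\textbf{The feedback construction.} This is where you genuinely differ. The paper draws an auxiliary $U_t\sim\mathrm{Unif}([0,X_t])$, sets $Z_t=\mathbf{1}\{D_t\le U_t\}$ (computable from $(O_t,U_t)$), and uses the two-valued $Y_t=(h+b)X_tZ_t-bX_t$. That route needs a randomization identity and a computability lemma. Your $Y_t=hX_t-(h+b)S_t$ is a deterministic function of the sales. In fact it equals $\E[(h+b)X_tZ_t-bX_t\mid X_t,O_t]$, so it is the Rao--Blackwellized, lower-variance version of the paper's feedback. It has the same range $[-bB,hB]$ and gives the same normalized instance, so the final bound is unchanged.

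\textbf{Richer feedback.} You handle this more carefully than the paper does. The paper asserts that $\pi_\star$ is precisely TS for the BCO instance with feedback $\bar Y_t$. But TS conditions on $(X_s,O_s)_{s<t}$, which carries much more information than the paper's one-bit feedback. It also exceeds yours, though only through $C_s$ on the event $\{S_s=X_s\}$. So the data-processing step inside the information-ratio analysis that you describe is what is actually needed. The cost, shared by both proofs, is that Theorem~4 is used through its proof rather than as a black box. One must check that its information-ratio bound holds at every posterior. One must also check that it holds for this observation model, which is bounded with mean $\tilde f^\star(X_t)$ but whose noise law is determined by the environment and may not match the noise model in their statement.

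\textbf{Pushing the prior forward.} Your justification, that regret depends on $\P^\star$ only through $\tilde f^\star$, is incomplete, because the posterior also depends on the observation law. What makes the push-forward legitimate is that $\P\mapsto\tilde f_{\P}$ is injective on laws supported on $[0,B]$. The right derivative $\tilde f'_+(x)=h-(h+b)\P(D>x)$ recovers the survival function, so the observation kernel is itself a function of $\tilde f^\star$.
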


\paragraph{Proof Sketch.} We provide a proof sketch here, and refer the reader to Appendix \ref{app:proof_thm_ts_newsvendor} for the full proof. The proof hinges on the fact that the R-NV problem admits a derived bandit feedback that is equivalent to the feedback in bandit convex optimization (BCO). We introduce a risk-equivalent convex objective $g_\P$ (Lemma~\ref{lem:shift_objective}) such that $f_\P(\cdot)$ and $g_\P(\cdot)$ yield identical regret, so it suffices to bound
$\E\big[\sum_{t\le T}\big(g_\P(X_t)-g_\P(x^\star_\P)\big)\big]$. We construct an \emph{unbiased} (Lemma \ref{lem:unbiased_feedback}) and \emph{observable} (Lemma \ref{lem:Z_computable}) feedback for $g_\P$ given by $Y_t :=(h+b)X_t Z_t - bX_t$. This is done by augmenting the learner with auxiliary randomness $U_t\sim\mathrm{Unif}([0,X_t])$ drawn independently each round, and defining the latent indicator $Z_t:=\ind{D_t\le U_t}$. Finally, showing the required regularity (convexity, Lipschitzness and boundedness) using Lemma \ref{lem:g_properties}, and scaling gives a valid BCO instance with scalar actions, yielding the $\widetilde \cO(\sqrt{T})$ regret bound from \cite[Theorem 4]{bakhtiari2025bco}.

\begin{remark}[Interpretation of Theorem \ref{thm:rnv_bayes_regret}]
This theorem shows that the Thompson sampling algorithm for the R-NV problem is able to achieve sublinear Bayesian regret, despite censored feedback. To the best of our knowledge, this is the first proof applying the information ratio analysis \cite{russo2016info} to obtain the Bayesian regret of TS for the R-NV problem. The key technical novelties include deriving the computable unbiased feedback structure and recognizing the equivalence of function and outcome sampling via Lemma \ref{lem:outcome-vs-function}. 
\end{remark}

\subsubsection{Term II: Linking deployment penalty to predictive objective}
\label{subsubsec:nv-link-deploy-to-obs}
The second term in Theorem~\ref{thm:sec25_deploy} is governed by the on-policy completion mismatch
$\Delta^{\mathrm{comp}}_T(\theta;\pi_\theta)$, which is not directly observable under right-censoring. In this subsection, we show that for R-NV, the completion mismatch is controlled by the censored predictive objective.
Let us denote the observed-data kernels, i.e., the conditional laws of $O_t$ given $(H_{t-1},X_t)=(h,x)$ under the true model and under $\theta$, as $P^{\mathrm{obs}}_{h,x}$ and $Q^{\mathrm{obs}}_{h,x}$, respectively. Alternatively, they can also be defined as pushforwards through $\psi$:
\(
P^{\mathrm{obs}}_{h,x}
\;:=\;
\bigl(q_t^\star(\cdot\mid h)\bigr)\circ \psi(x,\cdot)^{-1},
Q^{\mathrm{obs}}_{h,x}
\;:=\;
\bigl(q_{t,\theta}(\cdot\mid h)\bigr)\circ \psi(x,\cdot)^{-1}
\). We measure \emph{censoring-aware predictive mismatch} along a policy $\pi$ by the KL mismatch between observed-data kernels as below
\begin{equation}
\Delta^{\mathrm{obs}}_T(\theta;\pi)
\;:=\;
\sum_{t=1}^T
\mathbb{E}_{(H_{t-1},X_t)\sim \P^\star_\pi}
\!\left[
\KL\!\Bigl(P^{\mathrm{obs}}_{H_{t-1},X_t}\,\Big\|\,Q^{\mathrm{obs}}_{H_{t-1},X_t}\Bigr)
\right].
\label{eq:def-Delta-obs}
\end{equation}
Equivalently, if $p_{\theta,\mathrm{obs}}(\cdot\mid h,x)$ denotes the model likelihood of $O_t$ induced by $q_{t,\theta}$
and $\psi$, then $\Delta^{\mathrm{obs}}_T(\theta;\pi)$ coincides with the population excess censored log-loss, due to Lemma \ref{lem:delta-obs-excess-Lobs} (Appendix \ref{apx:lemmalist_thm3}). Therefore, defining $\cL^{\mathrm{obs}}_T(\theta;\pi$ and $\cL^{\mathrm{obs}}_T(\star;\pi)$ according to \eqref{eq:nv-pop-risk}, we have
\begin{equation}
\Delta^{\mathrm{obs}}_T(\theta;\pi)
\;=\;
\cL^{\mathrm{obs}}_T(\theta;\pi)-\cL^{\mathrm{obs}}_T(\star;\pi).
\label{eq:Delta-obs-excess-logloss}
\end{equation}

Because censoring discards information about the demand tail, $\Delta^{\mathrm{obs}}_T(\theta;\pi)$ does not, in general,
control $\Delta_T^{\mathrm{comp}}(\theta;\pi_\theta)$ without further assumptions. In the following, we provide a minimal \emph{coverage} condition, in line with the identifiability barriers highlighted in the censored newsvendor literature \citep{hssaine2024censored}. Moreover, we also need an assumption on harmonic decay along the AR pseudo-generation. This is given by a relation between the one-step divergence $d(h)$ and $s-$th conditional divergence $\delta_s(h)$, between completion kernels $Q_t^\star(\cdot\mid h)$ and $Q_{t,\theta}(\cdot\mid h)$, respectively. The formal definitions of these two quantities, along with further details about implications of this assumption can be found in Appendix \ref{apx:thm_3_asm}.

\begin{theorem}[Relation between deployment penalty and censoring-aware predictive mismatch]
\label{thm:completion_to_ovserved}
Consider the deployed policy $\pi_\theta$ in the true environment $\P^\star$.
Assume that, for $\P^\star_{\pi_\theta}$-a.e.\ history $h$ and all $t$ the following hold:
\begin{enumerate}
\item Max-order coverage: $\P^\star_{\pi_\theta}(X_t=B\mid H_{t-1}=h)\ge \eta$, for some $\eta>0$; and
\item Self-contraction along AR trajectory: $\delta_s(h)\le (c_{\mathrm{sc}}/s)\,d(h)$, for all $s\ge 2$.
\end{enumerate}

Then the on-policy completion mismatch is upper bounded by the observed censoring-aware predictive mismatch:

\begin{equation}
\label{eq:completion_to_ovserved}
\Delta_T^{\mathrm{comp}}(\theta;\pi_\theta)
\;\le\; \frac{1 + c_{sc} \log T}{\eta}\,\Delta_T^{\mathrm{obs}}(\theta;\pi_\theta).
\end{equation}
\end{theorem}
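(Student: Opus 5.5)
We prove the bound round by round: bound the completion KL at a fixed history by the observed-data KL at that history, then average along $\P^\star_{\pi_\theta}$ and sum over $t$. Fix $t$ and a history $h$. By the chain rule for KL applied to the autoregressive factorization \eqref{eq:ar-factor}, the trajectory-level divergence splits into one-step conditional divergences:
\[
\KL\bigl(Q_t^\star(\cdot\mid h)\,\|\,Q_{t,\theta}(\cdot\mid h)\bigr) = d(h) + \sum_{s\ge 2}\delta_s(h).
\]
Here $d(h)$ is the divergence of the first generated (next-demand) conditional $q_t^\star(\cdot\mid h)$ versus $q_{t,\theta}(\cdot\mid h)$. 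The term $\delta_s(h)$ is the expected $s$-th conditional divergence along the AR pseudo-generation, as defined in Appendix~\ref{apx:thm_3_asm}. Coordinates pinned by uncensored history ($D_s=s_s$) contribute zero. Coordinates forced into a tail by stockouts are tail-conditioned versions of the same conditionals, so they are covered by the same $\delta_s$ bookkeeping.

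The self-contraction assumption then gives
\[
\KL(Q_t^\star\|Q_{t,\theta}) \le d(h)\Bigl(1+c_{\mathrm{sc}}\sum_{s=2}^T \tfrac1s\Bigr) \le (1+c_{\mathrm{sc}}\log T)\, d(h),
\]
using $\sum_{s=2}^T 1/s\le \log T$. This reduces the problem to a one-step demand divergence.

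Next, relate $d(h)$ to the observed-data divergence. Conditional on $H_{t-1}=h$ and $X_t=x$, the observation $O_t=\psi(x,D_t)$ is a pushforward. At $x=B$, since $D_t\in[0,B]$, we have $\psi(B,D)=(D,1)$, which is injective in $D$. Censoring therefore loses nothing, and $\KL(P^{\mathrm{obs}}_{h,B}\|Q^{\mathrm{obs}}_{h,B}) = d(h)$. For every other $x$ the observed KL is nonnegative, by data processing it is at most $d(h)$, and we only need a lower bound anyway. Hence, by the coverage assumption,
\[
\E_{X_t\sim\pi_\theta(\cdot\mid h)}\bigl[\KL(P^{\mathrm{obs}}_{h,X_t}\|Q^{\mathrm{obs}}_{h,X_t})\bigr] \ge \P^\star_{\pi_\theta}(X_t=B\mid H_{t-1}=h)\, d(h) \ge \eta\, d(h).
\]
Combining the two displays gives, pointwise in $h$,
\[
\KL\bigl(Q_t^\star(\cdot\mid h)\|Q_{t,\theta}(\cdot\mid h)\bigr) \le \frac{1+c_{\mathrm{sc}}\log T}{\eta}\,\E\bigl[\KL(P^{\mathrm{obs}}_{h,X_t}\|Q^{\mathrm{obs}}_{h,X_t})\mid H_{t-1}=h\bigr].
\]
Taking expectation over $H_{t-1}\sim\P^\star_{\pi_\theta}$ and summing over $t$ yields \eqref{eq:completion_to_ovserved}, by Definition~\ref{def:completion-mismatch} and \eqref{eq:def-Delta-obs}.

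The main obstacle is the first step: making the chain-rule decomposition precise so that $d(h)$ and $\delta_s(h)$ are exactly the quantities named in the self-contraction assumption. This requires handling censored coordinates (tail conditionals) and deterministic coordinates consistently, and ensuring that the expectations inside $\delta_s$ are taken under $Q_t^\star$ (the first argument of the KL). I would first align the definitions with Appendix~\ref{apx:thm_3_asm}. If the appendix instead defines $\delta_s$ via the conditional of $D_{t+s-1}$ given generated prefixes, the same chain-rule identity applies verbatim.

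A secondary subtlety concerns the identification at $x=B$. If the demand has an atom at $B$, the observation $(B,1)$ is still a one-to-one image of $D=B$, so the identity $\KL(P^{\mathrm{obs}}_{h,B}\|Q^{\mathrm{obs}}_{h,B})=d(h)$ survives.
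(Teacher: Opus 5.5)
Your proposal is correct and follows the paper's proof essentially step for step: the AR chain-rule decomposition with self-contraction and $\sum_{s=2}^T 1/s\le\log T$ (Lemmas~\ref{lem:AR-KL-decomp} and~\ref{lem:self-contraction-bound}); the identity $\mathrm{KL}(P^{\mathrm{obs}}_{h,B}\|Q^{\mathrm{obs}}_{h,B})=d(h)$ via injectivity of $\psi(B,\cdot)$ (Lemma~\ref{lem:invertible-B-preserves-KL}); the coverage lower bound that drops the nonnegative terms off $\{X_t=B\}$ (Lemma~\ref{lem:coverage-obs-to-latent}); and averaging over $H_{t-1}\sim\P^\star_{\pi_\theta}$, then summing over $t$. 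Like the paper, you take the first factor of the pseudo-generation order to be the current-period demand $D_t$, which is what makes $d(h)$ coincide with $\mathrm{KL}(q_t^\star(\cdot\mid h)\|q_{t,\theta}(\cdot\mid h))$.
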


\paragraph{Proof Sketch.}
We provide a proof sketch here, with the main technical components, and refer the reader to Appendix~\ref{app:proof-thm-4.8} for the full proof. Using a KL chain-rule decomposition for AR-completion kernels, we show that for each $(t,h)$,
\(
\KL\!\Bigl(Q_t^\star(\cdot\mid h)\,\big\|\,Q_{t,\theta}(\cdot\mid h)\Bigr)
\le
\Bigl(1+c_{\mathrm{sc}}\log T\Bigr)\,d(h),
\)
which is summarized as Lemma~\ref{lem:self-contraction-bound}. When $x=B$, the censoring map is (a.s.) information-preserving under (i), so the latent divergence $d(h)$
can be upper bounded by the observed-data KL averaged over the deployed action distribution.
The key inequality is Lemma~\ref{lem:coverage-obs-to-latent}, yielding
\(
d(h)
\le
\frac{1}{\eta}\,
\mathbb{E}_{X_t\sim \pi_{\theta,t}(\cdot\mid h)}
\!\left[
\KL\!\Bigl(P^{\mathrm{obs}}_{h,X_t}\,\big\|\,Q^{\mathrm{obs}}_{h,X_t}\Bigr)
\right].
\)
Take expectations over $H_{t-1}\sim \P^\star_{\pi_\theta}$, sum over $t\le T$, and recognize the right-hand side
as $\Delta^{\mathrm{obs}}_T(\theta;\pi_\theta)$ via \eqref{eq:def-Delta-obs}.
Finally, we can use the excess-log-loss identity \eqref{eq:Delta-obs-excess-logloss} (Lemma~\ref{lem:delta-obs-excess-Lobs}) to connect
$\Delta^{\mathrm{obs}}_T$ to the observable censored predictive objective.

\begin{remark}[Interpretation of Theorem \ref{thm:completion_to_ovserved}]
This theorem operationalizes the deployment-penalty of Theorem~\ref{thm:sec25_deploy} by relating it to the observed mismatch via two key assumptions.
The two assumptions have clear roles: the $1/\eta$ factor is an identifiability price: without a persistent probability of the de-censoring action $X_t=B$,
the tail of the demand distribution is not statistically constrained by the data, and observed predictive fit need not imply latent fit. On the other hand, the logarithmic factor arises from controlling error propagation along the AR pseudo-generation through
self-contraction. We need the latter assumption, while \citet{zhang2025contextual} do not, since our theoretical statement with on-policy TS Bayesian regret in the first term requires us to compare two different trajectory laws under $\P^\star_{\pi_\theta}$ and $\P^\star_{\pi_\star}$, respectively.
\end{remark}

\subsubsection{Final regret of ICGPS for repeated newsvendor}
\label{sec:final_regret_12}

\begin{corollary}[Bayesian regret of ICGPS for R-NV]
\label{cor:nv-final-regret-obs}
Assume that the assumptions in Theorems \ref{thm:sec25_deploy}, \ref{thm:rnv_bayes_regret}, and \ref{thm:completion_to_ovserved}, hold. Then, we have the following Bayesian regret bound for in-context GPS (Algorithm \ref{alg:nv-gps}) on the repeated newsvendor
\[
\BReg_T(\pi_{\theta};\P^\star)
\;\le\;
\widetilde \cO(\sqrt{T})
+ \Psi\left(B^\prime, V_{\mathrm{comp}}, \eta, c_{sc}\right)
\sqrt{\left(T \log T \right)\Delta_T^{\mathrm{obs}}(\theta;\pi_\theta)}\,,
\]
where $\Psi\left(B^\prime, V_{\mathrm{comp}}, \eta, c_{sc}\right)$ collects all the constant factors.
\end{corollary}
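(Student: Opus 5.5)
The corollary follows by composing the three preceding results. The only real work is tracking constants and the $\log T$ factor under the square root.

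First, invoke Theorem~\ref{thm:sec25_deploy}. Its hypotheses (overlap with constant $V_{\mathrm{comp}}$, absolute continuity, bounded range $B'$, and the conditional-mean condition on $r_t$) are assumed. This gives
\[
\BReg_T(\pi_\theta;\P^\star)\le \BReg_T(\pi_\star;\P^\star)+B'\sqrt{\tfrac{T(2+\log V_{\mathrm{comp}})}{2}\,\Delta^{\mathrm{comp}}_T(\theta;\pi_\theta)}.
\]
For R-NV we may take $B'$ to be explicit, e.g.\ $B'\le \max\{h,b\}B$. This holds because $f^\star$ is $\max\{h,b\}$-Lipschitz on $[0,B]$, so $0\le r_t\le \max\{h,b\}B$. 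Term I is then replaced by the $\widetilde{\mathcal O}(\sqrt T)$ bound of Theorem~\ref{thm:rnv_bayes_regret}. This is legitimate because $\pi_\star$ is exactly Thompson sampling under the prior $\xi$, which is the setting of that theorem.

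Second, substitute Theorem~\ref{thm:completion_to_ovserved} into Term II. Under max-order coverage $\eta$ and self-contraction $c_{\mathrm{sc}}$ along the deployed trajectory law $\P^\star_{\pi_\theta}$, we have $\Delta^{\mathrm{comp}}_T\le \frac{1+c_{\mathrm{sc}}\log T}{\eta}\Delta^{\mathrm{obs}}_T$. Since $\sqrt{\cdot}$ is monotone, Term II is at most
\[
B'\sqrt{\tfrac{(2+\log V_{\mathrm{comp}})(1+c_{\mathrm{sc}}\log T)}{2\eta}}\;\sqrt{T\,\Delta^{\mathrm{obs}}_T(\theta;\pi_\theta)}.
\]
For $T\ge 3$ we have $\log T\ge 1$, so $1+c_{\mathrm{sc}}\log T\le (1+c_{\mathrm{sc}})\log T$. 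This yields the stated form with
\[
\Psi=B'\sqrt{(2+\log V_{\mathrm{comp}})(1+c_{\mathrm{sc}})/(2\eta)}.
\]
For small $T$ the bound is trivial because regret is at most $B'T$, and these cases are absorbed into $\widetilde{\mathcal O}(\sqrt T)$.

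The main point requiring care is consistency of the two theorems' measures. Theorem~\ref{thm:sec25_deploy} measures completion mismatch along $\P^\star_{\pi_\theta}$, and Theorem~\ref{thm:completion_to_ovserved} is also stated for the deployed policy. So both $\Delta^{\mathrm{comp}}$ and $\Delta^{\mathrm{obs}}$ refer to the same on-policy law, and no further change of measure is needed.

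One should also check that the coverage assumption ($X_t=B$ with probability at least $\eta$) does not conflict with the TS analysis. It does not, because Theorem~\ref{thm:rnv_bayes_regret} concerns $\pi_\star$ while coverage is imposed only on $\pi_\theta$. Finally, observe that the polylog factors hidden in $\widetilde{\mathcal O}$ for Term I are separate from the explicit $\log T$ in Term II, which completes the corollary.
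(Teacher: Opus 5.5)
Your proposal is correct and matches the paper's proof, which substitutes Theorem~\ref{thm:rnv_bayes_regret} into Term I and Theorem~\ref{thm:completion_to_ovserved} into Term II of Theorem~\ref{thm:sec25_deploy} and then collects constants into $\Psi$. Your explicit step $1+c_{\mathrm{sc}}\log T\le(1+c_{\mathrm{sc}})\log T$ for $T\ge 3$, with small $T$ absorbed into $\widetilde{O}(\sqrt{T})$, simply spells out the constant-collection that the paper leaves implicit.
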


\begin{proof}
The proof can be done by simply plugging in the results of Theorems \ref{thm:rnv_bayes_regret} and \ref{thm:completion_to_ovserved}, into the RHS of Theorem \ref{thm:sec25_deploy}. Finally we can collect the constant factors into $\Psi\left(B^\prime, V_{\mathrm{comp}}, \eta, c_{sc}\right)$ to obtain the desired result.
\end{proof}

%% file: sec4-experiments.tex
\section{Experimental Evaluation}
\label{sec:experiments}
Our experimental evaluation addresses four empirical questions aligned to our methodological and theoretical contributions in Sections~\ref{sec:framework}--\ref{sec:newsvendor}: (Q1) As a first sanity check for our proposed ICGPS, we check whether it matches Thompson sampling with  correctly specified prior under i.i.d.\ demand?
(Q2) Secondly, we investigate whether offline quality translates into online regret, i.e., does improving the completion model reduce regret in a fixed environment?
(Q3) Thirdly, does multi-task pretraining improve robustness and transfer learning capability under distribution shift and prior mismatch?
(Q4) Finally, for real-world data, does in-context GPS improve performance on non-i.i.d.\ demand streams with temporal structure? We begin this section by describing our experimental setup, followed by our proposed ChronosFlow architecture for implementing the ICGPS algorithm in practice. Finally, we present the experiments to answer the above questions, and our inferences.

\subsection{Experimental setup}
This section fixes a common
evaluation protocol used throughout Section~\ref{sec:experiments}. We adhere to the repeated
newsvendor interaction model and notation from Section~\ref{sec:framework}. The section ends with a discussion about data generating processes, for both online environments and offline tasks for meta-training.

\begin{itemize}[leftmargin=*]
\item
\emph{Performance metrics.}
\label{subsubsec:exp-metrics}
We evaluate all methods using the Bayesian regret defined in Sections~\ref{sec:framework} and \ref{sec:newsvendor}.
For online performance summaries from the regret trajectory $\{\BReg_t\}_{t=1}^T$ we report regret curves $\BReg_t$ as a function of $t$. For non-iid experiments with real-world data, we cannot employ Bayesian regret, since the oracle is unknown in this case. Therefore, we report the newsvendor cost.

\item
\emph{Tasks and statistical reporting.}
A \emph{trial} is one full horizon-$T$ interaction with a fixed environment $\P^\star$ (i.e., one repeated-newsvendor task). Unless stated otherwise, synthetic experiments use a common horizon $T$, and each figure compares methods on the same $T$. 
Within each trial $i\in\{1,\dots,N_{\mathrm{trial}}\}$ we use a fixed latent demand stream $D_{1:T}^{(i)}$ and run all methods on this shared stream.
For any method $\mathsf{A}$ and time $t$, let
\(
R_t^{(i)}(\mathsf{A})
:= \sum_{s=1}^t \Big(\ell(X_s^{(i)},D_s^{(i)})-\ell(x^{\star,(i)},D_s^{(i)})\Big)
\)
denote the realized cumulative regret in trial $i$.
We report the Monte Carlo mean
\(\widehat{\mu}_t(\mathsf{A}) := \frac{1}{N_{\mathrm{trial}}}\sum_{i=1}^{N_{\mathrm{trial}}} R_t^{(i)}(\mathsf{A})\) and visualize uncertainty using $\pm 1$ standard error bands, where
\(
\widehat{\mathrm{se}}_t(\mathsf{A})
:= \sqrt{\frac{1}{N_{\mathrm{trial}}(N_{\mathrm{trial}}-1)}
\sum_{i=1}^{N_{\mathrm{trial}}}\Big(R_t^{(i)}(\mathsf{A})-\widehat{\mu}_t(\mathsf{A})\Big)^2 }.
\) 
Trials are generated based on different random seeds.

\item 
\emph{Online environments.}
For the first three experiments, we evaluate all methods online on synthetic repeated newsvendor tasks with a
controlled set of demand families (described in more detail in the individual experiments below \S\ref{subsec:exp1}-\ref{subsec:exp3}). We set $h=1$ and vary the critical fractile (also called service level in operations management literature) $\gamma=b/(b+h)$, by varying $b$, over $\gamma\in\{0.5,0.9,0.98\}$. Censoring severity increases as $\gamma$ decreases. For the last experiment with real-world data, we follow the setup of \citealt{hssaine2024censored}, details in Section \ref{subsec:exp4}.
\item 
\emph{Offline meta-training data.}
Meta-training data are generated from the same synthetic task families described above. Offline, the learner is given historical episodes (tasks) generated under some data-collection mechanism: the behavior policy need not be optimal or stationary. For synthetic experiments, we construct an offline corpus by sampling independent tasks and then simulating right-censored trajectories, as follows: (i) sample a task $\P^\star \sim \xi$, and roll out a length-$T$ trajectory by repeatedly drawing $D_t \sim \P^\star$, (ii) choose orders via an \emph{exploration} policy $\mu$ that covers the action space, and (iii) record only the censored observations $O_t=\psi(X_t,D_t)$.
\end{itemize}

\subsection{Implementation of ICGPS for R-NV}
\label{subsec:exp-implementation}
We propose the ChronosFlow-ICGPS architecture (Figure \ref{fig:chronos-2-arch}) to instantiate the learned completion kernel in Algorithm~\ref{alg:nv-gps}. The goal is to produce
fast conditional samples of the latent demand process while enforcing censoring constraints induced by the
censoring map, via offline meta-training and online in-context inference. The architecture consists of three main components: (a) conditioning context vector layer, (b) conditional normalizing flow (CNF) head, and (c) ICGPS sampler. We explain the main highlights here, and defer some of the details for Appendix \ref{app:exp-implementation}.
\begin{itemize}[leftmargin=*]
\item \emph{(a) Conditioning context vector layer}. This layer is used to produce the context vector $\widebar H_{t-1} = \omega(H_{t-1})$ during both offline meta-training and online sampling. It consists of a Chronos backbone that produces an in-context quantile grid at levels \(\{0.05,0.10,\ldots,0.95\}\), and a Kaplan--Meier (KM) module that maintains censoring-consistent quantiles on the same grid. The final context vector is obtained by concatenating the above Chronos quantiles and KM quantiles to a simple
summary statistic of the historical data. 
During offline meta-training, the Chronos backbone can be either frozen (real-data experiments) or fine-tuned (i.i.d.\ experiments). During online sampling, this layer is always frozen.
\item \emph{(b) Conditional normalizing flow head.} The CNF head consists of a light hypernetwork that maps the above context vector to the parameters of a one-dimensional strictly monotone conditional normalizing flow, finally outputting the induced CDF $F_\theta (\cdot \mid \widebar H_{t-1})$ and inverse CDF $F^{-1}_\theta(\cdot \mid \widebar H_{t-1})$. It is optimized during offline training via the censoring-aware negative log-likelihood objective \eqref{eq:nv-one-step-nll}, and frozen during online sampling.

\item \emph{(c) ICGPS sampler.}
Online sampling is performed according to the description in \S\ref{subsec:nv-online}, and Algorithm \ref{alg:nv-gps}. In case of censoring, we sample from the conditional tail \(p_\theta(\cdot\mid \widebar h_s,\,D_s>\widebar H_s)\) using exact inverse-CDF tail-conditioning $U\sim \mathrm{Unif}\!\big(F_\theta(\widebar H_s\mid \widebar h_s),1\big), \widetilde D_s := Q_\theta(U\mid \widebar h_s).$ At each time \(t\), we draw \(M\) independent completions
\(\{\widetilde D^{(m)}_{1:T}\}_{m=1}^M\) from the constrained decoder, compute the corresponding newsvendor
oracle action \(\widehat x(\widetilde D^{(m)}_{1:T})\), and play a robust
aggregate (median over \(m\)) as \(X_t\). This aggregation stabilizes decisions against occasional
outlier completions. We use a short warm-up of \(T_0=3\) rounds (e.g.,
ordering \(B\)) before switching to the rollout-based policy.
\end{itemize}

\begin{figure}[t]
    \centering
    \includegraphics[width=0.95\linewidth]{"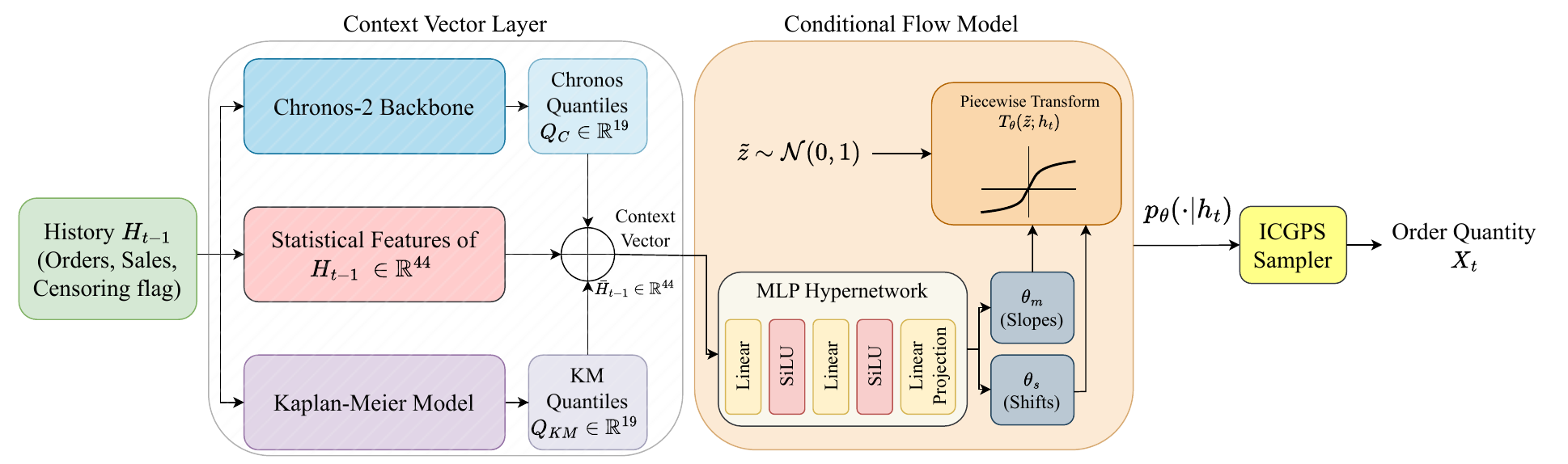"}
    \caption{ChronosFlow-ICGPS Architecture}
    \label{fig:chronos-2-arch}
\end{figure}

\subsection{Experiments}

\subsubsection{(Q1) Sanity check}
\label{subsec:exp1}
This experiment is a sanity check for our ICGPS methodology against Thompson sampling with correctly specified prior. The demand is given by \(D_t \stackrel{\text{i.i.d.}}{\sim} \mathrm{Weibull}(k,\theta^\star)\) with CDF
\(F_{\theta}(d)=1-\exp(-\theta d^k)\) for \(d\ge 0\).
We set \(k=1.5\), \(\theta^\star=0.5\), and \(T=600\), and service levels \(\gamma\in\{0.5,0.9,0.98\}\). 
We compare ChronosFlow-ICGPS to:
(a) TS--Weibull with conjugate updating for \(\theta^\star\) under a Gamma prior,
(b) a myopic plug-in MLE fit using uncensored observations only,
(c) an optimistic (UCB-style) estimator for the Weibull rate.
We report cumulative Bayesian regret relative to the clairvoyant Bayes-optimal fixed action oracle that orders the \(\gamma\)-quantile:
\(X^\star=(F^\star)^{-1}(\gamma)\).

In Figure~\ref{fig:exp1-weibull}, we find that ChronosFlow-ICGPS achieves comparable or slightly better regret than TS--Weibull across all \(\gamma\), indicating that completion-based posterior sampling recovers classical TS behavior when the parametric model is correctly specified.
In contrast, the myopic and optimistic baselines incur substantially higher regret when censoring is
frequent, especially at \(\gamma=0.5\), where the optimal policy stockouts often and thus produces
many censored observations. The gap shrinks as \(\gamma\) increases and observations become more
informative.

\begin{figure}[t]
    \centering
    \includegraphics[width=0.725\linewidth]{"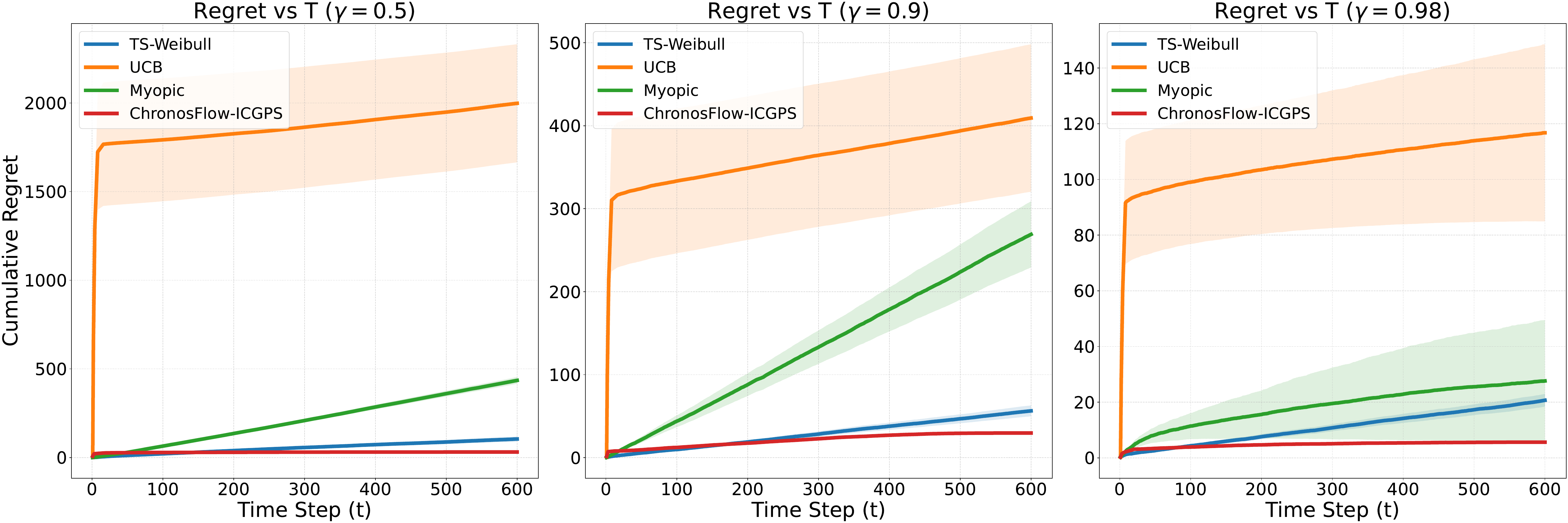"}
    \caption{Experiment~\ref{subsec:exp1}: cumulative Bayesian regret under correct specification for
    $\gamma\in\{0.5,0.9,0.98\}$. ChronosFlow-ICGPS matches the TS baseline across service levels,
    validating completion-based posterior sampling in the correctly specified i.i.d.\ regime.}
    \label{fig:exp1-weibull}
\end{figure}

\subsubsection{(Q2) Offline predictive quality and online regret}
\label{subsec:exp2}

In this experiment, we verify whether improving the offline predictive model reduces online regret in a fixed environment, as
suggested by Corollary \ref{cor:nv-final-regret-obs} in Section \ref{subsec:nv-theory}. We keep the online environment and the ChronosFlow-ICGPS online wrapper (Chronos backbone, KM module, and summary
mechanism) constant, and vary only the offline-trained completion kernel via two sweeps: (a) the number of offline training tasks and (b) the CNF head hidden size. We measure offline completion-model quality by a censoring-aware validation NLL
\(
\widehat{\cL}_{\mathrm{cens}}(\theta)
~:=~
-\frac{1}{|\cD_{\mathrm{val}}|}
\sum_{\tau\in\cD_{\mathrm{val}}}\;\sum_{t=1}^T
\log p_\theta\!\bigl(O_t^{(\tau)} \mid H_{t-1}^{(\tau)}\bigr),
\)
on held-out censored episodes $\cD_{\mathrm{val}}$. 

We plot the excess censoring-aware NLL $\widehat{\Delta}_{\mathrm{obs}}$ against the online terminal regret $\BReg_T$. Figure~\ref{fig:exp2-offline-vs-online} shows that models with better censoring-aware likelihood fit
tend to achieve lower online cumulative regret, most clearly in the data-scale row with lower censoring levels. The capacity sweep exhibits a weaker and noisier correspondence, including occasional mismatches, especially at higher censoring. This indicates that
architectural capacity alone does not monotonically translate into improved decision performance.

\begin{figure}[t]
    \centering
    \includegraphics[width=0.725\linewidth]{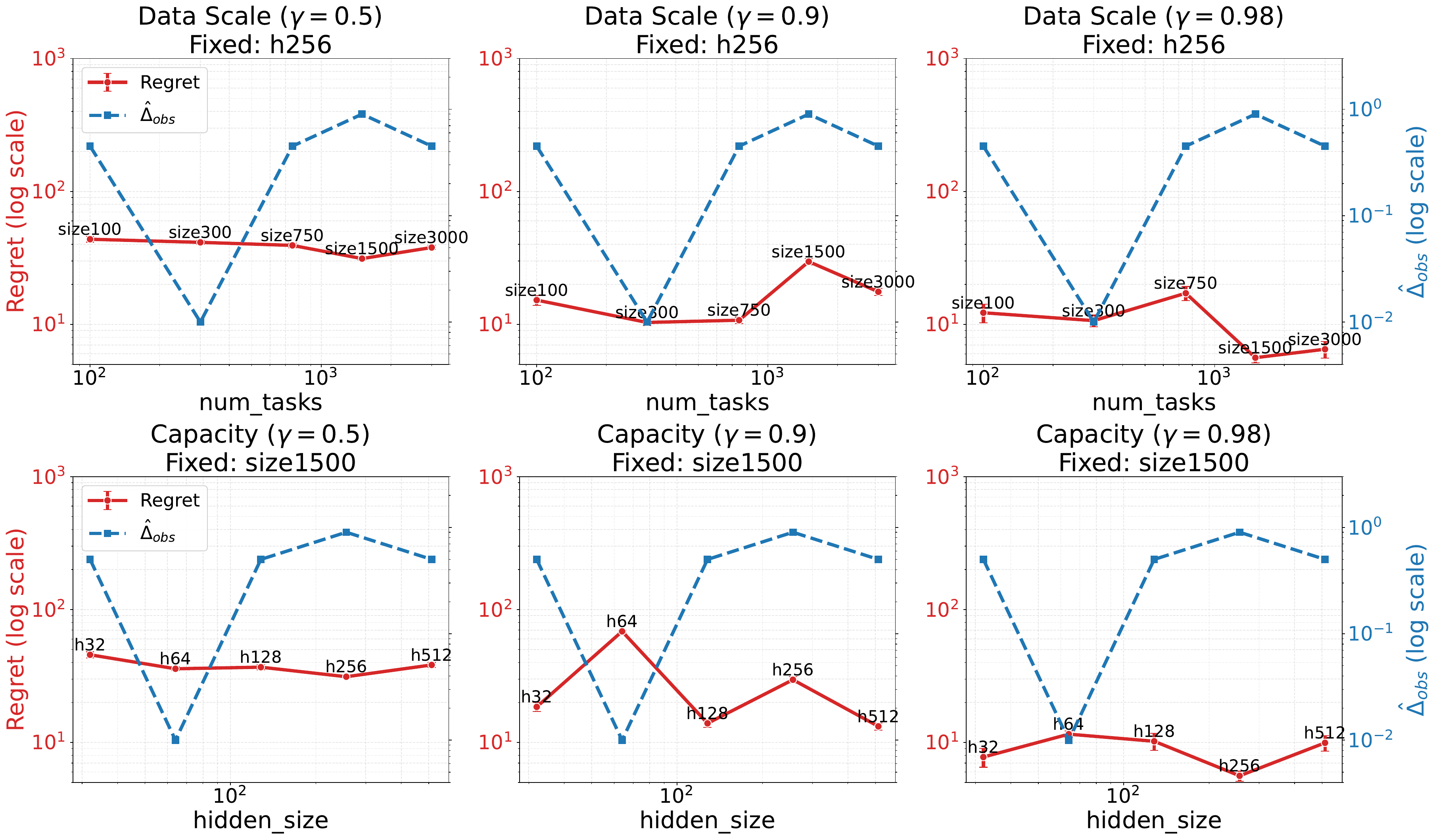}
    \caption{Experiment~\ref{subsec:exp2}: offline predictive quality versus online regret.
    Each point is an ICGPS model trained by varying either offline data scale (top row) or CNF head
    capacity (bottom row). Across sweeps, better censoring-aware validation NLL
    (smaller $\widehat{\Delta}_{\mathrm{obs}}$) generally correlates with lower online cumulative regret,
    with the relationship strongest for data scale ($\gamma = 0.9, 0.98$) and weaker (occasionally non-monotone) for capacity.}

    \label{fig:exp2-offline-vs-online}
\end{figure}

\subsubsection{(Q3) Robustness and transfer}
\label{subsec:exp3}
We evaluate the robustness of in-context GPS under \emph{distribution family shift} \emph{prior mismatch} within a family, that is endowed by meta-training. We use a short horizon (\(T=60\)) to emphasize cold-start performance, where
offline transfer is most impactful.

\paragraph{(a) Distribution shift}
\label{subsubsec:exp3-dist-shift}
We pretrain the completion kernel on a mixture of $K=4$ demand families (exponential, lognormal, Gompertz, log-logistic) and evaluate online on a held-out Weibull environment, holding the
total offline sample budget fixed. Baselines are (a) TS with correctly specified prior, respecting the Weibull test environment, and (b) misspecified TS aligned with the training mixture. Figure~\ref{fig:exp3-dist-shift} (shown for $K=4$) indicates that ChronosFlow-ICGPS substantially
reduces the regret incurred by misspecified TS under family shift, and for $\gamma\in\{0.9,0.98\}$
tracks the correctly specified TS closely. At $\gamma=0.5$, ChronosFlow exhibits a larger cold-start
offset but does not suffer the sustained growth of the misspecified baseline.

\begin{figure}[t]
    \centering
    \includegraphics[width=0.725\linewidth]{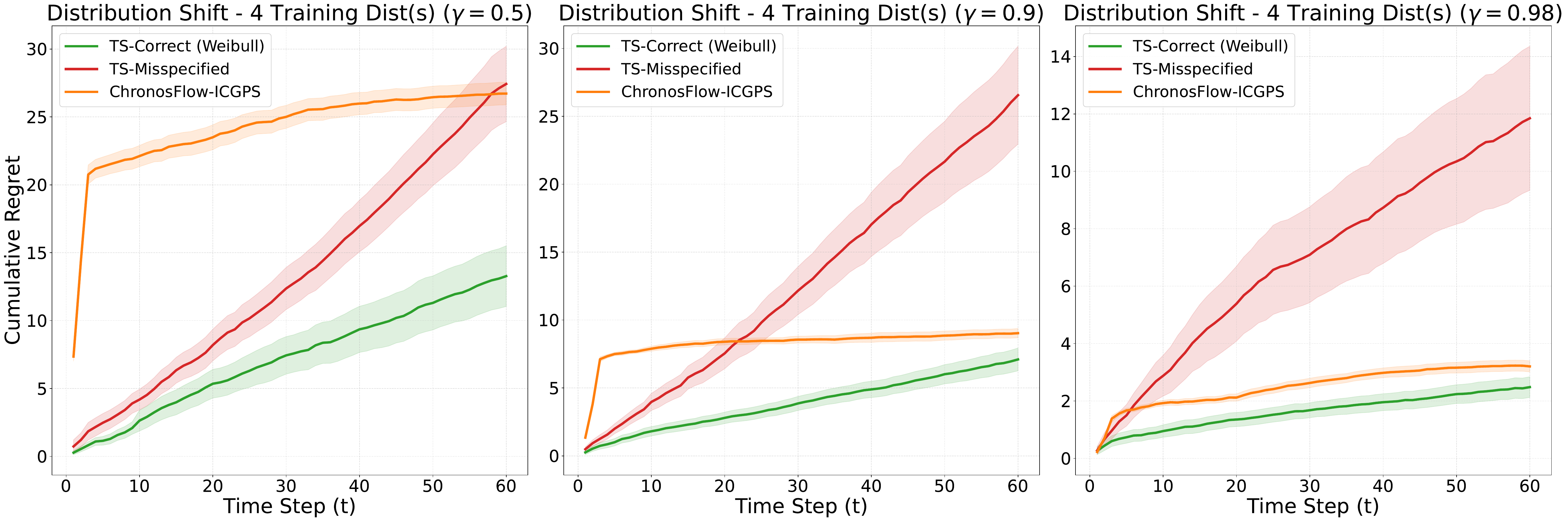}
    \caption{Experiment~\ref{subsec:exp3} (a): Distribution shift (test family Weibull) with $K=4$
    training families. ChronosFlow-ICGPS mitigates misspecification under family shift and is close to
    correctly specified TS for $\gamma\in\{0.9,0.98\}$; at $\gamma=0.5$ it shows a larger cold-start offset.
    }
    \label{fig:exp3-dist-shift}
\end{figure}

\emph{(b) Prior mismatch}
\label{subsubsec:exp3-prior-mismatch}
We also verify robustness to prior mismatch within the Weibull family by evaluating policies on an
online environment whose parameters lie outside the regime seen during offline training. Offline training tasks cover a demand regime with $\lambda\in[0.2,0.4]$ and $k\in[0.8,1.2]$, while the online test environment uses $\lambda^\star=0.8$ and $k=2.0$. We compare ChronosFlow-ICGPS trained on this offline corpus to \texttt{TS-correct} (oracle TS prior aligned to test environment), and \texttt{TS-train-prior} (TS using mismatched prior induced by offline corpus). Figure~\ref{fig:exp3-prior-mismatch} shows that a misaligned prior can severely degrade parametric TS,
while ChronosFlow-ICGPS remains stable and substantially reduces regret by adapting from censored
online feedback.
\begin{figure}[t]
    \centering
    \includegraphics[width=0.725\linewidth]{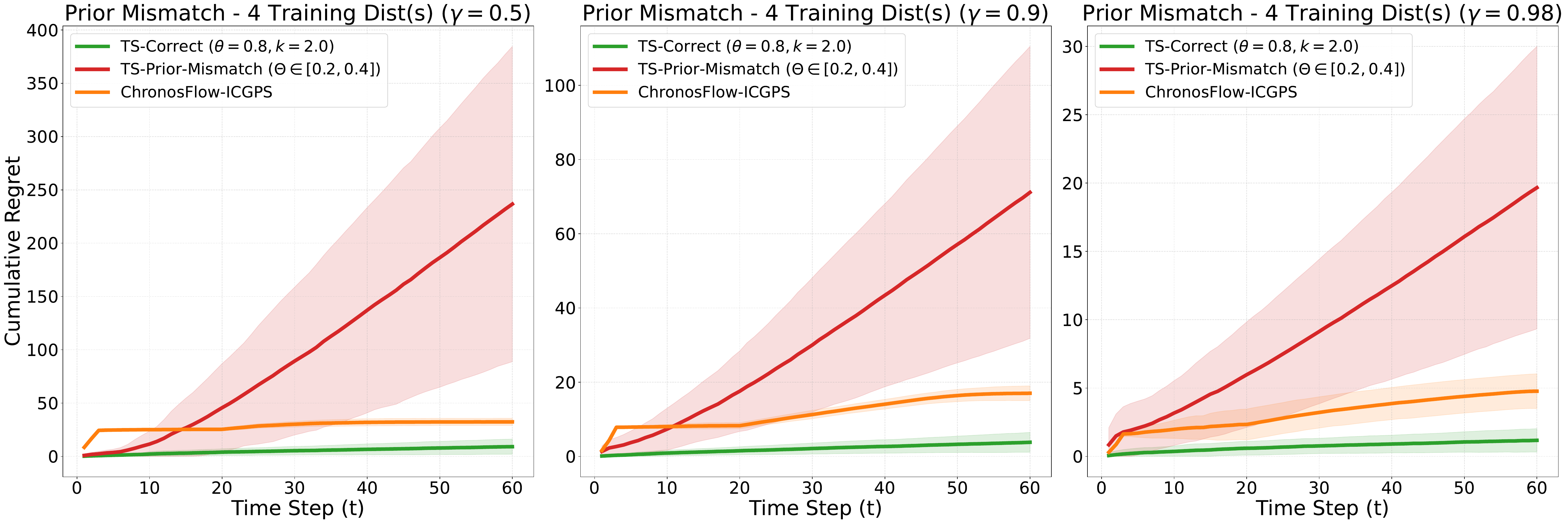}
    \caption{Experiment~\ref{subsubsec:exp3-prior-mismatch} (b): prior mismatch robustness for
    $\gamma\in\{0.5,0.9,0.98\}$. Misaligned priors can induce large TS regret, while ChronosFlow-ICGPS remains stable via meta-training.}

    \label{fig:exp3-prior-mismatch}
\end{figure}

\subsubsection{(Q4) Real-world dataset}
\label{subsec:exp4}
For real-world data experiments, we follow the setup (preprocessing, season-based splits, and evaluation
protocol) of \citet[Sec.~6.3]{hssaine2024censored} and benchmark ChronosFlow-ICGPS on the SuperStore dataset \citep{Sahoo2023SuperstoreSales}, used in their paper. Each (product, season/store) defines an episode with right-censored
feedback induced by stockouts, constructed from time-ordered sales/stockout records
under their selling-season rules. We sweep the censoring-control parameter \(\lambda\) (smaller
\(\lambda\) implies heavier censoring), meta-train on historical seasons, and evaluate online on held-out
seasons. We report two variants: \texttt{Native} (completion kernel trained only on the target dataset)
and \texttt{Meta} (completion kernel pretrained on the other real datasets; deployed under the same
online protocol). 

The dataset consists of three splits: \textsc{Technology}, \textsc{Office Supplies}, and \textsc{Furniture}, out of which we report the first here in Table \ref{tab:exp4-real_1}, and defer the rest of the results (Tables~\ref{tab:exp4-real_2}--\ref{tab:exp4-real_3}) as well as full construction details to Appendix~\ref{app:exp4}. Across categories, ChronosFlow-ICGPS is competitive and delivers its clearest gains under severe
censoring (\(\lambda\le 3\)); for example, on \textsc{Furniture} at \(\lambda=1\), \texttt{Meta} achieves
\(4.1\) (best), and on \textsc{Technology} at \(\lambda=3\), \texttt{Meta} attains
\(4.2\) (best). As censoring relaxes, gaps typically narrow in a category-dependent manner; \texttt{Meta} remains comparatively stable across \(\lambda\), while \texttt{Native} can degrade at light censoring. Overall, the robustness benefits observed in synthetic experiments carry over
to real, non-i.i.d.\ demand with different levels of censoring.

\section{Conclusion}
This work studies sequential inventory control under \emph{decision-dependent right censoring} in the repeated newsvendor (R-NV): the order quantity determines what is observed, and stockouts censor demand so the learner observes only sales. To overcome the limitations of existing approaches, including choice of priors and inability to transfer to online settings, we propose the in-context generative posterior sampling methodology, which consists of offline meta-training and online in-context sampling, both respecting the censored feedback structure. 
On the theoretical front, our analysis provides a regret decomposition consisting of (i) the regret of an ideal TS benchmark that uses the true completion kernel, and (ii) a deployment penalty bounded by a completion-mismatch quantity. We show that in the R-NV setting, the TS benchmark admits sublinear Bayesian regret via a reduction to a derived bandit convex optimization (BCO) feedback, while the completion-mismatch term links online performance to offline predictive fit of the completion model. Decision-dependent censoring can limit information about tail demand and create identifiability challenges in the offline-to-online transfer, without additional structural assumptions, which we derive precisely for ICGPS applied to R-NV problem. Empirically, the ChronosFlow-ICGPS architecture is robust to model and prior misspecification under censoring. On the SuperStore real-world data benchmark under heavy censoring, ChronosFlow-ICGPS achieves strong performance, especially for the \textsc{Meta} algorithm variant. A natural extension is a \emph{contextual} version of R-NV, to include features like weather, price, text, etc., along the lines of \cite{hssaine2024censored, zhang2025contextual}. A second direction is tackling \emph{non-stationarity} in demands \cite{DBLP:journals/ior/BesbesGZ15} using recent advances in non-stationary BCO \cite{JMLR:v22:20-763}.

\section*{AI Usage}
We used an LLM (GPT v5.2) as assistive tool for: (i) improving exposition via language editing; (ii) aiding understanding of theoretical concepts; and (iii) code review. All LLM-generated suggestions were independently verified by at least one (and in most cases multiple) co-authors.

\begin{table*}[t]
\centering
\caption{Results of Experiment \ref{subsec:exp4} for the \textsc{Technology} Dataset}
\footnotesize
\setlength{\tabcolsep}{3pt}
\renewcommand{\arraystretch}{1.05}
\begin{adjustbox}{max width=\textwidth,center}
\begin{tabular}{l*{15}{c}}
\toprule
Algorithm & $\lambda$=1 & 2 & 3 & 4 & 5 & 6 & 7 & 8 & 9 & 10 & 11 & 12 & 13 & 14 & 15 \\
\midrule
ChronosFlow-ICGPS (\texttt{Native}) & 12.2 & \textbf{4.5} & 5.1 & 6.2 & 5.1 & 6.0 & 5.4 & 4.7 & 5.1 & 4.2 & 5.0 & 6.0 & 6.3 & 8.0 & 6.8 \\
ChronosFlow-ICGPS (\texttt{Meta}) & \textbf{11.5} & 6.7 & \textbf{4.2} & \textbf{3.9} & 4.0 & \textbf{3.9} & \textbf{3.9} & \textbf{3.9} & \textbf{3.9} & \textbf{3.9} & \textbf{3.9} & \textbf{3.9} & 3.9 & 3.9 & \textbf{3.9} \\
SAA & 15.1 & 15.1 & 9.5 & 6.2 & 4.4 & 4.4 & \textbf{3.9} & \textbf{3.9} & \textbf{3.9} & \textbf{3.9} & \textbf{3.9} & \textbf{3.9} & \textbf{3.9} & \textbf{3.9} & \textbf{3.9} \\
RCN \citep{hssaine2024censored} & 19.9 & 18.9 & 17.1 & 14.7 & \textbf{3.9} & 4.1 & \textbf{3.9} & \textbf{3.9} & \textbf{3.9} & \textbf{3.9} & \textbf{3.9} & \textbf{3.9} & \textbf{3.9} & \textbf{3.9} & \textbf{3.9} \\
Kaplan-Meier & 15.1 & 9.5 & 6.2 & 4.4 & \textbf{3.9} & \textbf{3.9} & \textbf{3.9} & \textbf{3.9} & \textbf{3.9} & \textbf{3.9} & \textbf{3.9} & \textbf{3.9} & \textbf{3.9} & \textbf{3.9} & \textbf{3.9} \\
\bottomrule
\end{tabular}
\end{adjustbox}
\label{tab:exp4-real_1}
\end{table*}

%% file: sec-apx_method.tex
\newcommand{\Ind}{\mathbf{1}}

\section{Methodological Details}

\begin{algorithm}[h]
\caption{In-context Generative Posterior Sampling (GPS) for right-censored newsvendor}
\label{alg:nv-gps}
\small
\begin{algorithmic}[1]
\Require critical fractile $\gamma=b/(b+h)$; learned completion model $p_\theta(d_{1:T}\mid h)$
\State $H_0 \gets \emptyset$
\For{$t \gets 1,2,\ldots,T$}
  \State \textsc{Completion sampling} sample $\widetilde D_{1:T}^{(t)} \sim p_\theta(\cdot\mid H_{t-1})$ subject to constraints in~\eqref{eq:consistency-constraints}
  \State Initialize $\widetilde D_{1:T}^{(t)}$ as an empty vector
  \For{$s \gets 1,2,\ldots,T$}
    \If{$s<t$ \textbf{and} $C_s=1$}
      \State $\widetilde D_s^{(t)} \gets S_s$ \Comment{No stockout: demand is revealed}
    \ElsIf{$s<t$ \textbf{and} $C_s=0$}
      \State Sample $\widetilde D_s^{(t)}$ from $p_\theta(\cdot\mid \widetilde D_{1:s-1}^{(t)},H_{t-1})$
      conditioned on $\{\widetilde D_s^{(t)} > X_s\}$ via \eqref{eq:tail-density}
      \Comment{Lemma~\ref{lem:tail-conditioning}}
    \Else
      \State Sample $\widetilde D_s^{(t)} \sim p_\theta(\cdot\mid \widetilde D_{1:s-1}^{(t)},H_{t-1})$
      \Comment{Future demand is unconstrained}
    \EndIf
  \EndFor
  \State \textsc{Decision} $X_t \gets \hat x(\widetilde D_{1:T}^{(t)})$
  \Comment{Left empirical $\gamma$-quantile Sec.~\ref{sec:nv_running_example}}
  \State \textsc{Feedback} Observe $O_t=(S_t,C_t)=\psi(X_t,D_t)$ and set $H_t\gets(H_{t-1},(X_t,O_t))$
\EndFor
\end{algorithmic}
\end{algorithm}

\begin{table*}[t]
\caption{Examples of operational problems that fit the censored feedback template. Note that the latent outcome $U_t$ is a minimal sufficient latent variable, and richer outcomes can be used when convenient.}
\label{tab:app_examples}
\centering
\footnotesize
\setlength{\tabcolsep}{4pt}
\renewcommand{\arraystretch}{1.15}

\begin{tabularx}{\linewidth}{@{}
  >{\raggedright\arraybackslash}p{0.22\linewidth}
  >{\raggedright\arraybackslash}p{0.13\linewidth}
  >{\raggedright\arraybackslash}p{0.14\linewidth}
  >{\raggedright\arraybackslash}X
  >{\raggedright\arraybackslash}p{0.18\linewidth}
@{}}
\toprule
\textbf{Instance (key refs)} &
\textbf{Action $X_t$} &
\textbf{Latent outcome $U_t$} &
\textbf{Observation $O_t=\psi(X_t,U_t)$} &
\textbf{Loss $\ell(X_t,U_t)$} \\
\midrule

Censored newsvendor \cite{huh2011adaptive,besbes2022tradeoff} &
Order quantity &
Demand &
Sales and stockout indicator &
Newsvendor cost \\

\addlinespace
Capacity controls in revenue management \cite{TalluriVanRyzin2004,littlewood2005forecasting} &
Booking limit &
Requests without limit &
Accepted bookings &
Negative revenue \\

\addlinespace
Budget pacing in digital advertising \cite{BalseiroEtAl2023RobustPacing,ConitzerEtAl2022PacingEq} &
Spend cap &
Uncapped spend &
Realized spend &
Penalty for underspend/overspend \\

\addlinespace
Posted-price mechanisms \cite{KleinbergLeighton2003,Myerson1981} &
Posted price &
Buyer valuation &
Purchase indicator &
Negative revenue \\

\bottomrule
\end{tabularx}
\end{table*}

\subsection{General template for operational problems with decision-dependent uncertainty}
\label{app:general_template}

This appendix presents a generic template for sequential decision problems with scalar actions used implicitly throughout the main text, and provides a mapping for several operational examples.
Let $X\subset \mathbb{R}$ be a compact interval (the action space).
A policy $\pi$ sequentially chooses actions $X_t\in X$ for $t=1,\dots,T$ based on past observations.
We assume there exists an underlying outcome space $\mathcal{U}$ and a random trajectory
$U_{1:T}:=(U_1,\dots,U_T)\in\mathcal{U}^T$ such that the per-round loss is a known measurable function
$\ell: X\times\mathcal{U}\to[0,1]$, and the learner observes a (possibly censored/coarsened) feedback signal
\(
O_t = \psi(X_t, U_t)
\)
for a known measurable map $\psi: X\times\mathcal{U}\to\mathcal{O}$.
The history at time $t$ is
\(
H_t := \bigl( (X_s,O_s)\bigr)_{s=1}^t \in (X\times\mathcal{O})^t,  H_0:=\emptyset,
\)
and a (possibly randomized) policy maps $H_{t-1}$ to a distribution over $X$.
Define the Bayes risk
\(
f^\star(x) := \mathbb{E}_{P^\star}\!\left[\ell(x,U_t)\right],
\)
where in the stationary case the RHS does not depend on $t$.
The unknown environment is captured by a distribution $\P^\star$ over complete trajectories $U_{1:T}$. Table \ref{tab:app_examples} provides additional examples of operational problems that admit decision-dependent uncertainty due to censored feedback, and can therefore possibly inherit, with minor modifications, the method and theory presented in this work for the repeated newsvendor example.

\subsection{Function-valued uncertainty in continuous action problems}
\label{subsec:continuous}
For problems with finite action sets, posterior sampling is typically described as sampling a latent parameter or a finite vector of missing potential outcomes, and then choosing the best action under that sampled instance. When $\cX$ is a continuum, it is typically useful to view the unknown object as an
\emph{entire loss (or risk) function} over $\cX$ and to sample this function from its posterior.

\paragraph{Function-valued latent object.}
Let us assume that the Bayes risk $f^\star:\cX\to[0,1]$ is convex and $L$-Lipschitz on the
compact interval $\cX\subset\R$, hence continuous. It is therefore natural to regard $f^\star$ as a random
element of the Polish space $(C(\cX),\|\cdot\|_\infty)$ equipped with its Borel $\sigma$-algebra.
In a Bayesian model, randomness in $f^\star$ arises from randomness in the underlying environment
(e.g., an unknown demand model), and conditioning on the observed history $H_{t-1}$ induces a posterior
distribution $\Law(f^\star\mid H_{t-1})$ over functions.

\paragraph{Posterior sampling in function space.}
Fix any measurable tie-breaking rule that selects a minimizer from a nonempty set
(e.g., the smallest minimizer).\footnote{Measurable selection issues can be bypassed by a fixed
tie-breaking rule, or by working with a vanishing strongly convex regularizer so that the minimizer is unique;
both options are standard in continuous-action analyses.}
Then posterior sampling can be written as
\begin{equation}
\label{eq:ts-function}
\widetilde f^{(t)} \sim \Law(f^\star\mid H_{t-1}),
\qquad
X_t \in \argmin_{x\in\cX}\widetilde f^{(t)}(x).
\end{equation}
This is essentially Thompson sampling in \emph{bandit convex optimization (BCO)}: the learner is optimizing an unknown convex function over a continuum using only bandit feedback. In 1D, Thompson sampling enjoys $\widetilde O(\sqrt{T})$ Bayesian regret in BCO under
convexity, boundedness, and a mild Lipschitz condition \citep{bakhtiari2025bco} .

\paragraph{Equivalence of function and outcome sampling.}
The missing-data formulation of Section~\ref{subsec:missingdata} samples a completion
$\widetilde U_{1:T}^{(t)}\sim \P^\star(\cdot\mid H_{t-1})$ and then applies an oracle map, e.g., the ERM oracle. The function-valued viewpoint is simply the pushforward of this posterior through the map that associates a completion to a risk function. To make this explicit, define the (random) empirical loss function associated with a complete trajectory
$u_{1:T}\in\cU^T$: \(\widehat f_{u_{1:T}}(x) := \frac1T\sum_{s=1}^T \ell(x,u_s),\) where \(x\in\cX. \)
Let $\Phi:\cU^T\to C(\cX)$ denote the measurable map $\Phi(u_{1:T})=\widehat f_{u_{1:T}}(\cdot)$.
Then the posterior distribution of $\widehat f_{U_{1:T}}$ given history is the pushforward measure
$\Law(\widehat f_{U_{1:T}}\mid H_{t-1}) = \Law(\Phi(U_{1:T})\mid H_{t-1})$.

\begin{lemma}[Outcome-sampling and function-sampling induce the same action law]
\label{lem:outcome-vs-function}
Let $\hat x(u_{1:T}) \in \argmin_{x\in\cX}\widehat f_{u_{1:T}}(x)$ be an oracle action computed from a completed
trajectory, using a fixed measurable tie-break rule.
Let $\widetilde U_{1:T}^{(t)} \sim \P^\star(\cdot\mid H_{t-1})$ and set
$X_t := \hat x(\widetilde U_{1:T}^{(t)})$.
Alternatively, sample $\widetilde f^{(t)}\sim \Law(\widehat f_{U_{1:T}}\mid H_{t-1})$ and set
$\widetilde X_t\in \argmin_{x\in\cX}\widetilde f^{(t)}(x)$ (with the same tie-break rule).
Then
\(
\Law(X_t\mid H_{t-1})=\Law(\widetilde X_t\mid H_{t-1}).
\)
\end{lemma}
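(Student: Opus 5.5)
The plan is to factor both sampling schemes through the single map $\Phi$ and reduce the claim to the pushforward identity already used in Lemma~\ref{lem:probmatching}. Write $\operatorname{sel}:C(\cX)\to\cX$ for the fixed measurable tie-break selection of a minimizer, for instance the smallest minimizer. With this notation the oracle satisfies $\hat x(u_{1:T})=\operatorname{sel}(\Phi(u_{1:T}))$, because $\hat x$ is by definition the tie-broken argmin of $\widehat f_{u_{1:T}}=\Phi(u_{1:T})$. Hence $X_t=\operatorname{sel}\circ\Phi(\widetilde U^{(t)}_{1:T})$ and $\widetilde X_t=\operatorname{sel}(\widetilde f^{(t)})$.

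The first step is to establish the measurability facts needed for pushforwards to make sense.
\begin{enumerate}
\item $\Phi:\cU^T\to C(\cX)$ is measurable. Each coordinate evaluation $u_{1:T}\mapsto \widehat f_{u_{1:T}}(x)$ is measurable, and since $C(\cX)$ is Polish its Borel $\sigma$-algebra is generated by point evaluations at a countable dense set of $x$.
\item $\operatorname{sel}$ is Borel on $C(\cX)$. For the smallest-minimizer rule, $\{g:\operatorname{sel}(g)\le a\}=\{g:\min_{[\inf\cX,a]}g\le \min_{\cX}g\}$. Both minima are continuous functionals in $\|\cdot\|_\infty$, so this set is closed.
\end{enumerate}

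The core computation then follows. Conditional on $H_{t-1}$, we have $\Law(\Phi(\widetilde U^{(t)}_{1:T})\mid H_{t-1})=\Phi_\#\P^\star(\cdot\mid H_{t-1})$, and this equals $\Law(\widehat f_{U_{1:T}}\mid H_{t-1})$ by the pushforward identity stated just before the lemma. So $\widetilde f^{(t)}$ and $\Phi(\widetilde U^{(t)}_{1:T})$ have the same conditional law. For any Borel $A\subseteq\cX$,
\[
\P(X_t\in A\mid H_{t-1})=\P\bigl(\Phi(\widetilde U^{(t)}_{1:T})\in \operatorname{sel}^{-1}(A)\mid H_{t-1}\bigr)=\P\bigl(\widetilde f^{(t)}\in\operatorname{sel}^{-1}(A)\mid H_{t-1}\bigr)=\P(\widetilde X_t\in A\mid H_{t-1}).
\]
This is the claim.

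The only genuine obstacle is the measurability of the selection map and the requirement that the same tie-break be applied in both branches. If one minimizer were chosen for the completion and a different one for the sampled function, the laws could differ on the argmin set. I will handle this by fixing $\operatorname{sel}$ once and defining $\hat x:=\operatorname{sel}\circ\Phi$, so that the two policies differ only in whether one samples before or after applying $\Phi$. Conditioning is in the regular-conditional-probability sense; regular versions exist because all spaces involved are Polish, given that $\cU$ is standard Borel, as $[0,B]$ is here.
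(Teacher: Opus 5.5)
Your proposal is correct and follows the paper's argument. Both proofs write the oracle as a fixed measurable selection applied to $\Phi(u_{1:T})$, then identify the conditional laws of $X_t$ and $\widetilde X_t$ as the pushforward of $\P^\star(\cdot\mid H_{t-1})$ under $\operatorname{sel}\circ\Phi$. Your explicit checks that $\Phi$ and the smallest-minimizer selection are Borel fill in measurability details that the paper only asserts.
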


\begin{proof}
By definition, $\widetilde f^{(t)}=\Phi(\widetilde U_{1:T}^{(t)})$ in distribution conditional on $H_{t-1}$.
Since $\hat x(\cdot)$ is a measurable function of $\Phi(\cdot)$ (it depends on $u_{1:T}$ only through
$\widehat f_{u_{1:T}}$ and the tie-break rule), we have
\(
X_t=\hat x(\widetilde U_{1:T}^{(t)}) = \argmin_{x\in\cX}\Phi(\widetilde U_{1:T}^{(t)})(x).
\)
Thus $X_t$ is the pushforward of $\widetilde U_{1:T}^{(t)}$ under the measurable map
$u_{1:T}\mapsto \argmin_x \Phi(u_{1:T})(x)$, which is exactly the same pushforward used to define $\widetilde X_t$.
\end{proof}

Lemma~\ref{lem:outcome-vs-function} shows that \emph{sampling missing outcomes and optimizing} and
\emph{sampling a function and optimizing} are mathematically the same operation: the latter is just the
pushforward of the former through a map from completed data to a loss function.
This equivalence is the continuous-action analogue of probability matching
(Lemma~\ref{lem:probmatching}).

\begin{remark}[Non-stationary environments]
\label{rem:nonstationary}
The function-valued viewpoint does \emph{not} inherently require stationarity; what stationarity buys is a \emph{time-invariant comparator} and a \emph{single} latent risk function. If the data-generating process is non-stationary, the relevant object becomes a sequence
$f^\star_1,f^\star_2,\dots$, where, for example, $f_t^\star(x):=\Eb{\ell(x,U_t)}$ may vary with $t$. One then typically studies \emph{dynamic regret} (tracking a changing benchmark) or imposes structure such as piecewise stationarity or bounded variation. Notably, even in the fully adversarial 1D BCO model---where an arbitrary sequence of convex loss functions is chosen---the minimax regret remains $\widetilde\Theta(\sqrt{T})$; the proof reduces the game to a Bayesian formulation and solves it
using a variant of Thompson sampling \citep{bubeck2015bco1d}. This indicates that stationarity is not necessary for
sublinear regret \emph{in principle}. In our paper, we adopt the stationary formulation because it matches the operational setting (a fixed environment generating repeated instances). If drift is expected, a natural extension is to encode time or seasonality as context \citealp{zhang2025contextual} or to use windowed conditioning in the generative model.
\end{remark}

%% file: sec-apx_theory.tex
\begin{table*}[t]
\ContinuedFloat
\centering
\small
\setlength{\tabcolsep}{6pt}
\renewcommand{\arraystretch}{1.15}
\begin{tabular}{p{0.26\linewidth} p{0.70\linewidth}}
\toprule
\textbf{Symbol} & \textbf{Meaning}\\

\midrule
\multicolumn{2}{l}{\textit{Histories and trajectories}}\\
$Z_t:=(X_t,O_t)$ & Interaction tuple at time $t$.\\
$H_t:=(Z_1,\dots,Z_t)$ & History up to time $t$; $H_0:=\varnothing$.\\
$X_{1:t},O_{1:t},D_{1:T}$ & Shorthand for sequences, e.g.\ $X_{1:t}=(X_1,\dots,X_t)$.\\

\midrule
\multicolumn{2}{l}{\textit{Policies and induced interaction law}}\\
$\pi=(\pi_t)_{t=1}^T$ & Policy; each $\pi_t(\cdot\mid h_{t-1})$ is a distribution over actions in $\mathcal{X}$.\\
$\P^\star$ & True environment governing latent demands (and thus observations via $\psi$).\\
$P_{\pi_\star}$ & Law of the full interaction history $H_T$ under environment $\P^\star$ and policy $\pi$.\\

\midrule
\multicolumn{2}{l}{\textit{Completion kernels}}\\
$q_t^\star(\cdot\mid h)$ & True one-step completion kernel: $\mathrm{Law}( D_t\mid H_{t-1}=h)$ on $\mathcal{D}=[0,B]$.\\
$q_{t,\theta}(\cdot\mid h)$ & Learned one-step completion kernel used online: $\mathrm{Law}_\theta(\widetilde D_t^{(t)}\mid H_{t-1}=h)$.\\
$Q_t^\star(\cdot\mid h)$ & True trajectory completion kernel: $\mathrm{Law}( D_{1:T}\mid H_{t-1}=h)$ on $\mathcal{D}^T$.\\
$Q_{t,\theta}(\cdot\mid h)$ & Learned trajectory completion kernel used online: $\mathrm{Law}_\theta(\widetilde D_{1:T}^{(t)}\mid H_{t-1}=h)$.\\

\midrule
\multicolumn{2}{l}{\textit{Induced action kernels}}\\
$K_t^\star(\cdot\mid h)$ & Induced (oracle) action kernel: $K_t^\star=(\varphi_t(\cdot,h))_{\#}Q_t^\star(\cdot\mid h)$.\\
$K_{t,\theta}(\cdot\mid h)$ & Learned induced action kernel: $K_{t,\theta}=(\varphi_t(\cdot,h))_{\#}Q_{t,\theta}(\cdot\mid h)$.\\

\midrule
\multicolumn{2}{l}{\textit{Observed kernels}}\\
$P^{\mathrm{obs}}_{h,x}$ & True conditional law of $O_t$ given $(H_{t-1},X_t)=(h,x)$.\\
$Q^{\mathrm{obs}}_{h,x}$ & Model conditional law of $O_t$ given $(H_{t-1},X_t)=(h,x)$ under parameter $\theta$.\\

\midrule
\multicolumn{2}{l}{\textit{Mismatch functionals}}\\
$\Delta^{\mathrm{comp}}_{T}(\theta;\pi_\theta)$ &
$\sum_{t=1}^T \mathbb{E}_{H_{t-1}\sim \P_{\pi_\theta}^\star}\!\big[\mathrm{KL}(Q_t^\star(\cdot\mid H_{t-1})\|Q_{t,\theta}(\cdot\mid H_{t-1}))\big]$.\\
$\Delta_{T}^{\mathrm{act}}(\theta)$ &
$\sum_{t=1}^T \mathbb{E}_{H_{t-1}\sim \P_{\pi_\theta}^\star}\!\big[\mathrm{KL}(K_{t,\theta}(\cdot\mid H_{t-1})\|K_t^\star(\cdot\mid H_{t-1}))\big]$.\\
$\Delta^{\mathrm{obs}}_{T}(\theta;\pi_\theta)$ &
$\sum_{t=1}^T \mathbb{E}_{(H_{t-1},X_t)\sim \P_{\pi_\theta}^\star}\!\big[\mathrm{KL}(P^{\mathrm{obs}}_{H_{t-1},X_t}\|Q^{\mathrm{obs}}_{H_{t-1},X_t})\big]$.\\
\bottomrule

\end{tabular}
\caption[]{Summary of notations used in proofs.}
\end{table*}

\section{Notations}
\label{sec:notation}

\paragraph{Censored NV  problem.}
We consider a horizon $T\in\mathbb{N}$ with rounds indexed by $t\in[T]:=\{1,\dots,T\}$. Let the \emph{action} space be $\mathcal{X}:=[0,B]$ and the \emph{demand} space be $\mathcal{D}:=[0,B]$.
The latent demand is $D_t\in\mathcal{D}$. The decision (order quantity) is $X_t\in\mathcal{X}$. Given $(X_t,D_t)$, the observed feedback is
\(
S_t:=\min\{D_t,X_t\}\in[0,B],
C_t:=\mathbf{1}\{D_t\le X_t\}\in\{0,1\},
O_t:=(S_t,C_t)\in\mathcal{O}:=[0,B]\times\{0,1\}.
\)
Equivalently, $O_t=\psi(X_t,D_t)$ where $\psi(x,d)=(\min\{d,x\},\mathbf{1}\{d\le x\})$.
We reserve the letter $C$ exclusively for the censoring indicator (no other object uses $C$). Define $Z_t:=(X_t,O_t)$ and the history (interaction trajectory) up to time $t$ as
\(H_t:=(Z_1,\dots,Z_t) = \bigl((X_s,O_s)\bigr)_{s=1}^t, H_0:=\varnothing.
\)
We write realizations as $h_t=(z_1,\dots,z_t)$, and use the shorthand $X_{1:t}=(X_1,\dots,X_t)$, $O_{1:t}=(O_1,\dots,O_t)$, etc.

\paragraph{Policies and induced laws.}
For each \(t\), the environment is specified by a Markov kernel
\(M^\star_t(\cdot \mid h_{t-1},x_t)\in\Delta(\mathcal{O}).\)
A (possibly randomized, history-dependent) policy is $\pi=(\pi_t)_{t=1}^T$ where each
$\pi_t(\cdot\mid h_{t-1})$ is a probability measure on $\mathcal{X}$.
Let $\P^\star$ denote the true environment (Bayesian model) governing latent demands.
Let $P_{\pi}^\star$ denote the probability law of the entire interaction history $H_T$
generated by $\P^\star$ when actions are chosen according to $\pi$ and observations are produced
via the censoring map $\psi$. The policy \(\pi\) is given by action kernels
\(
K^{\pi}_t(\cdot \mid h_{t-1})\in\Delta(\mathcal{X}), t=1,\dots,T,
\)
and induces a trajectory law \(\P^\star_{\pi}\) on \(Z_{1:T}=(X_1,O_1,\dots,X_T,O_T)\) through
\(
X_t \mid H_{t-1}\sim K^{\pi}_t(\cdot\mid H_{t-1})\), and
\(O_t \mid (H_{t-1},X_t)\sim M^\star_t(\cdot\mid H_{t-1},X_t).
\)

\paragraph{Completion kernels (trajectory-level conditional laws).}
At time $t$, given a realized history $h_{t-1}$, define the \emph{true completion kernel}
as the conditional law of a full latent trajectory:
$Q_t^\star(\,\cdot \mid h_{t-1}) := \mathrm{Law}\bigl(D_{1:T}\mid H_{t-1}=h_{t-1}\bigr), \text{a probability measure on }\mathcal{D}^T$. Let $p_\theta(\cdot\mid h_{t-1})$ be the learned generative model used to sample completions. Because observed right-censoring imposes hard feasibility constraints (e.g.\ if $C_s=1$ then $D_s=S_s$, and if $C_s=0$ then $D_s>X_s$ for $s<t$), we define the
\emph{learned completion kernel} as the conditional law actually used online after enforcing
these constraints: \(Q_{t,\theta}(\,\cdot \mid h_{t-1}) := \mathrm{Law}_\theta\bigl(\widetilde D_{1:T}^{(t)} \mid H_{t-1}=h_{t-1}\bigr),\)
where $\widetilde D_{1:T}^{(t)}\in\mathcal{D}^T$ denotes the completed trajectory sampled at time $t$. The \emph{true one-step completion kernel} is the conditional law of the next demand:
\(q_t^\star(\,\cdot \mid h_{t-1})
:= \mathrm{Law}\!\left( D_t \,\middle|\, H_{t-1}=h_{t-1}\right), \text{a probability measure on }\mathcal{D}=[0,B]\). The \emph{learned one-step completion kernel} is
\(q_{t,\theta}(\,\cdot \mid h_{t-1})
:= \mathrm{Law}_{\theta}\!\left(\widetilde D_t^{(t)} \,\middle|\, H_{t-1}=h_{t-1}\right)\), where $\widetilde D_t^{(t)}$ is the time-$t$ coordinate of the completion sampled online at round $t$.

\paragraph{Induced action kernels and observed kernels.}
Let $\varphi_t:\mathcal{D}^T\times \mathcal{H}_{t-1}\to\mathcal{X}$ be a measurable oracle map
that converts a completion into an action (possibly depending on the history).
In the repeated newsvendor, the canonical choice is empirical-risk minimization on the completion:
\(
\hat x(D_{1:T}) \in \arg\min_{x\in[0,B]} \frac{1}{T}\sum_{s=1}^T \ell(x,D_s), \text{and we can take }\varphi_t(D_{1:T},h_{t-1})=\hat x(D_{1:T}).
\)
Define the induced \emph{action kernels} (conditional laws of $X_t$ given history) via pushforward:
\(
K_t^\star(\,\cdot\mid h_{t-1}) := (\varphi_t(\cdot,h_{t-1}))_{\#} Q_t^\star(\,\cdot\mid h_{t-1}),
K_{t,\theta}(\,\cdot\mid h_{t-1}) := (\varphi_t(\cdot,h_{t-1}))_{\#} Q_{t,\theta}(\,\cdot\mid h_{t-1}),
\) where the pushforward is defined by $(f_{\#}\mu)(A)=\mu(f^{-1}(A))$ for measurable $A$. Moreover, let $P^{\mathrm{obs}}_{h,x}$ and $Q^{\mathrm{obs}}_{h,x}$ denote the conditional laws of $O_t$
given $(H_{t-1},X_t)=(h,x)$ under the true environment and the learned model, respectively.

\paragraph{Mismatch functionals.}We use three KL-based mismatch functionals that are defined as follows: 

\begin{enumerate}
\item On-policy completion mismatch:
\[
\Delta^{\mathrm{comp}}_{T}(\theta;\pi_\theta)
:=\sum_{t=1}^T \mathbb{E}_{H_{t-1}\sim \P_{\pi_\theta}^\star}\!\left[
\mathrm{KL}\!\left(Q_t^\star(\cdot\mid H_{t-1}) \,\big\|\, Q_{t,\theta}(\cdot\mid H_{t-1})\right)\right].
\]
\item Causal action mismatch:
\[
\Delta_{T}^{\mathrm{act}}(\theta)
:=\sum_{t=1}^T \mathbb{E}_{H_{t-1}\sim \P_{\pi_\theta}^\star}\!\left[
\mathrm{KL}\!\left(K_{t,\theta}(\cdot\mid H_{t-1}) \,\big\|\, K_t^\star(\cdot\mid H_{t-1})\right)\right].
\]
\item On-policy observed mismatch:
\[
\Delta^{\mathrm{obs}}_{T}(\theta;\pi_\theta)
:=\sum_{t=1}^T \mathbb{E}_{(H_{t-1},X_t)\sim \P_{\pi_\theta}^\star}\!\left[
\mathrm{KL}\!\left(P^{\mathrm{obs}}_{H_{t-1},X_t}\,\big\|\, Q^{\mathrm{obs}}_{H_{t-1},X_t}\right)\right].
\]
\end{enumerate}

\section{Theoretical Proofs}

\subsection{Standard Lemmas}

\begin{lemma}[Donsker--Varadhan change-of-measure {\cite[Thm.~4.6]{PolyanskiyWu2025}}]
\label{lem:DV}
Let $P\ll Q$ be probability measures on $(\Omega,\mathcal F)$ and let $f:\Omega\to\mathbb{R}$ be measurable with
$\mathbb E_Q[e^{f}]<\infty$. Then,
$\mathrm{KL}(P\|Q) = \sup_f \{\mathbb E_P[f]-\log\mathbb E_Q[e^f]\}$. Equivalently, we have the following form
\[
\mathbb E_P[f] \;\le\; \mathrm{KL}(P\|Q) + \log \mathbb E_Q[e^{f}].
\]
\end{lemma}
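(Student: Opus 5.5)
The plan is to prove the inequality through the Gibbs (exponentially tilted) measure, and then get the supremum identity by exhibiting a (near-)optimal $f$. If $\mathrm{KL}(P\|Q)=\infty$ the inequality is trivial, so I will assume $\mathrm{KL}(P\|Q)<\infty$. Write $\rho:=dP/dQ$, so that $\mathrm{KL}(P\|Q)=\mathbb E_P[\log\rho]$.

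\emph{Step 1 (bounded $f$).} Suppose first that $f$ is bounded. Define the tilted probability measure $G$ by $dG := e^{f}\,dQ/\mathbb E_Q[e^{f}]$. Then $G$ and $Q$ are mutually absolutely continuous, so $P\ll G$ with $dP/dG = \rho\, e^{-f}\,\mathbb E_Q[e^f]$. Taking $\mathbb E_P[\log(\cdot)]$ of this density gives the key identity
\[
\mathrm{KL}(P\|G) \;=\; \mathrm{KL}(P\|Q) - \mathbb E_P[f] + \log \mathbb E_Q[e^{f}].
\]
All terms here are finite because $f$ is bounded and $\mathrm{KL}(P\|Q)<\infty$. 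Since $\mathrm{KL}(P\|G)\ge 0$ by Gibbs' inequality (Jensen applied to $-\log$), we obtain $\mathbb E_P[f]\le \mathrm{KL}(P\|Q)+\log\mathbb E_Q[e^f]$. Equality holds exactly when $G=P$, i.e.\ when $f=\log\rho$ up to an additive constant.

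\emph{Step 2 (general $f$ with $\mathbb E_Q[e^f]<\infty$).} This extension is the main technical obstacle, because $\mathbb E_P[f]$ must be shown to be well defined.
\begin{itemize}
\item I first truncate from below, setting $f_m:=\max\{f,-m\}$. Then $f_m\ge f$, so $\mathbb E_Q[e^{f_m}]\le \mathbb E_Q[e^f]+e^{-m}<\infty$.
\item Next I truncate from above, setting $f_{m,n}:=\min\{f_m,n\}$, which is bounded. Step 1 gives $\mathbb E_P[f_{m,n}]\le \mathrm{KL}(P\|Q)+\log\mathbb E_Q[e^{f_{m,n}}]$. Letting $n\to\infty$, monotone convergence applies on both sides: $f_{m,n}$ increases in $n$ and is bounded below by $-m$. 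This yields the inequality for $f_m$, and in particular shows $\mathbb E_P[f^+]<\infty$.
\item Finally I let $m\to\infty$. On the left, monotone convergence applies to the decreasing sequence $f_m$, which is bounded above by the $P$-integrable $f^+$; hence $\mathbb E_P[f_m]\downarrow\mathbb E_P[f]\in[-\infty,\infty)$. On the right, dominated convergence applies with dominating function $e^{f}+1$.
\end{itemize}
So the inequality holds whenever its left side makes sense.

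\emph{Step 3 (the supremum).} The inequality shows that the supremum is at most $\mathrm{KL}(P\|Q)$. For the reverse direction, take $f=\log\rho$, where equality holds formally by Step 1. To avoid integrability issues I will instead use the truncations $f_n:=\max\{\min\{\log\rho,n\},-n\}$ and check that $\mathbb E_P[f_n]-\log\mathbb E_Q[e^{f_n}]\to \mathrm{KL}(P\|Q)$.
\begin{itemize}
\item For the first term, $\mathbb E_P[f_n]\to\mathbb E_P[\log\rho]$. The positive part converges by monotone convergence, and the negative part $\mathbb E_P[(\log\rho)^-]\le \mathbb E_Q[\rho\log^-\rho]\le 1/e$ is finite, so it converges by dominated convergence.
\item For the second term, $\mathbb E_Q[e^{f_n}]\to \mathbb E_Q[\rho]=1$ by dominated convergence, with dominating function $\rho+1$.
\end{itemize}
When $\mathrm{KL}(P\|Q)=\infty$, the same truncations send the objective to $+\infty$, which matches the identity.
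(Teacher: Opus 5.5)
Your proof is correct and complete. The paper gives no argument of its own for this lemma and simply defers to Polyanskiy--Wu (Thm.~4.6). Your route is the standard proof of that result: the tilted measure $dG\propto e^{f}\,dQ$ together with the identity $\mathrm{KL}(P\|G)=\mathrm{KL}(P\|Q)-\mathbb E_P[f]+\log\mathbb E_Q[e^{f}]\ge 0$. Your truncations in Steps~2--3 correctly establish that $\mathbb E_P[f]$ is well defined and that the supremum is attained in the limit. The paper only invokes the lemma with the bounded function $f=\lambda R_T$, so your Step~1 alone already covers its use in the proof of Theorem~\ref{thm:sec25_deploy}.
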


\begin{lemma}[Conditional Hoeffding's lemma {\cite[Defn.~4.15]{PolyanskiyWu2025}}]
\label{lem:Hoeffding}
Let $\mathcal G\subseteq\mathcal F$ be a sub-$\sigma$-algebra and let $X$ be a real-valued
random variable such that $X\in[a,b]$ $Q$-almost surely for some $a<b$.
Then for all $\lambda\in\R$,
\[
\E_Q\!\left[\exp\!\Big(\lambda\big(X-\E_Q[X\mid\mathcal G]\big)\Big)\,\Big|\,\mathcal G\right]
\;\le\;
\exp\!\left(\frac{\lambda^2(b-a)^2}{8}\right)
\qquad Q\text{-a.s.}
\]
\end{lemma}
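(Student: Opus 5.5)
The plan is to run the classical convexity proof of Hoeffding's lemma \emph{conditionally}. The only adjustment is that the centering constant is now the $\mathcal G$-measurable random variable $m:=\E_Q[X\mid\mathcal G]$, which we treat as a constant inside conditional expectations.

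First, since $X\in[a,b]$ $Q$-a.s., monotonicity of conditional expectation gives $m\in[a,b]$ $Q$-a.s. Set $Y:=X-m$, $\alpha:=a-m\le 0$ and $\beta:=b-m\ge 0$. Then $Y\in[\alpha,\beta]$ a.s., $\beta-\alpha=b-a>0$, and $\E_Q[Y\mid\mathcal G]=0$. By convexity of $y\mapsto e^{\lambda y}$ on $[\alpha,\beta]$ we have, pointwise,
\[
e^{\lambda Y}\le \frac{\beta-Y}{b-a}\,e^{\lambda\alpha}+\frac{Y-\alpha}{b-a}\,e^{\lambda\beta}.
\]
We take $\E_Q[\cdot\mid\mathcal G]$ of both sides. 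Because $\alpha$ and $\beta$ are $\mathcal G$-measurable, they can be pulled out, and $\E_Q[Y\mid\mathcal G]=0$ then yields
\[
\E_Q\bigl[e^{\lambda Y}\mid\mathcal G\bigr]\le \frac{\beta e^{\lambda\alpha}-\alpha e^{\lambda\beta}}{b-a}\quad Q\text{-a.s.}
\]

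Next, introduce $p:=-\alpha/(b-a)\in[0,1]$, which is $\mathcal G$-measurable, and $u:=\lambda(b-a)$. With these, the right-hand side equals $\exp(L_p(u))$, where
\[
L_p(u):=-pu+\log\bigl(1-p+pe^{u}\bigr).
\]
For each fixed $p\in[0,1]$ this is the deterministic function from the unconditional proof. One checks that $L_p(0)=0$ and $L_p'(0)=0$. Moreover, $L_p''(u)=q(1-q)\le 1/4$, where $q=pe^u/(1-p+pe^u)\in[0,1]$. Taylor's theorem with remainder then gives $L_p(u)\le u^2/8=\lambda^2(b-a)^2/8$. This bound holds for every $p\in[0,1]$, so it holds in particular at the random value $p(\omega)$, which yields the claim $Q$-a.s.

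The main (though mild) obstacle is measure-theoretic rather than analytic. We need to justify pulling the random quantities $\alpha,\beta,e^{\lambda\alpha},e^{\lambda\beta}$ out of the conditional expectation. They are bounded and $\mathcal G$-measurable, so the standard ``taking out what is known'' property applies, and integrability is not an issue. We also need the a.s. qualifiers to be handled consistently: we work on the full-measure event where $X\in[a,b]$ and $m\in[a,b]$, and we choose a version of $m$ with values in $[a,b]$. The degenerate cases $p=0$ or $p=1$ (that is, $m=b$ or $m=a$) need no separate treatment, since then $L_p\equiv 0$ and the bound is trivial. An alternative route would disintegrate $Q$ via a regular conditional distribution of $X$ given $\mathcal G$, which exists since $X$ is real-valued, and apply the unconditional Hoeffding lemma fiberwise. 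The direct convexity argument above avoids that machinery, so I would present it as the main proof.
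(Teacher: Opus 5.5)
Your proof is correct. It is the standard chord/convexity proof of Hoeffding's lemma, run conditionally, and each step goes through:
\begin{itemize}
\item the chord bound holds pointwise on the full-measure event where $Y\in[\alpha,\beta]$;
\item the bounded $\mathcal G$-measurable factors $\alpha,\beta,e^{\lambda\alpha},e^{\lambda\beta}$ can be taken out of $\mathbb{E}_Q[\cdot\mid\mathcal G]$;
\item since $\lambda\alpha=-pu$ and $\lambda\beta=(1-p)u$, the right-hand side is exactly $(1-p)e^{-pu}+pe^{(1-p)u}=\exp(L_p(u))$;
\item the bound $L_p(u)\le u^2/8$ holds uniformly in $p\in[0,1]$, so substituting the random value $p(\omega)$ is legitimate.
\end{itemize}

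The paper gives no argument for this lemma. It imports it as a standard fact by citing Polyanskiy--Wu, and the cited item is labelled as a definition. The lemma is never invoked explicitly; its role is as the one-step ingredient behind the martingale MGF bound in Lemma~\ref{lem:md_mgf}. So your proposal supplies the derivation the paper takes for granted rather than duplicating one.

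Consider two alternatives: applying the unconditional lemma fiberwise through a regular conditional distribution of $X$ given $\mathcal G$, or an information-theoretic route via Donsker--Varadhan and Pinsker, which would need a conditional variational formula. Compared with these, your route is more elementary and needs no disintegration.

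Your route also gives slightly more for free. The argument only uses that $\alpha,\beta$ are $\mathcal G$-measurable with $\beta-\alpha=b-a$. It therefore extends verbatim to bounded $\mathcal G$-measurable endpoints $A\le X\le B$ with $B-A\le c$; on $\{A=B\}\in\mathcal G$ the bound is trivial. That is exactly the form needed if the endpoints $A_t,B_t$ in Lemma~\ref{lem:md_mgf} are allowed to be $\mathcal F_{t-1}$-measurable rather than constant.
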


\begin{lemma}[MGF bound for adapted bounded martingale differences {\cite[Cor.~2.20]{wainwright2019hds}}]
\label{lem:md_mgf}
Let $(d_t)_{t=1}^T$ be an adapted sequence such that for each $t$,
\(
\E_Q[d_t\mid \mathcal F_{t-1}] = 0, Q\text{-a.s.}
\)
Assume there exists a deterministic $c\ge 0$ such that \(
d_t \in [A_t,B_t], Q\text{-a.s.}\), and \(B_t-A_t \le c, Q\text{-a.s.}
\)
Then for all $\lambda\in\R$, we have the following
\[
\log \E_Q\!\left[\exp\!\Big(\lambda\sum_{t=1}^T d_t\Big)\right]
\le
\frac{\lambda^2 T c^2}{8}.
\]
\end{lemma}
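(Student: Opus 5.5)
The statement is Lemma~\ref{lem:md_mgf}: an MGF bound for adapted bounded martingale differences. I will prove it by peeling off one increment at a time with the tower property and Lemma~\ref{lem:Hoeffding}.

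Let $S_k:=\sum_{t=1}^k d_t$, so $S_0=0$, and fix $\lambda\in\R$. The claim is
\[
\E_Q\bigl[e^{\lambda S_T}\bigr]\le e^{\lambda^2 T c^2/8}.
\]
I will show by induction on $k$ that $\E_Q[e^{\lambda S_k}]\le e^{\lambda^2 k c^2/8}$. The base case $k=0$ is trivial.

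For the inductive step, note that $S_{k-1}$ is $\mathcal F_{k-1}$-measurable. Conditioning on $\mathcal F_{k-1}$ therefore gives
\[
\E_Q\bigl[e^{\lambda S_k}\bigr]=\E_Q\Bigl[e^{\lambda S_{k-1}}\,\E_Q\bigl[e^{\lambda d_k}\mid\mathcal F_{k-1}\bigr]\Bigr].
\]
Exchanging the $\mathcal F_{k-1}$-measurable factor with the conditional expectation is legitimate because everything is nonnegative. Also, $d_k$ is bounded, so $e^{\lambda S_k}$ is bounded and all expectations are finite.

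It then suffices to show that, $Q$-a.s.,
\[
\E_Q\bigl[e^{\lambda d_k}\mid\mathcal F_{k-1}\bigr]\le e^{\lambda^2c^2/8}.
\]
The main obstacle is that Lemma~\ref{lem:Hoeffding} is stated with deterministic endpoints $a<b$, whereas here the endpoints $A_k,B_k$ may be random. I will assume, as is standard for this corollary, that $A_k,B_k$ are $\mathcal F_{k-1}$-measurable; this is the predictable-range setting of Wainwright's Cor.~2.20.

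Given that, I re-prove the conditional Hoeffding step directly. Since $\E_Q[d_k\mid\mathcal F_{k-1}]=0$, convexity of $y\mapsto e^{\lambda y}$ on $[A_k,B_k]$ gives
\[
\E_Q\bigl[e^{\lambda d_k}\mid\mathcal F_{k-1}\bigr]\le \frac{B_k e^{\lambda A_k}-A_k e^{\lambda B_k}}{B_k-A_k}.
\]
The usual calculus argument then bounds the right-hand side by $e^{\lambda^2(B_k-A_k)^2/8}$, treating $A_k,B_k$ as fixed given $\mathcal F_{k-1}$. That argument writes the right-hand side as $e^{L(u)}$ with $u=\lambda(B_k-A_k)$, and uses $L(0)=L'(0)=0$ and $L''\le 1/4$.

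Alternatively, I can apply Lemma~\ref{lem:Hoeffding} via a regular conditional distribution given $\mathcal F_{k-1}$, under which $A_k,B_k$ are constants. Either way, using $B_k-A_k\le c$ and monotonicity of the exponential yields $\E_Q[e^{\lambda d_k}\mid\mathcal F_{k-1}]\le e^{\lambda^2c^2/8}$.

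Substituting this bound and applying the inductive hypothesis gives $\E_Q[e^{\lambda S_k}]\le e^{\lambda^2c^2/8}\,e^{\lambda^2(k-1)c^2/8}$, which completes the induction. Taking $k=T$ and logarithms yields $\log\E_Q[\exp(\lambda\sum_{t=1}^T d_t)]\le \lambda^2Tc^2/8$.
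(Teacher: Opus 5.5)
Your proof is correct and is essentially the standard Azuma--Hoeffding argument behind the cited corollary; the paper gives no proof of its own, only the citation. Like that argument, you peel off one increment at a time with the tower property and bound each conditional MGF by a conditional Hoeffding step. Your added hypothesis that $A_t,B_t$ are $\mathcal F_{t-1}$-measurable is not just conventional but necessary: otherwise one could take $A_t=B_t=d_t$ and $c=0$, and the claimed bound $\log\mathbb{E}_Q[e^{\lambda\sum_t d_t}]\le 0$ fails already for a single Rademacher increment. It costs nothing here, because in the paper's only use (Lemma~\ref{lem:mgf-adapted}) the endpoints $a-m_t$ and $a+B'-m_t$ are deterministic.
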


\begin{lemma}[Hellinger distance controlled by KL {\cite[Lemma A.5]{foster2021statistical}}]
\label{lem:hellinger-le-kl}
For any probability measures $P,Q$ with $P\ll Q$,
\[
D_H^2(P,Q) \le \mathrm{KL}(P\|Q).
\]
\end{lemma}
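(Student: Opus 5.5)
The plan is to prove the inequality directly from the pointwise bound $-\log u \ge 1-u$ for $u>0$, after writing both divergences against a common dominating measure. We use the convention of \citet{foster2021statistical}, $D_H^2(P,Q)=\int(\sqrt{dP}-\sqrt{dQ})^2$, without the factor $1/2$. Under a different normalization the inequality only becomes easier.

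Fix a $\sigma$-finite measure $\mu$ dominating both measures, for instance $\mu=P+Q$, and write $p=dP/d\mu$ and $q=dQ/d\mu$. Expanding the square gives
\[
D_H^2(P,Q)=2\bigl(1-\mathrm{BC}(P,Q)\bigr),
\qquad
\mathrm{BC}(P,Q):=\int\sqrt{pq}\,d\mu .
\]
Since $\sqrt{pq}$ vanishes on $\{p=0\}$, we can rewrite $\mathrm{BC}(P,Q)=\mathbb E_P\bigl[\sqrt{q/p}\,\bigr]$, with the expectation taken over $\{p>0\}$.

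Next, because $P\ll Q$, the density ratio $dP/dQ$ equals $p/q$ $P$-a.s., so
\[
\mathrm{KL}(P\|Q)=\mathbb E_P\bigl[-\log(q/p)\bigr]=2\,\mathbb E_P\bigl[-\log\sqrt{q/p}\,\bigr].
\]
Apply $-\log u\ge 1-u$ pointwise with $u=\sqrt{q/p}$; on the $P$-null set where $q=0$ both sides are handled by the convention $-\log 0=+\infty$. This yields
\[
\mathrm{KL}(P\|Q)\ \ge\ 2\bigl(1-\mathbb E_P[\sqrt{q/p}\,]\bigr)=2\bigl(1-\mathrm{BC}(P,Q)\bigr)=D_H^2(P,Q).
\]

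The only delicate point is integrability. If $\mathrm{KL}(P\|Q)=\infty$ the claim is trivial. Otherwise the negative part of $-\log(q/p)$ is dominated by $q/p$, which is $P$-integrable because $\mathbb E_P[q/p]\le 1$. Hence the expectation is well defined and the pointwise inequality can be integrated term by term. Apart from this, the argument is a routine calculation.
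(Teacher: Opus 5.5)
Your proof is correct, and it takes a different route from the paper only in the trivial sense that the paper gives no proof at all: it imports the inequality from \citet{foster2021statistical} and never says which normalization of $D_H^2$ it uses. You give the standard self-contained argument. You write $D_H^2=2(1-\mathrm{BC})$, express both sides as $P$-expectations of $u=\sqrt{q/p}$, and integrate $-\log u\ge 1-u$. You also handle integrability correctly, via $(\log(q/p))_+\le q/p$ and $\mathbb E_P[q/p]=Q(p>0)\le 1$.

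The citation buys brevity and presumably consistency with the companion Lemma~\ref{lem:kl-le-hellinger}, which comes from the same source. Your proof buys verification and, more usefully, an explicit convention. That convention matters, because the paper chains the two lemmas in Lemma~\ref{cor:B13-rev-by-fwd-comp}: $\mathrm{KL}(P\|Q)\le(2+\log V)D_H^2(P,Q)=(2+\log V)D_H^2(Q,P)\le(2+\log V)\mathrm{KL}(Q\|P)$. The constant $2+\log V$ is valid only for the un-halved convention $\int(\sqrt{dP}-\sqrt{dQ})^2$. Near $P=Q$ one has $\mathrm{KL}\approx 2D_H^2$ in that convention, but $\mathrm{KL}\approx 4D_H^2$ with the factor $1/2$, while $2+\log V\to 2$.

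So your aside that the halved convention ``only makes the inequality easier'' is true for this lemma in isolation. It would, however, invalidate the companion bound. Present the un-halved convention as a standing choice forced by how the lemma is used, not as a harmless normalization remark.
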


\begin{lemma}[KL controlled by Hellinger distance {\cite[Lemma A.10]{foster2021statistical}}]
\label{lem:kl-le-hellinger}
Let $P,Q$ be probability measures on a measurable space $(\mathcal X,\mathcal F)$. If, for some $V\in[1,\infty)$, $\sup_{F\in\mathcal F}\frac{P(F)}{Q(F)} \le V$, then we have the following
\[
\mathrm{KL}(P\|Q)\le (2+\log V)\,D_H^2(P,Q).
\]
\end{lemma}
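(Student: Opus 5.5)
The plan is to reduce the inequality to a pointwise inequality between the integrands of the two divergences, written in terms of the likelihood ratio. I use the convention of \cite{foster2021statistical}, $D_H^2(P,Q)=\int(\sqrt{dP}-\sqrt{dQ})^2$ with no factor $1/2$; under the normalized convention the constant would have to be doubled.

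\textbf{Step 1 (bounded density ratio).} Taking $F$ with $Q(F)=0$ shows $P(F)\le V\cdot 0$ (reading the ratio appropriately), so $P\ll Q$. Let $r=dP/dQ$. If $Q(\{r>V\})>0$, then taking $F=\{r>V\}$ gives $P(F)>V\,Q(F)$, a contradiction. Hence $0\le r\le V$ holds $Q$-a.s.

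\textbf{Step 2 (rewrite both divergences).} Since $\E_Q[r]=1$, we can add $-r+1$ under the integral without changing KL, and write
\[
\mathrm{KL}(P\|Q)=\E_Q\bigl[\phi(r)\bigr],\quad \phi(r):=r\log r-r+1\ge 0,
\qquad
D_H^2(P,Q)=\E_Q\bigl[(\sqrt r-1)^2\bigr].
\]
It therefore suffices to prove the pointwise bound $\phi(r)\le(2+\log V)(\sqrt r-1)^2$ for every $r\in[0,V]$. Substitute $s=\sqrt r\in[0,\sqrt V]$, so that $\phi=2s^2\log s-s^2+1$.

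\textbf{Step 3 (case $s\le 1$).} I will show $\phi\le 2(s-1)^2$, which suffices because $V\ge1$. Set $u(s)=2s^2\log s-3s^2+4s-1$. Then $u(1)=0$, $u'(s)=4s\log s-4s+4$, $u'(1)=0$, and $u''(s)=4\log s\le 0$. So $u'$ is decreasing on $(0,1]$ with $u'(1)=0$, hence $u'\ge0$ there. Thus $u$ is increasing up to $u(1)=0$, so $u\le 0$ on $[0,1]$ (the value at $s=0$ is taken by continuity).

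\textbf{Step 4 (case $s\ge 1$).} Since $\log V\ge\log r=2\log s$, it suffices to show
\[
k(s):=(2+2\log s)(s-1)^2-2s^2\log s+s^2-1\ge 0 .
\]
We have $k(1)=0$, and a direct differentiation gives
\[
k'(s)=\frac{2(s-1)^2}{s}+4(s-1)-4\log s\ge 0,
\]
using $\log s\le s-1$. Hence $k\ge0$ on $[1,\infty)$.

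\textbf{Step 5 (conclude).} Integrating the pointwise bound against $Q$ yields $\mathrm{KL}(P\|Q)\le(2+\log V)\,D_H^2(P,Q)$.

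The main obstacle is Step 4: the naive uniform constant fails for large $r$, so the $\log V$ factor must be tied to the ratio bound through $2\log s\le\log V$. The derivative computation there is the only delicate calculation; the remaining steps are routine.
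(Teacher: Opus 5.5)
Your argument is correct. It differs from the paper only in that the paper gives no proof at all: it imports the inequality from \cite[Lemma A.10]{foster2021statistical} and uses it once, as a black box, in Lemma~\ref{cor:B13-rev-by-fwd-comp}.

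Your route is a self-contained verification. You reduce to $r=dP/dQ\le V$ $Q$-a.s., write both divergences as $Q$-expectations of $\phi(r)=r\log r-r+1$ and $(\sqrt r-1)^2$, and prove the pointwise inequality $\phi(r)\le(2+\log V)(\sqrt r-1)^2$ by splitting at $r=1$. The computations for $u$, $u'$, $u''$ and for $k'$ check out. Step~4 actually gives the sharper pointwise bound $\phi(r)\le(2+\log r)(\sqrt r-1)^2$ for $r\ge 1$, so $\log V$ is needed only to make the constant uniform before integrating. Your reading of the hypothesis on $Q$-null sets as $P(F)\le V\,Q(F)$ matches how the paper applies it, since Assumption~\ref{assump:B-comp-overlap} states absolute continuity separately.

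The citation buys brevity. Your proof buys an explicit Hellinger convention (unnormalized $D_H^2$), which the constant $2+\log V$ depends on and which the paper never states. For the paper's only use of the lemma, this convention issue is harmless. Under the normalized convention your bound becomes $2(2+\log V)$. At the same time Lemma~\ref{lem:hellinger-le-kl} improves to $D_H^2\le\tfrac12\mathrm{KL}$. So the reverse-to-forward bound with factor $2+\log V_{\mathrm{comp}}$ in Lemma~\ref{cor:B13-rev-by-fwd-comp} is the same under either convention.
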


\begin{lemma}[Data processing for KL {\cite[Thm.~7.4]{PolyanskiyWu2025}}]]
\label{lem:dp-kl}
Let $\psi:(\Omega,\mathcal F)\to(\Omega',\mathcal F')$ be measurable and let $P\ll Q$ on $(\Omega,\mathcal F)$.
Then
\[
\mathrm{KL}(\psi_\sharp P \,\|\, \psi_\sharp Q) \le \mathrm{KL}(P\|Q).
\]
\end{lemma}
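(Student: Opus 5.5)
The plan is to derive the inequality directly from the Donsker--Varadhan variational formula (Lemma~\ref{lem:DV}), applied on the target space $(\Omega',\mathcal F')$. The reduction rests on one observation: a test function on $\Omega'$ pulls back along $\psi$ to a test function on $\Omega$ with the same expectations. Write $P' := \psi_\sharp P$ and $Q' := \psi_\sharp Q$.

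\emph{Absolute continuity.} First I check that the left-hand side is governed by the variational formula, i.e.\ that $P'\ll Q'$. If $A\in\mathcal F'$ satisfies $Q'(A)=Q(\psi^{-1}(A))=0$, then $P\ll Q$ gives $P(\psi^{-1}(A))=0$, which is $P'(A)=0$.

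\emph{Pulling back test functions.} Fix a bounded measurable $g:\Omega'\to\mathbb R$ and set $f:=g\circ\psi$. Then $f$ is measurable and bounded, so $\E_Q[e^f]<\infty$. The change-of-variables identity for pushforward measures gives
\[
\E_{P'}[g]=\E_P[f]
\qquad\text{and}\qquad
\log\E_{Q'}[e^{g}]=\log\E_Q[e^{f}].
\]
The inequality form of Lemma~\ref{lem:DV} on $\Omega$ then yields
\[
\E_{P'}[g]-\log\E_{Q'}[e^{g}] \;\le\; \mathrm{KL}(P\|Q).
\]

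\emph{Taking the supremum.} I now take the supremum over $g$ and invoke the equality form of Lemma~\ref{lem:DV} on $\Omega'$ for the pair $P'\ll Q'$, which gives $\mathrm{KL}(P'\|Q')\le\mathrm{KL}(P\|Q)$. If $\mathrm{KL}(P\|Q)=\infty$ there is nothing to prove.

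\emph{Main obstacle.} The step that needs care is this last one: the supremum in the variational formula must be attained or approached by bounded test functions, since those are the only ones for which the pullback argument is automatically integrable. Lemma~\ref{lem:DV} ranges over all $f$ with $\E_Q[e^f]<\infty$, so I will argue that restricting to bounded $g$ loses nothing. The natural near-maximizer is the truncated log-likelihood ratio $g_n:=\max\{-n,\min\{n,\log\tfrac{dP'}{dQ'}\}\}$. By monotone and dominated convergence, $\E_{P'}[g_n]-\log\E_{Q'}[e^{g_n}]\to\mathrm{KL}(P'\|Q')$, including when the limit is $+\infty$.

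\emph{Fallback.} If this truncation argument becomes messy, I would switch to the classical route. Write $\frac{dP}{dQ}=L$ and note that $\frac{dP'}{dQ'}\circ\psi=\E_Q[L\mid\sigma(\psi)]$. The inequality then follows from conditional Jensen applied to the convex function $u\mapsto u\log u$.
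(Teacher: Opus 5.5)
The paper does not prove this lemma; it cites Polyanskiy--Wu, where it is a case of the data-processing inequality for $f$-divergences. The standard proof of that inequality is essentially your fallback: with $L:=\tfrac{dP}{dQ}$ one has $\tfrac{dP'}{dQ'}\circ\psi=\mathbb{E}_Q[L\mid\sigma(\psi)]$, and conditional Jensen for $u\mapsto u\log u$ (bounded below by $-1/e$) finishes the argument. Your main proof is correct and takes a different route. It derives data processing from the variational formula the paper already states as Lemma~\ref{lem:DV}: functions $g\circ\psi$ are admissible test functions on $\Omega$ with the same objective value, so their supremum cannot exceed $\mathrm{KL}(P\|Q)$. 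The only delicate point is the one you flagged, and your truncation handles it. To make the convergence claim precise, first note that $e^{g_n}\le\max\{1,\tfrac{dP'}{dQ'}\}$, which is $Q'$-integrable, so $\mathbb{E}_{Q'}[e^{g_n}]\to 1$. Next, $\mathbb{E}_{P'}\bigl[(\log\tfrac{dP'}{dQ'})^-\bigr]\le 1/e$, so monotone convergence applied separately to the positive and negative parts gives $\mathbb{E}_{P'}[g_n]\to\mathrm{KL}(P'\|Q')$, including the infinite case. This limit computation already bounds $\mathrm{KL}(P'\|Q')$ by $\mathrm{KL}(P\|Q)$, so the appeal to the equality form of Lemma~\ref{lem:DV} on $\Omega'$ is not needed. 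As for trade-offs, your variational route is economical within this paper, since it reuses a lemma the authors already rely on and reduces data processing to a pullback observation. The Jensen route is more elementary and needs no approximation step. It also applies verbatim to every $f$-divergence and, via strict convexity, identifies the equality case ($L$ being $\sigma(\psi)$-measurable, i.e.\ $\psi$ sufficient for $\{P,Q\}$).
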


\begin{lemma}[Two-variable chain rule for KL {\cite[Lemma 40]{suggala2024second}}]] \label{lem:two-var-chain-rule}
Let $(X,Y)$ have joint laws $P_{XY}$ and $Q_{XY}$ on a common measurable space with $P_{XY}\ll Q_{XY}$.
Let $P_X,Q_X$ be the marginals, and $P_{Y\mid X},Q_{Y\mid X}$ be regular conditional distributions.
Then
\[
\mathrm{KL}(P_{XY}\|Q_{XY})
=
\mathrm{KL}(P_X\|Q_X)
+
\mathbb{E}_{X\sim P_X}\Big[\mathrm{KL}\big(P_{Y\mid X}(\cdot\mid X)\,\|\,Q_{Y\mid X}(\cdot\mid X)\big)\Big].
\]
\end{lemma}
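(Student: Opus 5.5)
This is the two-variable chain rule for KL, Lemma~\ref{lem:two-var-chain-rule}. I would prove it directly by factorizing the Radon--Nikodym derivative of the joint laws.

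\emph{Step 1: factor the density.} Since $P_{XY}\ll Q_{XY}$, pushing forward under the projection $(x,y)\mapsto x$ gives $P_X\ll Q_X$. Take regular conditional distributions, which exist on the standard Borel spaces used throughout (e.g.\ $[0,B]$ and its products). I would then show that for $P_X$-a.e.\ $x$ we have $P_{Y\mid X}(\cdot\mid x)\ll Q_{Y\mid X}(\cdot\mid x)$, together with the factorization
\[
\frac{dP_{XY}}{dQ_{XY}}(x,y)=\frac{dP_X}{dQ_X}(x)\,\frac{dP_{Y\mid X}(\cdot\mid x)}{dQ_{Y\mid X}(\cdot\mid x)}(y)
\qquad P_{XY}\text{-a.s.}
\]
Both claims come from writing $P_{XY}(A\times C)=\int_A P_{Y\mid X}(C\mid x)\,P_X(dx)$ and the analogous formula under $Q$. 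One then checks that the candidate product density integrates correctly against $Q_{XY}$ on rectangles and extends by a $\pi$--$\lambda$ argument. Uniqueness of Radon--Nikodym derivatives and of disintegrations then identifies the conditional density.

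\emph{Step 2: take logs and integrate.} Take the logarithm of the factorization and integrate against $P_{XY}$. The first factor integrates to $\mathrm{KL}(P_X\|Q_X)$. By the tower property under $P_{XY}=P_X\otimes P_{Y\mid X}$, the second factor integrates to $\mathbb E_{X\sim P_X}[\mathrm{KL}(P_{Y\mid X}(\cdot\mid X)\|Q_{Y\mid X}(\cdot\mid X))]$.

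\emph{Main obstacle: splitting the integral legitimately.} Writing $\int(\log a+\log b)=\int\log a+\int\log b$ needs care when a term could be $+\infty$ or the negative parts are not integrable. I would handle this in two moves. First, the negative parts of $\log\frac{dP}{dQ}$ are always integrable under $P$, since $\int(\log\frac{dP}{dQ})_-\,dP\le \int \frac{dQ}{dP}\,dP\le 1$ by $u\log u\ge u-1$; the same bound holds for the marginal and for each conditional density. Second, each piece is therefore a well-defined element of $(-\infty,+\infty]$, so the identity holds in the extended reals, with both sides infinite simultaneously. Alternatively, one could derive it from the Donsker--Varadhan formula (Lemma~\ref{lem:DV}), but the direct density argument is cleaner.
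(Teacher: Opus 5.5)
The paper gives no proof of this lemma. It is imported from the cited reference and used only inside Lemma~\ref{lem:AR-KL-decomp}, iterated on $[0,B]^T$. Your argument is the standard self-contained proof and is correct: factor $dP_{XY}/dQ_{XY}$ into a marginal density times a conditional density, then split $\int\log$ against $P_{XY}$. The bound $\int(\log r)_-\,dP\le\int_{\{r>0\}}r^{-1}\,dP=Q(r>0)\le 1$, with $r=dP/dQ$, is exactly what makes the split valid in $(-\infty,+\infty]$. Read your $\int\frac{dQ}{dP}\,dP$ that way, since $Q\not\ll P$ in general. Compared with the bare citation, your route makes the measure-theoretic hypothesis visible. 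I would tighten two points.

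\emph{Order of construction in Step~1.} You should construct the conditional density rather than verify a ``candidate'' one, because you do not yet know it exists. Set $r_X(x):=\int r(x,y)\,Q_{Y\mid X}(dy\mid x)$; this is a version of $dP_X/dQ_X$. Let $\rho:=r/r_X$ on $\{0<r_X<\infty\}$, which has full $P_X$-measure. Check on rectangles that $\rho(x,\cdot)\,Q_{Y\mid X}(\cdot\mid x)$ disintegrates $P_{XY}$ over $P_X$, then conclude by a.e.\ uniqueness of disintegrations. This gives three things at once:
\begin{itemize}
\item conditional absolute continuity;
\item the factorization $r=r_X\rho$, $P_{XY}$-a.s.;
\item a jointly measurable conditional density, which your Tonelli step and the measurability of $x\mapsto\mathrm{KL}(P_{Y\mid X}(\cdot\mid x)\|Q_{Y\mid X}(\cdot\mid x))$ both need.
\end{itemize}

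\emph{Role of the standard Borel hypothesis.} It does more than guarantee that regular conditional distributions exist. The uniqueness step needs a countably generated $\sigma$-algebra on the $Y$-space, and without it the lemma as literally stated (arbitrary versions of $P_{Y\mid X}$ and $Q_{Y\mid X}$) is false. For example:
\begin{itemize}
\item Take $X\sim\mathrm{Unif}[0,1]$, and give the $Y$-space $[0,1]$ the countable--cocountable $\sigma$-algebra with the $0$--$1$ measure $\mu$.
\item Let $P_{XY}=Q_{XY}=\mathrm{Unif}[0,1]\otimes\mu$.
\item Both $x\mapsto\mu$ and $x\mapsto\delta_x$ are valid regular conditional distributions; they agree on rectangles because countable sets are Lebesgue-null.
\item Choosing $P_{Y\mid X}(\cdot\mid x)=\delta_x$ and $Q_{Y\mid X}(\cdot\mid x)=\mu$ makes the right-hand side $+\infty$ while the left-hand side is $0$.
\end{itemize}
So state the standard Borel assumption explicitly rather than as an aside; it holds in the paper's only use of the lemma.
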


\subsection{Proof of Theorem \ref{thm:sec25_deploy}}
\label{apx:first_proof}

\subsubsection{Assumptions}

\begin{assumption}[Bounded increments]\label{ass:bounded}
Let \(Q\coloneqq \P^\star_{\pi_\star}\). There exist constants $a\in\R$ and $B'>0$ such that, $Q$-almost surely, for all
$t\in\{1,\dots,T\}$, the per-round regret
\(
r_t \in [a,a+B'].
\)
Equivalently, the range length satisfies $\sup r_t - \inf r_t \le B'$ a.s.
\end{assumption}

\begin{assumption}[Deterministic predictable drift under $Q$]\label{ass:det_drift}
Let \(Q\coloneqq \P^\star_{\pi_\star}\).
For each $t\in\{1,\dots,T\}$, the conditional mean
\(
\mu_t^Q := \E_Q[r_t \mid \mathcal H_{t-1}]
\)
is $Q$-almost surely deterministic. In other words, there exists a constant $m_t\in\R$ such that
$\mu_t^Q = m_t$ $Q$-a.s. Equivalently, we can also say $\E_Q[r_t \mid \mathcal H_{t-1}] = \E_Q[r_t]$ $Q$-a.s.
\end{assumption}

\begin{assumption}[Bounded likelihood ratio]
\label{assump:B-comp-overlap}
There exists \(V_{\mathrm{comp}}\in[1,\infty)\) such that for all \(t\) and all histories \(h\in\mathcal{H}_{t-1}\),
\(Q_{t,\theta}(\cdot\mid h)\ll Q^\star_t(\cdot\mid h)\) and
\[
\sup_{F\in\mathcal{F}_{\mathcal{U}}}
\frac{Q_{t,\theta}(F\mid h)}{Q^\star_t(F\mid h)}
\;\le\;
V_{\mathrm{comp}}.
\]
\end{assumption}

\paragraph{Discussion.}
Assumption~\ref{ass:det_drift} states that, under the benchmark measure $\P^\star$, the one-step
\emph{predictable drift} of the adapted sequence is \emph{history-independent}:
for each $t$, the conditional mean $\E_Q[r_t\mid\mathcal H_{t-1}]$ is (almost surely) a
deterministic constant $m_t$. Intuitively, this rules out shocks in the filtration that could simultaneously shift many
future increments through their conditional expectations. Practically, such an assumption is natural whenever $\P^\star$ describes a benchmark
data-generating process that is exogenous to the learner’s history, e.g., the environment noise is i.i.d.\ under $\P^\star$, or more generally, if the benchmark makes the conditional expectation time-dependent but history-independent.

Assumption~\ref{assump:B-comp-overlap} is a uniform support or overlap requirement between the learned completion kernel and the true completion kernel. Intuitively, this rules out missing mass and prevents extreme underestimation of events that can occur under $Q_t^\star$. In real scenarios, this is plausible when the model class is constrained to have the same effective support as the true completion process. The main implication for our proof is that this boundedness makes a change-of-measure possible by converting the \emph{reverse} KL terms that arise naturally under deployment into \emph{forward} KL terms that we can control via the on-policy completion mismatch, incurring only a constant logarithmic factor.

\subsubsection{Supporting Lemmas}
\label{app:thm2.4:supporting_lemmsa}

\begin{lemma}[MGF bound for $R_T$]\label{lem:mgf-adapted}
Under Assumptions~\ref{ass:bounded} and~\ref{ass:det_drift}, for all $\lambda\in\R$, we have
\[
\log \E_Q\!\left[e^{\lambda R_T}\right]
\le
\lambda\,\E_Q[R_T] \;+\; \frac{\lambda^2 T(B')^2}{8}.
\]
\end{lemma}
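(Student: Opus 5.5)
We prove Lemma~\ref{lem:mgf-adapted} by centering $R_T$ with its predictable drift and then using the martingale MGF bound. Write $\mu_t^Q := \E_Q[r_t\mid\mathcal H_{t-1}]$ and $d_t := r_t - \mu_t^Q$. Then
\[
R_T = \sum_{t=1}^T \mu_t^Q + \sum_{t=1}^T d_t .
\]
By Assumption~\ref{ass:det_drift}, each $\mu_t^Q$ equals a deterministic constant $m_t$ $Q$-a.s. Taking expectations under $Q$ (the $d_t$ have zero mean) gives $\sum_t m_t = \E_Q[R_T]$, so
\[
R_T = \E_Q[R_T] + \sum_{t=1}^T d_t \quad Q\text{-a.s.}
\]
Consequently,
\[
\log\E_Q[e^{\lambda R_T}] = \lambda\E_Q[R_T] + \log\E_Q\bigl[e^{\lambda\sum_t d_t}\bigr].
\]
This step is exactly where determinism of the drift is used: without it the term $\sum_t \mu_t^Q$ would be random and could not be pulled out of the exponential.

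It remains to bound the MGF of $\sum_t d_t$. We invoke Lemma~\ref{lem:md_mgf} with filtration $\mathcal F_t=\mathcal H_t$, and check its hypotheses one by one.
\begin{enumerate}
\item \emph{Adaptedness.} $r_t$ is $\mathcal H_t$-measurable, since it is a function of $X_t$, and $\mu_t^Q$ is $\mathcal H_{t-1}$-measurable. Hence $d_t$ is adapted.
\item \emph{Zero conditional mean.} $\E_Q[d_t\mid\mathcal H_{t-1}]=0$ holds by construction.
\item \emph{Bounded increments.} By Assumption~\ref{ass:bounded}, $r_t\in[a,a+B']$, so $\mu_t^Q\in[a,a+B']$ as well. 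Therefore $d_t\in[A_t,B_t]$ with $A_t:=a-\mu_t^Q$ and $B_t:=a+B'-\mu_t^Q$. These endpoints are $\mathcal H_{t-1}$-measurable and $B_t-A_t=B'$, so we can take $c=B'$.
\end{enumerate}
Lemma~\ref{lem:md_mgf} then gives $\log\E_Q[e^{\lambda\sum_t d_t}]\le \lambda^2 T(B')^2/8$, and combining with the display above yields the claim.

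If one prefers not to rely on the black-box corollary, the same bound follows by iterated conditioning. Peel off the last increment via the tower property and apply Lemma~\ref{lem:Hoeffding} conditionally on $\mathcal H_{T-1}$, with $[A_T,B_T]$ known given $\mathcal H_{T-1}$; this contributes a factor $e^{\lambda^2(B')^2/8}$. Induct downward over $t$.

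The only delicate point is this last step. Lemma~\ref{lem:Hoeffding} as stated uses fixed endpoints $[a,b]$, whereas here the interval $[A_t,B_t]$ for $d_t$ is random, though predictable. The cleanest fix is to apply the conditional Hoeffding lemma to $r_t$ itself, which lies in the fixed interval $[a,a+B']$, with $\mathcal G=\mathcal H_{t-1}$. Since $d_t = r_t-\E_Q[r_t\mid\mathcal G]$, this bounds $\E_Q[e^{\lambda d_t}\mid\mathcal H_{t-1}]\le e^{\lambda^2(B')^2/8}$ directly and avoids the issue. I would present it this way.
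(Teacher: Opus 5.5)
Your proposal is correct and matches the paper's proof essentially step for step: you center by the deterministic drift $m_t$, use $\sum_t m_t=\mathbb{E}_Q[R_T]$ to pull $e^{\lambda\mathbb{E}_Q[R_T]}$ out of the MGF, and apply Lemma~\ref{lem:md_mgf} with $c=B'$. The ``delicate point'' about random endpoints does not arise here, because Assumption~\ref{ass:det_drift} makes $m_t$ a constant, so $d_t$ already lies in the fixed interval $[a-m_t,\,a+B'-m_t]$; your alternative of applying Lemma~\ref{lem:Hoeffding} to $r_t$ directly is nonetheless also valid.
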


\begin{proof}
By Assumption~\ref{ass:det_drift}, define the deterministic constants
$m_t := \E_Q[r_t\mid\mathcal H_{t-1}]$. Also define
\(
d_t := r_t - m_t.
\)
Then $\E_Q[d_t\mid\mathcal H_{t-1}]=0$ $Q$-a.s.
By Assumption~\ref{ass:bounded}, $r_t\in[a,a+B']$ a.s., hence $d_t$ lies in an interval
of length $B'$ a.s. Therefore Lemma~\ref{lem:md_mgf}, with $c = B'$, gives
\[
\log \E_Q\!\left[\exp\!\Big(\lambda\sum_{t=1}^T d_t\Big)\right]
\le
\frac{\lambda^2 T(B')^2}{8}.
\]
Since $\sum_{t=1}^T d_t = R_T - \sum_{t=1}^T m_t$, we have
\(
\E_Q[e^{\lambda R_T}]
= e^{\lambda\sum_{t=1}^T m_t}\;
\E_Q\!\left[\exp\!\Big(\lambda\sum_{t=1}^T d_t\Big)\right],
\)
and taking logs yields
\[
\log \E_Q[e^{\lambda R_T}]
=
\lambda\sum_{t=1}^T m_t
+
\log \E_Q\!\left[\exp\!\Big(\lambda\sum_{t=1}^T d_t\Big)\right]
\le
\lambda\sum_{t=1}^T m_t + \frac{\lambda^2 T(B')^2}{8}.
\]
Finally, because each $m_t$ is deterministic,
\(
\sum_{t=1}^T m_t
= \sum_{t=1}^T \E_Q[r_t\mid\mathcal H_{t-1}]
= \sum_{t=1}^T \E_Q[r_t]
= \E_Q[R_T],
\)
where we used the tower property $\E_Q[\E_Q[r_t\mid\mathcal H_{t-1}]]=\E_Q[r_t]$ and the
fact that $\E_Q[r_t\mid\mathcal H_{t-1}]=m_t$ is constant.
Substituting completes the proof.
\end{proof}

\begin{lemma}[Trajectory KL equals causal action mismatch]
\label{lem:B7-causal-chain-rule}
Let \(\pi_\theta\) and \(\pi_\star\) be two policies interacting with the \emph{same} environment kernels
\(\{M^\star_t\}_{t=1}^T\).
Then we have the following equality below
\[
\mathrm{KL}\!\left(\P^\star_{\pi_\theta}\,\middle\|\,\P^\star_{\pi_\star}\right)
=
\Delta_{T}^{\mathrm{act}}(\theta).
\]
\end{lemma}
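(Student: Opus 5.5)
The plan is to apply the KL chain rule (Lemma~\ref{lem:two-var-chain-rule}) repeatedly along the interaction tuple order $X_1,O_1,X_2,O_2,\dots,X_T,O_T$. Both trajectory laws are built from the same two kinds of kernels. Under $\P^\star_{\pi}$ the action $X_t$ given $H_{t-1}$ is drawn from the policy's action kernel, and $O_t$ given $(H_{t-1},X_t)$ is drawn from $M^\star_t(\cdot\mid H_{t-1},X_t)$. For $\pi_\theta$ the action kernel is $K_{t,\theta}(\cdot\mid H_{t-1})$; for $\pi_\star$ it is $K^\star_t(\cdot\mid H_{t-1})$. Both are pushforwards of the respective completion kernels through $\varphi_t$. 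The observation kernel $M^\star_t$ is \emph{shared} by both laws.

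First I verify absolute continuity, $\P^\star_{\pi_\theta}\ll\P^\star_{\pi_\star}$. If $K_{t,\theta}(\cdot\mid h)\ll K^\star_t(\cdot\mid h)$ fails for a positive-mass set of histories, the action-mismatch sum is $+\infty$, and so is the trajectory KL, so the identity holds trivially. Otherwise, the product structure of the densities gives absolute continuity. I would note that under Assumption~\ref{assump:B-comp-overlap} this holds automatically, since pushforwards preserve $Q_{t,\theta}\ll Q^\star_t$.

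Next I peel off one round at a time by induction on $t$. Write $\KL(\P^\star_{\pi_\theta}|_{H_t}\,\|\,\P^\star_{\pi_\star}|_{H_t})$ for the KL between the marginals of $H_t$. Applying Lemma~\ref{lem:two-var-chain-rule} with $X=H_{t-1}$ and $Y=(X_t,O_t)$ gives this as the KL of the $H_{t-1}$-marginals plus $\E_{H_{t-1}\sim\P^\star_{\pi_\theta}}$ of the KL between the conditional laws of $(X_t,O_t)$. Applying the lemma again inside, with $X=X_t$ and $Y=O_t$, splits the conditional term into $\KL(K_{t,\theta}(\cdot\mid H_{t-1})\|K^\star_t(\cdot\mid H_{t-1}))$ plus $\E_{X_t\sim K_{t,\theta}}\KL(M^\star_t\|M^\star_t)$. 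The second piece vanishes because the observation kernels are identical. Summing over $t=1,\dots,T$, with base case $H_0=\varnothing$ contributing zero, yields exactly $\Delta^{\mathrm{act}}_T(\theta)$, with expectations over $H_{t-1}\sim\P^\star_{\pi_\theta}$ as in its definition.

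The main obstacle is a measure-theoretic one. I must ensure that regular conditional distributions exist and that the conditionals of $\P^\star_{\pi_\theta}$ really are $K_{t,\theta}$ and $M^\star_t$. The relevant spaces $[0,B]$ and $[0,B]\times\{0,1\}$ are Polish, so regular versions exist. By construction of $\P^\star_\pi$ via the Ionescu--Tulcea kernels, these kernels are versions of the conditionals. One subtlety is the environment kernel $M^\star_t(\cdot\mid h,x)$. Under the Bayesian model it is the posterior predictive of $\psi(x,D_t)$ given $H_{t-1}=h$, and it must be the same function of $(h,x)$ for both policies. This holds because the action is conditionally independent of $D_{t}$ given $H_{t-1}$: the policy's extra sampling randomness is independent of the environment. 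So the posterior over demands given a history is policy-independent, up to the likelihood factor of the actions, which cancels. I would state this explicitly, since it is what makes the observation terms cancel.
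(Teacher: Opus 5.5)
Your proposal is correct and matches the paper's argument. The paper also factorizes both trajectory laws into the action kernels and the shared environment kernels $M^\star_t$, then applies the KL chain rule iteratively over $H_{t-1}$, so only the terms $\mathrm{KL}(K_{t,\theta}\,\|\,K^\star_t)$, averaged under the $\pi_\theta$-history law, survive. Your additional care about absolute continuity (including the $+\infty$ case) and about why $M^\star_t$ is the same policy-independent kernel under both policies fills in details the paper simply takes as given in the hypothesis.
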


\begin{proof}
Under \(\P^\star_{\pi_\theta}\) and \(\P^\star_{\pi_\star}\), the joint density or kernel factorization over \(Z_{1:T}\)
is, informally,
\(
\P^\star_{\pi}(dz_{1:T})
=
\prod_{t=1}^T
K^{\pi}_t(dx_t\mid h_{t-1})\,
M^\star_t(do_t\mid h_{t-1},x_t),
\)
where $z_t=(x_t,o_t)$ and \(h_{t-1}=z_{1:t-1}\).
Since \(M^\star_t\) is identical under both policies, the Radon--Nikodym derivative
\(d\P^\star_{\pi_\theta}/d\P^\star_{\pi_\star}\) is the product of the action-kernel ratios only, hence by the
chain rule for KL divergence (applied iteratively with conditioning on \(H_{t-1}\)) we obtain
\[
\mathrm{KL}\!\left(\P^\star_{\pi_\theta}\,\middle\|\,\P^\star_{\pi_\star}\right)
=
\sum_{t=1}^T
\E_{\P^\star_{\pi_\theta}}\!\left[
\mathrm{KL}\!\Big(
K^{\pi_\theta}_t(\cdot\mid H_{t-1})
\,\Big\|\,
K^{\pi_\star}_t(\cdot\mid H_{t-1})
\Big)\right].
\]
Finally we can substitute \(K^{\pi_\theta}_t=K_{t,\theta}\) and \(K^{\pi_\star}_t=K^\star_t\) to obtain the desired result of the Lemma.
\end{proof}

\begin{lemma}[Causal action mismatch bounded by reverse completion mismatch]
\label{lemma:B12-act-by-rev-comp}
For every \(\theta\),
\[
\Delta_{T}^{\mathrm{act}}(\theta)
\;\le\;
\Delta^{\mathrm{comp}}_{T, \, \mathrm{rev}}(\theta;\pi_\theta).
\]
\end{lemma}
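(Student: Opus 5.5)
The plan is to prove the inequality termwise in $t$ and for each fixed history $h=H_{t-1}$, and then take $\E_{H_{t-1}\sim\P^\star_{\pi_\theta}}$ and sum over $t\le T$. The reverse completion mismatch is read, in line with the proof sketch of Theorem~\ref{thm:sec25_deploy}, as
\[
\Delta^{\mathrm{comp}}_{T,\mathrm{rev}}(\theta;\pi_\theta):=\sum_{t=1}^T \E_{H_{t-1}\sim\P^\star_{\pi_\theta}}\Bigl[\KL\bigl(Q_{t,\theta}(\cdot\mid H_{t-1})\,\|\,Q_t^\star(\cdot\mid H_{t-1})\bigr)\Bigr].
\]
Because both summands carry the same outer expectation under the deployed law $\P^\star_{\pi_\theta}$, it suffices to prove the pointwise inequality
\[
\KL\bigl(K_{t,\theta}(\cdot\mid h)\,\|\,K_t^\star(\cdot\mid h)\bigr)\le \KL\bigl(Q_{t,\theta}(\cdot\mid h)\,\|\,Q_t^\star(\cdot\mid h)\bigr)
\]
for every $t$ and $\P^\star_{\pi_\theta}$-a.e.\ $h$.

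For this pointwise inequality, recall that $K_{t,\theta}(\cdot\mid h)=(\varphi_t(\cdot,h))_\#Q_{t,\theta}(\cdot\mid h)$ and $K_t^\star(\cdot\mid h)=(\varphi_t(\cdot,h))_\#Q_t^\star(\cdot\mid h)$. The same measurable oracle map $\varphi_t(\cdot,h)=\hat x(\cdot)$, with its fixed tie-break rule, is used in both kernels. This is exactly what Lemma~\ref{lem:outcome-vs-function} and Lemma~\ref{lem:probmatching} encode. Lemma~\ref{lem:dp-kl} (data processing for KL) then applies with $P=Q_{t,\theta}(\cdot\mid h)$ and $Q=Q_t^\star(\cdot\mid h)$, which gives the pointwise bound directly.

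There is one technical point to check. Lemma~\ref{lem:dp-kl} requires $Q_{t,\theta}(\cdot\mid h)\ll Q_t^\star(\cdot\mid h)$, and this is provided by Assumption~\ref{assump:B-comp-overlap}. If absolute continuity failed on some set of histories, the right-hand side would be $+\infty$ there and the inequality would hold trivially, so the bound survives in any case. I also need the measurability of $h\mapsto \KL(\cdot\|\cdot)$ so that the expectations are well defined. This follows from the kernels being Markov kernels on standard Borel spaces, namely $[0,B]^T$ and $[0,B]$.

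The main subtlety, and the step I expect to need the most care, is ensuring that the action kernels in Lemma~\ref{lem:B7-causal-chain-rule} are genuinely pushforwards of the completion kernels under a common map, and that this common map does not depend on $\theta$. Two sources of extra randomness could break this. The first is the median-over-$M$ aggregation used in practice. The second is auxiliary randomization. In either case I would enlarge the completion space to include this randomness, with identical laws under both models; the KL of the joint then equals the completion KL, and data processing still applies. With this in place, summing the pointwise bounds over $t$ yields $\Delta^{\mathrm{act}}_T(\theta)\le\Delta^{\mathrm{comp}}_{T,\mathrm{rev}}(\theta;\pi_\theta)$.
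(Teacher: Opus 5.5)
Your proposal is correct and matches the paper's proof. Both action kernels are pushforwards of the respective completion kernels through the common map $\varphi_t(\cdot,h)$, so data processing for KL gives the pointwise bound, which is then averaged under $\P^\star_{\pi_\theta}$ and summed over $t$.

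One caveat on your optional aside about median-over-$M$ aggregation. The extra $M-1$ completions do not have identical laws under the two models, so the joint KL is $M\,\KL\bigl(Q_{t,\theta}(\cdot\mid h)\,\|\,Q_t^\star(\cdot\mid h)\bigr)$, not the single-completion KL. This case lies outside the lemma, which concerns the single-sample pushforward kernels, so it does not affect your argument.
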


\begin{proof}
Fix \(t\) and condition on \(H_{t-1}=h\).
By construction, \(K^\star_t(\cdot\mid h)\) and \(K_{t,\theta}(\cdot\mid h)\) are pushforwards of
\(Q^\star_t(\cdot\mid h)\) and \(Q_{t,\theta}(\cdot\mid h)\) through the measurable map
\(\varphi_t(\cdot,h)\).
Thus the data-processing inequality for KL divergence (Lemma \ref{lem:dp-kl}) yields
\[
\mathrm{KL}\!\Big(
K_{t,\theta}(\cdot\mid h)\,\Big\|\,K^\star_t(\cdot\mid h)
\Big)
\;\le\;
\mathrm{KL}\!\Big(
Q_{t,\theta}(\cdot\mid h)\,\Big\|\,Q^\star_t(\cdot\mid h)
\Big).
\]
Taking expectation \(\E_{H_{t-1}\sim \P^\star_{\pi_\theta}}\) and summing over \(t=1,\dots,T\) gives the desired claim of the Lemma.
\end{proof}

\begin{lemma}[Reverse completion mismatch is controlled by forward completion mismatch]
\label{cor:B13-rev-by-fwd-comp}
Under Assumption~\ref{assump:B-comp-overlap},
\[
\Delta^{\mathrm{comp}}_{T, \, \mathrm{rev}}(\theta;\pi_\theta)
\;\le\;
(2+\log V_{\mathrm{comp}})\,
\Delta^{\mathrm{comp}}_{T}(\theta;\pi_\theta).
\]
\end{lemma}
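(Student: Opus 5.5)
The proof is a pointwise argument: fix $t$ and a history $h\in\mathcal H_{t-1}$, bound the reverse KL by the forward KL for that pair, then average and sum. Write $P_h:=Q_{t,\theta}(\cdot\mid h)$ and $Q_h:=Q^\star_t(\cdot\mid h)$. The reverse completion mismatch is
\[
\Delta^{\mathrm{comp}}_{T,\mathrm{rev}}(\theta;\pi_\theta)=\sum_{t=1}^T\E_{H_{t-1}\sim\P^\star_{\pi_\theta}}\bigl[\KL(P_{H_{t-1}}\,\|\,Q_{H_{t-1}})\bigr],
\]
that is, the same sum as $\Delta^{\mathrm{comp}}_T$ with the arguments of the KL swapped. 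Both sums take expectations under the same deployment law $\P^\star_{\pi_\theta}$. So it suffices to show, for every $(t,h)$,
\[
\KL(P_h\|Q_h)\le(2+\log V_{\mathrm{comp}})\,\KL(Q_h\|P_h),
\]
and then integrate against $\P^\star_{\pi_\theta}$ and sum over $t$, using linearity.

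The pointwise bound comes from sandwiching through the squared Hellinger distance, which is symmetric. Assumption~\ref{assump:B-comp-overlap} gives $P_h\ll Q_h$ and $\sup_F P_h(F)/Q_h(F)\le V_{\mathrm{comp}}$. This is exactly the hypothesis of Lemma~\ref{lem:kl-le-hellinger} with $(P,Q)=(P_h,Q_h)$, which yields $\KL(P_h\|Q_h)\le(2+\log V_{\mathrm{comp}})D_H^2(P_h,Q_h)$. Next, $D_H^2(P_h,Q_h)=D_H^2(Q_h,P_h)$, and Lemma~\ref{lem:hellinger-le-kl} applied to the pair $(Q_h,P_h)$ gives $D_H^2(Q_h,P_h)\le\KL(Q_h\|P_h)$. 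Chaining the two inequalities gives the pointwise claim.

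The one delicate point is absolute continuity. Lemma~\ref{lem:hellinger-le-kl} as stated asks for $Q_h\ll P_h$, but the assumption only gives the reverse direction. There are two cases. If $Q_h\not\ll P_h$, then $\KL(Q_h\|P_h)=+\infty$ and the inequality holds trivially. Otherwise the lemma applies directly. Alternatively, one can note that $D_H^2\le\KL$ holds in general with the convention $\KL=\infty$ whenever absolute continuity fails.

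Measurability of $h\mapsto\KL(\cdot\|\cdot)$ follows from the kernels being Markov kernels on the standard Borel space $[0,B]^T$, so the expectations are well defined, possibly equal to $+\infty$, in which case the bound is vacuous. I expect no substantive obstacle beyond this absolute-continuity bookkeeping.
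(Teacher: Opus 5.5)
Your proposal is correct and follows essentially the same route as the paper: fix $(t,h)$, bound $\mathrm{KL}(Q_{t,\theta}(\cdot\mid h)\,\|\,Q^\star_t(\cdot\mid h))$ by $(2+\log V_{\mathrm{comp}})\,D_H^2$ via Lemma~\ref{lem:kl-le-hellinger}, use the symmetry of $D_H^2$ and then Lemma~\ref{lem:hellinger-le-kl} to reach the forward KL, and finally average under $\P^\star_{\pi_\theta}$ and sum over $t$. Your explicit case split on whether $Q^\star_t(\cdot\mid h)\ll Q_{t,\theta}(\cdot\mid h)$ (otherwise the forward KL is $+\infty$ and the bound is trivial) fills a small absolute-continuity gap that the paper's proof passes over silently.
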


\begin{proof}
Fix \(t\) and condition on \(H_{t-1}=h\).
Let \(P=Q_{t,\theta}(\cdot\mid h)\) and \(Q=Q^\star_t(\cdot\mid h)\).
By Assumption~\ref{assump:B-comp-overlap} we have \(\sup_{F}P(F)/Q(F)\le V_{\mathrm{comp}}\),
so Lemma~\ref{lem:kl-le-hellinger} implies that the reverse KL is bounded as
\begin{align*}
\mathrm{KL}(P\|Q) &\le
(2+\log V_{\mathrm{comp}})\,D^2_{\mathrm{H}}(P,Q) \\
&= (2+\log V_{\mathrm{comp}})\,D^2_{\mathrm{H}}(Q,P) \\
&\leq (2+\log V_{\mathrm{comp}})\mathrm{KL}(Q\|P).
\end{align*}
Here, the first equality is due to the fact that the squared Hellinger distance is symmetric: \(D^2_{\mathrm{H}}(P,Q)=D^2_{\mathrm{H}}(Q,P)\). The second inequality is due to Lemma \ref{lem:hellinger-le-kl}, \(D^2_{\mathrm{H}}(Q,P)\le \mathrm{KL}(Q\|P)\).
Combining, 
\[
\mathrm{KL}\!\Big(Q_{t,\theta}(\cdot\mid h)\,\big\|\,Q^\star_t(\cdot\mid h)\Big)
\;\le\;
(2+\log V_{\mathrm{comp}})\,
\mathrm{KL}\!\Big(Q^\star_t(\cdot\mid h)\,\big\|\,Q_{t,\theta}(\cdot\mid h)\Big).
\]
Taking expectation \(\E_{H_{t-1}\sim \P^\star_{\pi_\theta}}\) and summing over \(t\) gives the stated inequality in the Lemma.
\end{proof}

\subsubsection{Proof of the theorem}
\label{app:thm2.4:main_proof}

\begin{theorem*}[Deployment penalty bound under $\P^\star$]
\label{thm:sec25_deploy_apx}
Let $\pi_\star=\pi_{\P^\star}$ be the ideal GPS policy that uses the true completion kernel, and
let $\pi_\theta=\pi_{p_\theta}$ be the deployed GPS policy using the learned kernel. Assume Assumption \ref{assump:B-comp-overlap} and define $B^\prime := B \max \{h, b\}$. Then
\begin{equation*}
\BReg_T(\pi_\theta;\P^\star)
\le
\BReg_T(\pi_\star;\P^\star)
+
B^\prime \sqrt{\frac{T(2+\log V_{\mathrm{comp}})}{2}\,\Delta^{\mathrm{comp}}_{T}(\theta;\pi_\theta)}.
\end{equation*}
\end{theorem*}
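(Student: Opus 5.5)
We work with $P:=\P^\star_{\pi_\theta}$ and $Q:=\P^\star_{\pi_\star}$, both laws on the full interaction history $H_T$, and with $R_T=\sum_{t=1}^T r_t$. Since $\BReg_T(\pi_\theta;\P^\star)=\E_P[R_T]$ and $\BReg_T(\pi_\star;\P^\star)=\E_Q[R_T]$, it suffices to show
\[
\E_P[R_T]-\E_Q[R_T]\le B'\sqrt{\tfrac{T}{2}\,\KL(P\|Q)}
\qquad\text{and}\qquad
\KL(P\|Q)\le(2+\log V_{\mathrm{comp}})\,\Delta^{\mathrm{comp}}_T(\theta;\pi_\theta).
\]

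\emph{Checking Assumption~\ref{ass:bounded} with $B'=B\max\{h,b\}$.} For $x,d\in[0,B]$ we have $0\le\ell(x,d)\le B\max\{h,b\}$. Hence $f^\star$ takes values in $[0,B']$, and so does $r_t=f^\star(X_t)-\min_x f^\star(x)$. The drift condition (Assumption~\ref{ass:det_drift}) is taken from the hypothesis of Theorem~\ref{thm:sec25_deploy}.

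\emph{Step 1: change of measure.} Apply Lemma~\ref{lem:DV} with $f=\lambda R_T$ for $\lambda>0$. This requires $P\ll Q$, which follows from Assumption~\ref{assump:B-comp-overlap}: it gives $K_{t,\theta}\ll K_t^\star$ by pushforward, and the environment kernels are shared. We get
\[
\lambda\E_P[R_T]\le \KL(P\|Q)+\log\E_Q[e^{\lambda R_T}].
\]
Lemma~\ref{lem:mgf-adapted} bounds the last term by $\lambda\E_Q[R_T]+\lambda^2T(B')^2/8$. Dividing by $\lambda$ and choosing $\lambda=\sqrt{8\KL(P\|Q)/(T(B')^2)}$ gives
\[
\E_P[R_T]-\E_Q[R_T]\le B'\sqrt{\tfrac{T}{2}\KL(P\|Q)}.
\]
If $\KL(P\|Q)=0$, let $\lambda\to\infty$ instead.

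\emph{Step 2: control the trajectory KL.} Lemma~\ref{lem:B7-causal-chain-rule} gives $\KL(P\|Q)=\Delta^{\mathrm{act}}_T(\theta)$. Lemma~\ref{lemma:B12-act-by-rev-comp} (data processing through the oracle map $\varphi_t$) bounds this by the reverse completion mismatch. Lemma~\ref{cor:B13-rev-by-fwd-comp} then converts reverse to forward KL at a cost of $(2+\log V_{\mathrm{comp}})$, using Hellinger symmetry. Substituting into Step 1 and using monotonicity of the square root gives the claim.

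\emph{Main obstacle.} Step 1 is where the assumptions actually bite. Lemma~\ref{lem:mgf-adapted} needs the conditional means $\E_Q[r_t\mid\mathcal H_{t-1}]$ to be deterministic, so that the martingale-difference MGF bound recenters exactly at $\E_Q[R_T]$. Without this the centering term would be random and would not telescope. I would therefore state explicitly that this hypothesis is inherited from Theorem~\ref{thm:sec25_deploy}, rather than attempt to derive it. I would also verify that $\E_Q[e^{\lambda R_T}]<\infty$, which is immediate since $R_T\le TB'$.
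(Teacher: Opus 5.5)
Your proposal is correct and follows the paper's proof step for step: Donsker--Varadhan with $f=\lambda R_T$, the MGF bound of Lemma~\ref{lem:mgf-adapted}, optimizing $\lambda$, then Lemmas~\ref{lem:B7-causal-chain-rule}, \ref{lemma:B12-act-by-rev-comp} and~\ref{cor:B13-rev-by-fwd-comp}; your explicit checks that $r_t\in[0,B']$, that $P\ll Q$, and that the deterministic-drift hypothesis must be imported from Theorem~\ref{thm:sec25_deploy} (the appendix statement omits it, yet the proof needs it) fill in points the paper leaves implicit. One trivial slip: when $\KL(P\|Q)=0$ you should let $\lambda\downarrow 0$ (or simply note $P=Q$), not $\lambda\to\infty$.
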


\begin{proof}[Proof of Theorem \ref{thm:sec25_deploy}]
Let \(P\coloneqq \P^\star_{\pi_\theta}\) and \(Q\coloneqq \P^\star_{\pi_\star}\). Let us denote the per-round regret as $r_t := f^\star(X_t) - \min_{x \in \mathcal X}f^\star(x)$. The cumulative regret is denoted as $R_T := \sum_{t=1}^T r_t$, and the cumulative Bayesian regret for a policy $\pi$ under environment $\P^\star$ is denoted by $\mathrm{BayesReg}_T(\pi;\P^\star) := \mathbb E_{\P^\star_{\pi}}\!\big[R_T\big]$. We will proceed by first deriving a bound on \(\E_{P}[R_T]-\E_{Q}[R_T].\)

The first step involves change of measure. This is done using the Donsker–Varadhan (DV) variational formula from Lemma~\ref{lem:DV}. Using the Lemma with $f=\lambda R_T$, for any $\lambda > 0$ gives $\lambda\,\mathbb E_P[R_T] \le \mathrm{KL}(P\|Q) + \log \mathbb E_Q[e^{\lambda R_T}]$. Hence, dividing throughout by $\lambda$ gives us
\begin{equation}
\label{eq:dv-step}
\mathbb E_P[R_T]
\le
\frac{1}{\lambda}\mathrm{KL}(P\|Q) + \frac{1}{\lambda}\log \mathbb E_Q[e^{\lambda R_T}].
\end{equation}

The second step involves bounding the MGF under the benchmark $Q$. Under $Q$, the process is generated by $\pi_\star$ in environment $\P^\star$, and $R_T=\sum_{t=1}^T r_t(Z_t)$ with each $r_t\in[a, a + B^\prime]$ adapted to the natural filtration.
Lemma~\ref{lem:mgf-adapted} yields
\[
\log \mathbb E_Q[e^{\lambda R_T}]
\le
\lambda\,\mathbb E_Q[R_T] + \frac{\lambda^2 T (B^\prime)^2}{8}.
\]
Plugging into \eqref{eq:dv-step} gives, for all $\lambda > 0$
\begin{equation}
\label{eq:DV-hoeffding_1}
\mathbb E_P[R_T] - \mathbb E_Q[R_T] \le \frac{1}{\lambda}\mathrm{KL}(P\|Q) + \frac{\lambda T (B^\prime)^2}{8}.
\end{equation}

The third step involves choosing the optimal value of $\lambda$. Minimizing the right hand side of \eqref{eq:DV-hoeffding_1} over \(\lambda>0\) gives
\(
\lambda^\star=\sqrt{\frac{8\mathrm{KL}(P\|Q)}{T (B^\prime)^2}}
\). Plugging in the optimal value of $\lambda$ into \eqref{eq:DV-hoeffding_1} gives the following bound
\begin{equation}
\label{eq:sqrtT-KL_1}
\E_{P}[R_T]-\E_{Q}[R_T]
\;\le\;
B^\prime \sqrt{\frac{T}{2}\,\mathrm{KL}(P\|Q)}.
\end{equation}

In the fourth step, recall that from our notations $\mathbb E_P[R_T]=\mathrm{BayesReg}_T(\pi_\theta;\P^\star)$ and $\mathbb E_Q[R_T]=\mathrm{BayesReg}_T(\pi_\star;\P^\star)$. Moreover, Lemma~\ref{lem:B7-causal-chain-rule}, gives us
$\mathrm{KL}(\P_{\pi_\theta}^\star\|\P_{\pi_\star}^\star)=\Delta_{T}^{\mathrm{act}}(\theta)$. Applying these to \eqref{eq:sqrtT-KL_1} gives

\begin{equation}
\label{eq:step4}
\mathrm{BayesReg}_T(\pi_\theta;\P^\star) \leq \mathrm{BayesReg}_T(\pi_\star;\P^\star) + B^\prime \sqrt{\frac{T}{2} \Delta_{T}^{\mathrm{act}}(\theta)}
\end{equation}

Finally, we note that the casual action mismatch by the reverse completion mismatch, which in turn is conreolled by the forward completion mismatch, under Assumption \ref{assump:B-comp-overlap} (Lemma \ref{lemma:B12-act-by-rev-comp} and \ref{cor:B13-rev-by-fwd-comp}). Applying these to \eqref{eq:step4}, gives us the desired statement for this theorem as follows

\begin{equation*}
\mathrm{BayesReg}_T(\pi_\theta;\P^\star) \leq \mathrm{BayesReg}_T(\pi_\star;\P^\star) + B^\prime \sqrt{\frac{T\left(2+\log V_{\mathrm{comp}}\right)}{2}\,\Delta^{\mathrm{comp}}_{T}(\theta;\pi_\theta)}\,.
\end{equation*}

\end{proof}

\subsection{Proof of Theorem \ref{thm:rnv_bayes_regret}}
\label{app:proof_thm_ts_newsvendor}

Fix cost parameters \(h>0\) (overage) and \(b>0\) (underage). In each round \(t=1,2,\dots,T\), the decision-maker chooses an order quantity \(X_t \in \cX := [0,B]\), then demand \(D_t \in [0,B]\) is realized, and the incurred (unobserved) newsvendor loss is \(\ell(x,d) \;:=\; h(x-d)^+ + b(d-x)^+.\) The observable (censored) feedback is
\(
S_t := \min\{D_t,X_t\}, C_t := \mathbf{1}\{D_t \le X_t\}, O_t := (S_t,C_t).
\)
Note that if \(C_t=1\) then \(D_t=S_t\) is fully observed; if \(C_t=0\) (stockout) then \(D_t>X_t\) is censored.

Let the prior be denoted by $\xi$, which could be random in the case of a meta-bandit setting \citep{kveton2021meta}. The demand law is $\P \sim \xi$, and the per round demand is $D_t \overset{\mathrm{i.i.d.}}{\sim} \P
$ on \([0,B]\). The conditional risk is defined as $f_\P(x) := \mathbb{E}_\P[\ell(x,D)]$, with minimizer \(x_\P^\star \in \arg\min_{x\in[0,B]} f_\P(x)\). For a policy \(\pi\), define Bayesian regret \(\mathrm{BayesReg}_T(\pi) := \mathbb{E}\Big[\sum_{t=1}^T \big(f_\P(X_t)-f_\P(x_\P^\star)\big)\Big]\).
At each round \(t\), Thompson sampling (TS) samples \(\tilde\P_t\) from the posterior of \(\xi\) given history \(H_{t-1}\), according to $\tilde\P_t \sim \xi(\cdot \mid H_{t-1})$, and plays the action \(X_t \in \arg\min_{x\in[0,B]} f_{\tilde\P_t}(x)\). We now prove Theorem~\ref{thm:rnv_bayes_regret} via a reduction to one-dimensional bandit convex optimization (BCO) using an \emph{observable} unbiased surrogate loss constructed from censored feedback.

\subsubsection{Supporting Lemmas}

\begin{lemma}[Risk-equivalent objective $g_\P$]\label{lem:shift_objective}
For any \(\P\), define
\(g_\P(x) := (h+b)\,\mathbb{E}_\P\big[(x-D)^+] - b x\), where \(x\in[0,B].\) Then, \(f_\P(x) \;=\; g_\P(x) + b\,\mathbb{E}_\P[D]\), for all $x\in[0,B]$.
Consequently, for any policy \(\pi\), the Bayesian regret can be analyzed using $g_\P$ instead of $f_\P$, since
\[
\mathbb{E}\Big[\sum_{t=1}^T \big(f_\P(X_t)-f_\P(x_\P^\star)\big)\Big]
\;=\;
\mathbb{E}\Big[\sum_{t=1}^T \big(g_\P(X_t)-g_\P(x_\P^\star)\big)\Big].
\]
\end{lemma}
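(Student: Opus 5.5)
The identity will follow from a pointwise algebraic decomposition of the newsvendor loss, after which the regret statement is immediate because the added term does not depend on $x$. Fix $x\in[0,B]$ and $d\in[0,B]$. I will use the elementary identity $(d-x)^+ = (x-d)^+ - (x-d) = (x-d)^+ - x + d$, which holds since $a^+ - (-a)^+ = a$ for any real $a$ (here $a = x-d$). Substituting it into $\ell(x,d)=h(x-d)^+ + b(d-x)^+$ gives
\[
\ell(x,d) = (h+b)(x-d)^+ - b x + b d .
\]

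Next I take expectations under $\P$, with $D\sim\P$. All terms are bounded because $D\in[0,B]$ almost surely, so integrability is not an issue. By linearity,
\[
f_\P(x) = \mathbb{E}_\P[\ell(x,D)] = (h+b)\,\mathbb{E}_\P[(x-D)^+] - bx + b\,\mathbb{E}_\P[D] = g_\P(x) + b\,\mathbb{E}_\P[D],
\]
which is the first claim.

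For the regret statement, the key observation is that $b\,\mathbb{E}_\P[D]$ depends only on $\P$ and not on $x$. Hence for every realization of $\P$ and every $X_t$,
\[
f_\P(X_t)-f_\P(x^\star_\P) = g_\P(X_t)-g_\P(x^\star_\P)
\]
holds pointwise; the constant cancels even though $\P$ is random under the prior $\xi$. The same observation shows that $\arg\min f_\P = \arg\min g_\P$, so $x^\star_\P$ is also a minimizer of $g_\P$. Summing over $t$ and taking the full expectation (over $\P\sim\xi$, the demands, and the policy randomization) yields the displayed equality of Bayesian regrets. Again, boundedness of all quantities on $[0,B]$ guarantees finiteness.

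There is no real obstacle in this argument. The only point requiring care is to note that the cancellation happens inside the expectation for each fixed $\P$, before integrating over the prior, since $\mathbb{E}_\P[D]$ is itself random under $\xi$.
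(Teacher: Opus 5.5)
Your proof is correct and follows the paper's argument: you use the same pointwise identity $(d-x)^+ = (d-x) + (x-d)^+$ to write $\ell(x,d) = (h+b)(x-d)^+ + b(d-x)$, take expectations, and then cancel the $x$-independent term $b\,\mathbb{E}_\P[D]$ in each regret increment, which also leaves the minimizers unchanged. Your remark that this cancellation happens for each fixed $\P$ before integrating over the prior is a clarification the paper leaves implicit.
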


\begin{proof}
Using the identity \((d-x)^+ = (d-x) + (x-d)^+\), valid for all \(x,d\in\mathbb{R}\), we have
\begin{align*}
\ell(x,d)
&= h(x-d)^+ + b\big((d-x)+(x-d)^+\big) \\
&= (h+b)(x-d)^+ + b(d-x). 
\end{align*}

Taking conditional expectation given \(\xi=\P\) yields the following
\begin{align*}
f_\P(x)
&= (h+b)\,\mathbb{E}_\P\big[(x-D)^+\big] + b\big(\mathbb{E}_\P[D]-x\big)\\
&= g_\P(x) + b\,\mathbb{E}_\P[D].
\end{align*}

The above proves the first claim of the Lemma. Recognizing that the additive term \(b\,\mathbb{E}_\P[D]\) does not depend on \(x\), hence cancels in regret and does not change minimizers, gives the final equality of the Bayesian regret terms.
\end{proof}

\begin{lemma}[Randomization identity]\label{lem:randomization_identity}
Fix \(x\in[0,B]\). Let \(U \sim \mathrm{Unif}([0,x])\) be a random variable independent of \(D\), with the convention that if \(x=0\) then \(U\equiv 0\). Also define \(Z := \mathbf{1}\{D \le U\}\). Then
\[
\mathbb{E}[x Z] \;=\; \mathbb{E}[(x-D)^+].
\]
\end{lemma}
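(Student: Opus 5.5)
We compute $\mathbb{E}[xZ]$ by first conditioning on $D$ and then averaging over $D$, using the independence of $U$ and $D$.

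\emph{Degenerate case.} If $x=0$, then $U\equiv 0$, so $xZ=0$ identically. Also $(0-D)^+=0$ because $D\ge 0$. Both sides vanish.

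\emph{Case $x>0$.}
\begin{enumerate}
\item Fix $D=d\in[0,B]$. Since $U\sim\mathrm{Unif}([0,x])$ is independent of $D$,
\[
\mathbb{P}(U\ge d\mid D=d)=\frac{(x-d)^+}{x}.
\]
This holds whether $d\le x$, where the probability is $(x-d)/x$, or $d>x$, where it is $0$.
\item The event $\{Z=1\}=\{D\le U\}$ coincides with $\{U\ge d\}$ given $D=d$. Hence
\[
\mathbb{E}[xZ\mid D]=x\cdot\frac{(x-D)^+}{x}=(x-D)^+ .
\]
\item By the tower property, $\mathbb{E}[xZ]=\mathbb{E}[(x-D)^+]$.
\end{enumerate}

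The integrand $\mathbf{1}\{d\le u\}$ is nonnegative and jointly measurable. Fubini--Tonelli therefore justifies computing the expectation as an iterated integral over the product law of $(U,D)$.

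An equivalent route is the tail formula $(x-d)^+=\int_0^x \mathbf{1}\{d\le u\}\,du$. Taking expectations in $D$ and exchanging the order of integration by Tonelli gives
\[
\mathbb{E}[(x-D)^+]=\int_0^x F(u)\,du=x\,\mathbb{E}_U[F(U)]=x\,\mathbb{P}(D\le U),
\]
where $F$ is the CDF of $D$.

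There is no substantive obstacle here. The only care needed is with boundary conventions: the closed inequality $D\le U$ versus an open one, and the point $x=0$. Atoms of $D$ are harmless, because $U$ is continuous when $x>0$, so $\mathbb{P}(U=D)=0$ and the choice of $\le$ versus $<$ does not affect the expectation.

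This identity is exactly what makes $Y_t=(h+b)X_tZ_t-bX_t$ an unbiased estimate of $g_\P(X_t)$, as recorded in Lemma~\ref{lem:shift_objective}.
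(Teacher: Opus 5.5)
Your proof is correct and matches the paper's argument: you condition on $D$, use independence and uniformity to get $\mathbb{P}(U\ge d)=(x-d)^+/x$, apply the tower property, and dispose of the $x=0$ case trivially. One small citation slip: the unbiasedness of $Y_t$ for $g_{\mathbb{P}}(X_t)$ is Lemma~\ref{lem:unbiased_feedback}, not Lemma~\ref{lem:shift_objective}, which only establishes the risk-equivalence $f_{\mathbb{P}}=g_{\mathbb{P}}+b\,\mathbb{E}_{\mathbb{P}}[D]$.
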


\begin{proof}
The claim is immediate for \(x=0\). So, let us fix \(x>0\). Conditioning on \(D\), for any realization \(D=d\) we have
\(
\mathbb{E}[Z\mid D=d] = \mathbb{P}(U\ge d\mid D=d)
= \frac{(x-d)^+}{x},
\)
since \(U\) is uniform on \([0,x]\).
Therefore, we have
\(\mathbb{E}[xZ\mid D] = x\,\mathbb{E}[Z\mid D] = (x-D)^+,\) where the first equality is due to the fact that $x$ is deterministic here, and can therefore be taken outside the expectation. Finally, taking expectation again and applying the tower rule of conditional expectations gives \(\mathbb{E}[xZ]= \mathbb{E}[\mathbb{E}[xZ\mid D]] = \mathbb{E}[(x-D)^+]\).
\end{proof}

\begin{lemma}[Observable unbiased feedback for \(g_\P\)]\label{lem:unbiased_feedback}
Fix \(x\in[0,B]\) and define \(U,Z\) as in Lemma~\ref{lem:randomization_identity}. Let us define a one-step bandit feedback as \(Y := (h+b)\,x Z - b x.\) Then, for every \(\P\),
\[
\mathbb{E}_\P[Y] \;=\; g_\P(x).
\]
\end{lemma}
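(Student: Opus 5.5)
The plan is to compute $\mathbb{E}_\P[Y]$ by linearity and reduce it to Lemma~\ref{lem:randomization_identity}. Here $x$ is fixed and deterministic, $U\sim\mathrm{Unif}([0,x])$ is drawn independently of $D\sim\P$, and $Z=\mathbf{1}\{D\le U\}$. Since $Z\in\{0,1\}$ and $x\in[0,B]$, the variable $Y$ is bounded, so all expectations are finite and linearity applies without integrability issues:
\[
\mathbb{E}_\P[Y] \;=\; (h+b)\,\mathbb{E}_\P[xZ] \;-\; bx.
\]

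Next, I would invoke Lemma~\ref{lem:randomization_identity} with the demand law $\P$. That lemma was stated for an arbitrary demand distribution, needing only independence of $U$ and $D$, so it gives $\mathbb{E}_\P[xZ]=\mathbb{E}_\P[(x-D)^+]$. The degenerate case $x=0$ is covered by the convention $U\equiv 0$ there, and it is trivial anyway since both sides vanish. Substituting, we get
\[
\mathbb{E}_\P[Y]=(h+b)\,\mathbb{E}_\P[(x-D)^+]-bx,
\]
which is exactly the definition of $g_\P(x)$ in Lemma~\ref{lem:shift_objective}.

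There is no genuine obstacle here. The only point worth stating explicitly is the meaning of the expectation: it is taken jointly over $D\sim\P$ and the learner's auxiliary randomness $U$, with the two independent. This is what lets the conditioning-on-$D$ argument of Lemma~\ref{lem:randomization_identity} go through.

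In the sequential use I would also note one extra fact, though it is not needed for the statement itself. When $x=X_t$ is chosen from $H_{t-1}$, and $U_t$ is drawn afresh and independent of $D_t$ given $(H_{t-1},X_t)$, the same computation applied conditionally gives $\mathbb{E}[Y_t\mid H_{t-1},X_t,\P]=g_\P(X_t)$. This conditional form is the one used in the BCO reduction.
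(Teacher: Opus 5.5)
Your proposal is correct and follows the paper's proof: linearity gives $\mathbb{E}_\P[Y]=(h+b)\,\mathbb{E}_\P[xZ]-bx$, and Lemma~\ref{lem:randomization_identity} replaces $\mathbb{E}_\P[xZ]$ by $\mathbb{E}_\P[(x-D)^+]$, which yields $g_\P(x)$ by definition. Your additional remarks on boundedness, the $x=0$ case, and the conditional version used in the BCO reduction are accurate but not needed for the statement itself.
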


\begin{proof}
From the definition of $Y$, we have
\begin{align*}
\mathbb{E}_\P[Y]
&= (h+b)\,\mathbb{E}_\P[xZ] - b x\\
&= (h+b)\,\mathbb{E}_\P[(x-D)^+] - b x\\
&= g_\P(x),
\end{align*}
where in the second step we used Lemma~\ref{lem:randomization_identity}.
\end{proof}

\begin{lemma}[Computability of \(Z\) from censored observation]\label{lem:Z_computable}
Fix \(x\in[0,B]\). Let \(O=(S,C)\) be the censored observation from ordering \(x\):
\(S=\min\{D,x\}\), \(C=\mathbf{1}\{D\le x\}\).
Let \(U\sim\mathrm{Unif}([0,x])\) be sampled by the decision-maker after observing \(O\),
and define \(Z=\mathbf{1}\{D\le U\}\).
Then \(Z\) is measurable with respect to \((O,U)\), i.e., \(Z\) can be computed from \((S,C,U)\).
\end{lemma}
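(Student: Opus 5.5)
We prove Lemma~\ref{lem:Z_computable} by writing $Z=\mathbf{1}\{D\le U\}$ as an explicit Borel function of $(S,C,U)$. The key fact is that $U\in[0,x]$ always, so whether $D\le U$ can be decided from the censored observation. We split on the value of $C$.

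\emph{Case $C=1$ (no stockout).} Then $D\le x$ and $S=D$, so $Z=\mathbf{1}\{S\le U\}$.

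\emph{Case $C=0$ (stockout).} Then $D>x\ge U$, so $D\le U$ is impossible and $Z=0$.

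The degenerate case $x=0$ fits the same scheme. If $C=1$, then $D=0=U$ and $Z=1=\mathbf{1}\{S\le U\}$. If $C=0$, then $D>0=U$ and $Z=0$.

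Combining the two cases gives the closed form
\[
Z \;=\; C\cdot\mathbf{1}\{S\le U\}.
\]
The map $(s,c,u)\mapsto c\,\mathbf{1}\{s\le u\}$ is Borel measurable on $[0,B]\times\{0,1\}\times[0,B]$, since it is a product of a coordinate projection and the indicator of a closed set. Hence $Z$ is $\sigma(O,U)$-measurable. The identity holds pointwise, for every realization, and not merely almost surely.

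Two boundary issues need care. The first is that $U\le x$ holds surely, not just almost surely; this is guaranteed by defining $U$ on the closed interval $[0,x]$ with the convention $U\equiv 0$ when $x=0$. The second is the tie $D=U$ in the case $C=1$. Because $S=D$ there, the non-strict inequality in $\mathbf{1}\{S\le U\}$ matches the non-strict inequality in the definition of $Z$.

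Finally, $U$ is drawn after observing $O$ but independently of $D$. This leaves the law of $Z$ given $D$ unchanged, so Lemma~\ref{lem:randomization_identity} still applies and $Y_t$ is an observable unbiased feedback for $g_\P$.
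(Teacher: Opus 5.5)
Your proof is correct and follows the paper's argument: you split on $C$, use $U\le x$ to get $Z=0$ when $C=0$, and use $S=D$ to get $Z=\mathbf{1}\{S\le U\}$ when $C=1$. Your closed form $Z=C\,\mathbf{1}\{S\le U\}$ and the explicit Borel-measurability check just make the paper's ``computable from $(S,C,U)$'' conclusion fully formal.
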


\begin{proof}
If \(C=0\), then \(D>x\). Moreover, since  \(U\in[0,x]\), we have $x \geq U$, implying \(Z=0\) for this case. On the other hand, if \(C=1\), then \(D\le x\) and in fact \(D=S\), hence \(Z=\mathbf{1}\{S\le U\}\). Thus \(Z\) is computable from \((S,C,U)\) in all cases.
\end{proof}

\begin{lemma}[Convexity, Lipschitzness, and bounded feedback]\label{lem:g_properties}
For every \(\P\), the function \(g_\P:[0,B]\to\mathbb{R}\) is convex and
\(L\)-Lipschitz with \(L:=\max\{h,b\}\).
Moreover, for any \(x\in[0,B]\), the random feedback \(Y=(h+b)xZ-bx\) satisfies the uniform bounds $Y \in [-bB, hB]$.
\end{lemma}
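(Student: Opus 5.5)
We prove the three claims of Lemma~\ref{lem:g_properties} in turn, working directly from the definition $g_\P(x)=(h+b)\,\E_\P[(x-D)^+]-bx$ established in Lemma~\ref{lem:shift_objective}.

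\emph{Convexity.} For each fixed $d$, the map $x\mapsto (x-d)^+$ is convex, being a maximum of two affine functions. Expectation preserves convexity, since it is a nonnegative mixture, so $x\mapsto \E_\P[(x-D)^+]$ is convex. Multiplying by $h+b>0$ and subtracting the linear term $bx$ keeps the function convex.

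\emph{Lipschitzness.} I would compute one-sided derivatives. The right derivative of $\E_\P[(x-D)^+]$ is $\P(D\le x)=F(x)\in[0,1]$; this follows by dominated convergence, because the difference quotients of $(x-d)^+$ are bounded by $1$. Hence the right derivative of $g_\P$ is $(h+b)F(x)-b$, which lies in $[-b,h]$. The left derivatives similarly equal $(h+b)\P(D<x)-b$, which also lies in $[-b,h]$. A convex function on an interval whose one-sided derivatives are all bounded in absolute value by $L$ is $L$-Lipschitz. This gives $L=\max\{h,b\}$.

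An alternative route avoids differentiability issues at atoms of $D$. Write $|g_\P(x)-g_\P(y)|$ for $x>y$ as
\[
(h+b)\,\E_\P\big[(x-D)^+-(y-D)^+\big]-b(x-y).
\]
The bracket lies in $[0,x-y]$, so the whole expression lies in $[-b(x-y),\,h(x-y)]$.

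\emph{Bounded feedback.} Since $Z\in\{0,1\}$ and $x\in[0,B]$, the feedback satisfies $Y=(h+b)xZ-bx\in\{-bx,\,hx\}$. Therefore $Y\in[-bx,hx]\subseteq[-bB,hB]$.

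No step is a real obstacle. The only care needed is the Lipschitz argument when $\P$ has atoms, which the difference-quotient bound above handles without differentiating.
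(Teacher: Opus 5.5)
Your proof is correct and takes essentially the same route as the paper: convexity because $x\mapsto(x-d)^+$ is convex and expectation preserves convexity; the bound from $Y\in\{-bx,hx\}$; and Lipschitzness because $(x-d)^+$ has slope in $[0,1]$ (the paper says this with subgradients, you with one-sided derivatives or the direct difference bound, which is equivalent). One small slip: in your alternative route the displayed expression equals $g_\P(x)-g_\P(y)$, not $|g_\P(x)-g_\P(y)|$, and taking absolute values afterwards gives $|g_\P(x)-g_\P(y)|\le\max\{h,b\}(x-y)$.
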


\begin{proof}
Define \(\Phi_\P(x):=\mathbb{E}_\P[(x-D)^+]\).
For each fixed \(d\), the map \(x\mapsto (x-d)^+\) is convex, and expectation preserves convexity,
so \(\Phi_\theta\) is convex; hence \(g_\P(x)=(h+b)\Phi_\theta(x)-bx\) is also convex.

To obtain Lipschitzness, note that for each \(d\), any subgradient of \(x\mapsto(x-d)^+\) lies in \([0,1]\),
so any subgradient of \(\Phi_\theta\) lies in \([0,1]\) as well.
Therefore any subgradient \(s(x)\in\partial g_\P(x)\) has the form \(s(x)=(h+b)\,\xi(x)-b,\) for some $\xi(x)\in[0,1]$ and hence \(s(x)\in[-b,h]\). This implies \(|s(x)|\le \max\{h,b\}=L\). Finally, using the subgradient inequality on $g_\P(y)$ and $g_\P(x)$, we conclude that \(g_\P\) is \(L\)-Lipschitz.

To obtain the boundedness of the feedback $Y$, recall that since \(Z\in\{0,1\}\) and \(x\le B\), we have
\[
Y = x\big((h+b)Z-b\big)
=
\begin{cases}
-bx \in[-bB,0], & Z=0,\\
hx \in[0,hB], & Z=1,
\end{cases}
\]
which gives the desired bound of $Y$ as \(-bB\le Y\le hB\). This completes the proof of the Lemma.
\end{proof}

\subsubsection{Proof of the Theorem}

\begin{theorem*}[Bayesian regret of TS for the repeated newsvendor]\label{thm:ts_newsvendor}
Assume demands satisfy \(D_t\in[0,B]\) almost surely. Let \(\pi^\star\) be the Thompson sampling policy that
samples \(\tilde\P_t\) from the posterior of \(\xi\) given \(H_{t-1}\) and plays
\(X_t\in\arg\min_{x\in[0,B]} f_{\tilde\P_t}(x)\).
Then
\[
\mathrm{BayesReg}_T(\pi^\star) \;\le\; \tilde O(\sqrt{T}),
\]
where \(\tilde O(\cdot)\) suppresses polylogarithmic factors and constants depending on \(h,b,B\).
\end{theorem*}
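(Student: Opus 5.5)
The plan is to reduce the censored problem to a one-dimensional Bayesian BCO instance and then invoke \cite[Theorem 4]{bakhtiari2025bco}. By Lemma~\ref{lem:shift_objective}, Bayesian regret measured with $f_\P$ equals Bayesian regret measured with $g_\P$, and the two functions have the same minimizers. Hence TS that plays $\arg\min f_{\tilde\P_t}$ is exactly TS that plays $\arg\min g_{\tilde\P_t}$. It therefore suffices to bound $\E[\sum_t (g_\P(X_t)-g_\P(x^\star_\P))]$.

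\textbf{Step 1: construct the BCO instance.} I augment the learner with $U_t\sim\mathrm{Unif}([0,X_t])$, drawn independently of everything else, and define $Y_t=(h+b)X_tZ_t-bX_t$. Three properties follow from earlier lemmas.
\begin{itemize}
\item By Lemma~\ref{lem:unbiased_feedback}, $\E[Y_t\mid \P, H_{t-1},X_t]=g_\P(X_t)$. This uses that $D_t$ is i.i.d.\ given $\P$ and independent of $(H_{t-1},X_t,U_t)$.
\item By Lemma~\ref{lem:g_properties}, $g_\P$ is convex and $\max\{h,b\}$-Lipschitz, and $Y_t\in[-bB,hB]$.
\item By Lemma~\ref{lem:Z_computable}, $Y_t$ is a measurable function of $(O_t,U_t,X_t)$.
\end{itemize}
Rescaling the actions to $[0,1]$ and the losses to $[0,1]$ by affine maps (the constants depend on $B,h,b$) turns this into the standard normalized setting.

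\textbf{Step 2: match TS to the BCO posterior.} This is the main obstacle. The learner actually observes the richer signal $(O_t,U_t)$, not just $Y_t$, so the TS posterior conditions on more information than the BCO posterior. Also, the conditional law of $Y_t$ given $(\P,X_t)$ is Bernoulli-type, so it is not a deterministic-plus-noise observation of a fixed adversarial function.

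My first attempt is to check whether the proof of \cite[Theorem 4]{bakhtiari2025bco} runs through the information-ratio framework of \cite{russo2016info}. If it does, the regret is bounded by $\sqrt{\Gamma\, T\, I(x^\star;H_T)}$, where $\Gamma$ bounds the per-round ratio $(\E r_t)^2/I_t(x^\star;\text{obs}_t)$. Their bound on $\Gamma$ only needs the information revealed about $x^\star$ by a feedback $Y_t$ with mean $g_\P(X_t)$ and bounded range. Observing $(O_t,U_t)$ instead only increases the mutual information by the data-processing inequality, since $Y_t$ is a function of it. So the per-round information-ratio bound $\Gamma=\widetilde O(1)$ carries over to the TS posterior based on the richer history.

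Two further points need care. The chain rule gives $\sum_t I_t\le I(x^\star;H_T)$. The entropy term must be handled by discretizing $x^\star$ to a grid of mesh $1/\sqrt T$; the Lipschitz constant $L$ controls the discretization cost, giving a $\log T$ entropy and a $O(\sqrt T)$ additive error.

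\textbf{Step 3: combine.} Putting these together gives $\BReg_T(\pi^\star)\le \widetilde O(\sqrt{T})$, with constants depending on $B,h,b$. By Lemma~\ref{lem:outcome-vs-function}, this applies equally to the ideal GPS policy $\pi_\star$.
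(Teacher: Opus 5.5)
Your reduction ($U_t$, $Z_t$, the surrogate $Y_t$, the affine normalization) is the same as the paper's. Your Step~2, however, invokes \cite{bakhtiari2025bco} differently, and more carefully.

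The paper asserts that $\pi^\star$ \emph{is precisely} Thompson sampling for the normalized BCO instance with feedback $\bar Y_t$, and applies their Theorem~4 as a black box. You correctly see that this identification is not literal. $\pi^\star$ samples from $\xi(\cdot\mid H_{t-1})$, where $H_{t-1}$ is built from $O_s=(S_s,C_s)$. That signal is strictly more informative than $Y_s$, since it reveals $D_s$ whenever $C_s=1$. So the posterior, and hence the action law, differs from that of TS run on $(X_s,\bar Y_s)$. In addition, the law of $\bar Y_t$ given $(\mathbb{P},X_t)$ is a two-point law whose variance depends on $g_{\mathbb{P}}(X_t)$, not additive noise.

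Your remedy is what is needed for the bound to actually cover $\pi^\star$. It has three parts:
\begin{itemize}
\item The one-round information-ratio bound is a statement about an arbitrary distribution over convex, bounded, Lipschitz functions, so it applies to the posterior given the richer history.
\item By data processing, the information in $(O_t,U_t)$ is at least the information in $Y_t$.
\item That information is controlled through conditional means via Pinsker (sub-Gaussianity of a bounded signal). This needs only $\mathbb{E}[Y_t\mid\mathbb{P},X_t]=g_{\mathbb{P}}(X_t)$.
\end{itemize}
Note also that $U_t$ is independent of $(\mathbb{P},D_t)$ given $(H_{t-1},X_t)$, so adding it to the history does not change the TS posterior.

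The trade-off is this. The paper's route is shorter but rests on an identification that does not hold. Yours depends on the internal structure of the cited proof rather than on the statement of its Theorem~4. That includes how it handles the entropy of a continuous $x^\star$. You should import that step rather than redo it ad hoc, because probability matching holds for $x^\star$ itself, not for its rounding to a grid.

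Separately, your closing sentence extending the bound to the GPS policy $\pi_\star$ goes beyond the statement, and Lemma~\ref{lem:outcome-vs-function} does not support it. That lemma equates completion sampling with sampling the \emph{empirical} risk $\widehat f_{D_{1:T}}$. Its minimizer is the empirical $\gamma$-quantile of a length-$T$ completion, with its first coordinates tied to observed data. That is not the population quantile of a posterior draw $\tilde{\mathbb{P}}_t$. Drop that sentence or argue it separately.
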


\begin{proof}[Proof of Theorem \ref{thm:rnv_bayes_regret}]
By Lemma~\ref{lem:shift_objective}, it suffices to bound
\(\mathbb{E}\big[\sum_{t=1}^T (g_\P(X_t)-g_\P(x_\P^\star))\big]\) in order to bound $\mathrm{BayesReg}_T(\pi^\star)$. The proof works by first showing that the R-NV admits the same feedback structure as bandit convex optimization (BCO) in \citet{bakhtiari2025bco}, and then using their Bayesian regret bound for Thompson Sampling in the case of one-dimensional actions \cite[Theorem 4]{bakhtiari2025bco}.

The first step involves reducing the repeated newsvendor problem to a one-dimensional bandit convex optimization instance. Fix any round \(t\). After choosing \(X_t\in[0,B]\) and observing censored feedback \(O_t=(S_t,C_t)\),
sample \(U_t\sim\mathrm{Unif}([0,X_t])\) (degenerate at 0 if \(X_t=0\)), and compute \(Z_t\) from \((O_t,U_t)\)
as in Lemma~\ref{lem:Z_computable}. Define the scalar feedback
\(
Y_t := (h+b)\,X_t Z_t - b X_t.
\)
Then Lemma~\ref{lem:unbiased_feedback} ensures that conditional on \(\P\) and \(X_t\),
\(
\mathbb{E}[Y_t \mid \P, X_t] = g_\P(X_t),
\)
and Lemma~\ref{lem:g_properties} gives that \(g_\P\) is convex, \(L\)-Lipschitz, and \(Y_t\in[-bB,hB]\).

The second step involves, normalization to the canonical BCO class. Define the rescaled action \(\bar X_t := X_t/B \in [0,1]\) and the normalized loss feedback
\(
\bar Y_t := \frac{Y_t + bB}{(h+b)B} \in [0,1].
\)
For each \(\P\), define the normalized convex loss
\(
\bar g_\P(u) := \frac{g_\P(Bu)+bB}{(h+b)B}, u\in[0,1].
\)
Then \(\tilde g_\P\) is convex, takes values in \([0,1]\), and is \(1\)-Lipschitz on \([0,1]\)
(because any subgradient of \(g_\P\) lies in \([-b,h]\), hence any subgradient of \(\tilde g_\P\)
lies in $\big[-\frac{b}{h+b},\frac{h}{h+b}\big]\subseteq[-1,1])$.
Moreover,
\(
\mathbb{E}[\bar Y_t \mid \P=\theta, \bar X_t=u]
= \bar g_\P(u).
\)

The third step is to recognize the equivalence between TS for outcome and function sampling, that has been detailed in Appendix \ref{subsec:continuous}. The prior on \(\P\) induces a prior on the (random) convex function \(\tilde g_\P\).
The TS policy \(\pi^\star\) samples \(\tilde\P_t\) from the posterior and plays a minimizer of
\(f_{\tilde\theta_t}\), equivalently a minimizer of \(g_{\tilde\theta_t}\) by Lemma~\ref{lem:shift_objective},
and equivalently a minimizer of \(\tilde g_{\tilde\theta_t}\) after the above scaling.
Therefore, \(\pi^\star\) is precisely Thompson sampling for the normalized one-dimensional BCO instance with convex 1-Lipschitz losses in \([0,1]\) and bandit feedback \(\bar Y_t\).

Finally, according to \cite[Theorem 4]{bakhtiari2025bco}, Thompson sampling in one-dimensional bandit convex optimisation has Bayesian regret \(\tilde O(\sqrt{T})\) under convexity, boundedness, and a Lipschitz assumption. Applying this to the normalized instance yields
\(
\mathbb{E}\Big[\sum_{t=1}^T \big(\tilde g_\P(\bar X_t) - \tilde g_\P(\tilde x_\P^\star)\big)\Big]
\;\le\; \tilde O(\sqrt{T}),
\), where \(\tilde x_\P^\star\in\arg\min_{u\in[0,1]}\bar g_\P(u)\) corresponds to \(x_\P^\star\) under scaling.
Undoing the normalization from above gives
\[
\mathbb{E}\Big[\sum_{t=1}^T \big(g_\P(X_t)-g_\P(x_\P^\star)\big)\Big]
\;\le\; (h+b)B \cdot \tilde O(\sqrt{T})
\;=\; \tilde O(\sqrt{T}),
\]
absorbing \((h+b)B\) into the \(\tilde O(\cdot)\) notation.
Lastly, Lemma~\ref{lem:shift_objective} transfers this bound back to \(f_\P\), completing the proof.
\end{proof}

\subsection{Proof of Theorem \ref{thm:completion_to_ovserved}}
\label{app:proof-thm-4.8}

Fix a deployed policy $\pi=\pi_\theta$. For each history value $h$ and action $x\in[0,B]$, let
$P^{\mathrm{obs}}_{h,x}$ denote the true conditional law of the censored observation
$O_t$ given $(H_{t-1}=h,X_t=x)$, and let $Q^{\mathrm{obs}}_{h,x}$ denote the model-implied conditional law
under parameter $\theta$. Assume that for $\P^\star_\pi$-a.e.\ realized $(H_{t-1},X_t)$ we have
$P^{\mathrm{obs}}_{H_{t-1},X_t}\ll Q^{\mathrm{obs}}_{H_{t-1},X_t}$.

Define the one-step observed log-losses
\(
\ell^{\mathrm{obs}}_\theta(o\mid h,x)
:= -\log p_{\theta,\mathrm{obs}}(o\mid h,x),
\ell^{\mathrm{obs}}_\star(o\mid h,x)
:= -\log p^\star_{\mathrm{obs}}(o\mid h,x),
\)
where $p_{\theta,\mathrm{obs}}(\cdot\mid h,x)$ and $p^\star_{\mathrm{obs}}(\cdot\mid h,x)$ are densities
of $Q^{\mathrm{obs}}_{h,x}$ and $P^{\mathrm{obs}}_{h,x}$ w.r.t.\ a common dominating measure on $O=[0,B]\times\{0,1\}$. Define the cumulative observed predictive objectives
\(
\mathcal{L}^{\mathrm{obs}}_T(\theta;\pi)
:= \mathbb{E}_{\P^\star_\pi}\!\Big[\sum_{t=1}^T \ell^{\mathrm{obs}}_\theta(O_t\mid H_{t-1},X_t)\Big],\) and \(\mathcal{L}^{\mathrm{obs}}_T(\star;\pi)
:= \mathbb{E}_{\P^\star_\pi}\!\Big[\sum_{t=1}^T \ell^{\mathrm{obs}}_\star(O_t\mid H_{t-1},X_t)\Big],
\)
and recall the observed mismatch
\(
\Delta^{\mathrm{obs}}_T(\theta;\pi)
:= \sum_{t=1}^T \mathbb{E}_{(H_{t-1},X_t)\sim \P^\star_\pi}
\Big[\mathrm{KL}\!\big(P^{\mathrm{obs}}_{H_{t-1},X_t}\,\|\,Q^{\mathrm{obs}}_{H_{t-1},X_t}\big)\Big].
\)

\subsubsection{Assumptions}
\label{apx:thm_3_asm}

\begin{assumption}[Max-order coverage]\label{ass:coverage}
There exists $\eta\in(0,1]$ such that along the deployed interaction (under $P^\star$ and $\pi_\theta$),
\[
\mathbb{P}^\star_{\pi_\theta}(X_t=B \mid H_{t-1}) \ge \eta\qquad\text{a.s. for all }t.
\]
\end{assumption}

\paragraph{Discussion.}
Assumption~\ref{ass:coverage} postulates a uniform coverage condition: under deployment, for every history $h\in H_{t-1}$,
the policy selects the maximal order $X_t=B$ with probability at least $\eta>0$. In our censored setting,
the event $\{X_t=B\}$ yields fully informative observations because the censoring map does not
truncate demand at the boundary, so a nonvanishing fraction of rounds directly captures the latent demand
mechanism. This mirrors the observable boundary phenomenon in \citet{hssaine2024censored}.

\begin{assumption}[Self-contraction along AR pseudo-generation]\label{ass:self-contraction}
There exists $c_{\mathrm{sc}}\in(0,\infty)$ such that for every history value $h$ and every $s\ge 2$,
\[
\delta_s(h) \le \frac{c_{\mathrm{sc}}}{s}\, d(h),
\]
where $d(h)$ is the one-step divergence defined in \eqref{eq:def-dh} below, and $\delta_s(h)$ is the $s$-th AR conditional divergence
defined in \eqref{eq:def-delta-sh} below.
\end{assumption}

\paragraph{Discussion}
Assumption~\ref{ass:self-contraction} is a non-accumulation condition for autoregressive (AR) pseudo-generation in the completion model. The assumption requires a harmonic decay, meaning that errors in later AR conditionals are controlled by (and shrink relative to) the first-step mismatch, rather than compounding linearly with rollout length. Intuitively, this captures a self-correcting regime: as pseudo-history grows, conditioning becomes more informative, and the AR process becomes increasingly stable to small perturbations. The key implication for our proof is that this assumption yields a sharp control of the full completion KL by the one-step mismatch up to a mild $\log T$ factor.

\subsubsection{Supporting Lemmas}
\label{apx:lemmalist_thm3}
\begin{lemma}[Observed KL mismatch equals excess observed log-loss]\label{lem:delta-obs-excess-Lobs}
We have the identity
\[
\Delta^{\mathrm{obs}}_T(\theta;\pi)
=
\mathcal{L}^{\mathrm{obs}}_T(\theta;\pi)
-
\mathcal{L}^{\mathrm{obs}}_T(\star;\pi)
\;\;\ge\; 0.
\]
\end{lemma}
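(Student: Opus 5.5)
The identity holds pointwise in $(H_{t-1},X_t)$, so I will prove it for each round and then take expectations under $\P^\star_\pi$ and sum over $t$. Fix $t$ and a realization $(h,x)$ with $P^{\mathrm{obs}}_{h,x}\ll Q^{\mathrm{obs}}_{h,x}$, as assumed at the start of this subsection. Let $\nu$ be the common dominating measure on $\mathcal{O}=[0,B]\times\{0,1\}$. A convenient choice is Lebesgue measure on $[0,B]\times\{1\}$ plus a point mass at $(x,0)$, matching the form of \eqref{eq:nv-induced-obs-lik}. Let $p^\star_{\mathrm{obs}}(\cdot\mid h,x)$ and $p_{\theta,\mathrm{obs}}(\cdot\mid h,x)$ be the two densities. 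By definition of KL,
\[
\KL\bigl(P^{\mathrm{obs}}_{h,x}\,\|\,Q^{\mathrm{obs}}_{h,x}\bigr)
=\int \log\frac{p^\star_{\mathrm{obs}}(o\mid h,x)}{p_{\theta,\mathrm{obs}}(o\mid h,x)}\,P^{\mathrm{obs}}_{h,x}(do)
=\E\bigl[\ell^{\mathrm{obs}}_\theta(O_t\mid h,x)-\ell^{\mathrm{obs}}_\star(O_t\mid h,x)\,\big|\,H_{t-1}=h,X_t=x\bigr],
\]
where the conditional law of $O_t$ given $(H_{t-1},X_t)=(h,x)$ under $\P^\star_\pi$ is exactly $P^{\mathrm{obs}}_{h,x}$. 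This is because the environment kernel $M^\star_t$ does not depend on the policy.

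Next I take $\E_{(H_{t-1},X_t)\sim\P^\star_\pi}$ of both sides. The tower property turns the right side into $\E_{\P^\star_\pi}[\ell^{\mathrm{obs}}_\theta(O_t\mid H_{t-1},X_t)]-\E_{\P^\star_\pi}[\ell^{\mathrm{obs}}_\star(O_t\mid H_{t-1},X_t)]$. Summing over $t=1,\dots,T$ and using linearity then gives $\mathcal{L}^{\mathrm{obs}}_T(\theta;\pi)-\mathcal{L}^{\mathrm{obs}}_T(\star;\pi)$ on the right and $\Delta^{\mathrm{obs}}_T(\theta;\pi)$ on the left. Nonnegativity follows termwise from Gibbs' inequality, $\KL\ge 0$.

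The main obstacle is integrability: splitting $\E[\ell_\theta-\ell_\star]$ into a difference of expectations is legitimate only when the quantities are finite. I would handle this as follows.
\begin{itemize}
\item First assume $\mathcal{L}^{\mathrm{obs}}_T(\star;\pi)$ is finite, i.e.\ the true observed entropy term is finite. On $[0,B]$ this holds, for instance, when the true demand density is bounded.
\item Under that assumption, $\E[\ell_\theta]=\E[\ell_\star]+\E[\log(p^\star/p_\theta)]$ holds in $(-\infty,\infty]$, because the negative part of $\log(p^\star/p_\theta)$ is integrable under $P^{\mathrm{obs}}$: the quantity $\E[(\log p_\theta/p^\star)_+]$ is bounded by $\E[p_\theta/p^\star]\le 1$.
\item The identity therefore holds in the extended sense, with both sides possibly equal to $+\infty$ together.
\item If $\mathcal{L}^{\mathrm{obs}}_T(\star;\pi)$ is infinite, I would read the right-hand side as $\E[\sum_t(\ell_\theta-\ell_\star)]$, which is the form in which it is used in \eqref{eq:Delta-obs-excess-logloss}.
\end{itemize}
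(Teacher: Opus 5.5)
Your proposal is correct and follows the paper's proof essentially step for step. You write the conditional KL at each $(h,x)$ as the expected difference of the two observed log-losses, integrate over $(H_{t-1},X_t)\sim\mathbb{P}^\star_\pi$ via the tower property, sum over $t$, and get nonnegativity from Gibbs' inequality. Your extra integrability argument is a sound refinement that the paper leaves implicit: the negative part of $\log(p^\star_{\mathrm{obs}}/p_{\theta,\mathrm{obs}})$ is integrable, so when $\mathcal{L}^{\mathrm{obs}}_T(\star;\pi)$ is finite the split into $\mathcal{L}^{\mathrm{obs}}_T(\theta;\pi)-\mathcal{L}^{\mathrm{obs}}_T(\star;\pi)$ is valid in the extended sense.
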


\begin{proof}
Fix $t\in\{1,\dots,T\}$ and condition on $(H_{t-1},X_t)=(h,x)$. By the definition of conditional KL divergence (with the absolute continuity assumption above), we have the following identity
\begin{align*}
\mathrm{KL}\!\big(P^{\mathrm{obs}}_{h,x}\,\|\,Q^{\mathrm{obs}}_{h,x}\big)
&=
\mathbb{E}\!\left[
\log\!\left(
\frac{p^\star_{\mathrm{obs}}(O_t\mid h,x)}{p_{\theta,\mathrm{obs}}(O_t\mid h,x)}
\right)
\,\Big|\, H_{t-1}=h, X_t=x
\right] \\
&=
\mathbb{E}\!\left[
-\log p_{\theta,\mathrm{obs}}(O_t\mid h,x) + \log p^\star_{\mathrm{obs}}(O_t\mid h,x)
\,\Big|\, H_{t-1}=h, X_t=x
\right] \\
&=
\mathbb{E}\!\left[
\ell^{\mathrm{obs}}_\theta(O_t\mid h,x) - \ell^{\mathrm{obs}}_\star(O_t\mid h,x)
\,\Big|\, H_{t-1}=h, X_t=x
\right].
\end{align*}
Now take expectation over $(H_{t-1},X_t)\sim \P^\star_\pi$ and apply the tower property to obtain the following
\[
\mathbb{E}_{(H_{t-1},X_t)\sim \P^\star_\pi}
\Big[\mathrm{KL}\!\big(P^{\mathrm{obs}}_{H_{t-1},X_t}\,\|\,Q^{\mathrm{obs}}_{H_{t-1},X_t}\big)\Big]
=
\mathbb{E}_{\P^\star_\pi}\!\big[\ell^{\mathrm{obs}}_\theta(O_t\mid H_{t-1},X_t) - \ell^{\mathrm{obs}}_\star(O_t\mid H_{t-1},X_t)\big].
\]
Finally, summing over $t=1,\dots,T$ and using linearity of expectation yields the desired identity
\[
\Delta^{\mathrm{obs}}_T(\theta;\pi)
=
\mathbb{E}_{\P^\star_\pi}\!\Big[\sum_{t=1}^T \ell^{\mathrm{obs}}_\theta(O_t\mid H_{t-1},X_t)\Big]
-
\mathbb{E}_{\P^\star_\pi}\!\Big[\sum_{t=1}^T \ell^{\mathrm{obs}}_\star(O_t\mid H_{t-1},X_t)\Big]
=
\mathcal{L}^{\mathrm{obs}}_T(\theta;\pi) - \mathcal{L}^{\mathrm{obs}}_T(\star;\pi).
\]
Moreover, $\Delta^{\mathrm{obs}}_T(\theta;\pi)\ge 0$ holds because KL divergence is always nonnegative due to Gibbs' inequality.
\end{proof}

\begin{lemma}[Invertible censoring at $x=B$ preserves KL]\label{lem:invertible-B-preserves-KL}
Assume that for all $t$, $D_t\in[0,B]$ almost surely under the true environment. Fix $h$ and let $q_t^\star(\cdot\mid h),q_{t,\theta}(\cdot\mid h)$ be probability measures on $[0,B]$.
Then
\[
\mathrm{KL}(q_t^\star(\cdot\mid h)\|q_{t,\theta}(\cdot\mid h)) \;=\; \mathrm{KL}\big(P^{\mathrm{obs}}_{h,B}\,\|\,Q^{\mathrm{obs}}_{h,B}\big).
\]
\end{lemma}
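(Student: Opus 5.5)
The plan is to show that at $x=B$ the censoring map $\psi(B,\cdot)$ is injective on $[0,B]$ up to a null set. The KL identity then follows from the invariance of KL under measurable bijections. Concretely, at $x=B$ we have $S=\min\{D,B\}=D$ and $C=\mathbf{1}\{D\le B\}=1$ for every $d\in[0,B]$. Hence $\psi(B,d)=(d,1)$, which is the embedding $\iota:[0,B]\to[0,B]\times\{0,1\}$, $d\mapsto(d,1)$. This map is measurable and injective, and its left inverse $\rho(s,c):=s$ is measurable on all of $\mathcal O$.

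Next I identify the observed kernels as pushforwards. By the definitions preceding \eqref{eq:def-Delta-obs}, $P^{\mathrm{obs}}_{h,B}=\iota_\# q_t^\star(\cdot\mid h)$ and $Q^{\mathrm{obs}}_{h,B}=\iota_\# q_{t,\theta}(\cdot\mid h)$. For the true kernel this uses the assumption $D_t\in[0,B]$ a.s. For the model it uses that $q_{t,\theta}(\cdot\mid h)$ is a probability measure on $[0,B]$, so no mass lands at $c=0$.

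Applying Lemma~\ref{lem:dp-kl} twice gives both inequalities. First, pushing forward through $\iota$ yields $\mathrm{KL}(P^{\mathrm{obs}}_{h,B}\|Q^{\mathrm{obs}}_{h,B})\le \mathrm{KL}(q_t^\star\|q_{t,\theta})$. Second, pushing forward through $\rho$, and using $\rho\circ\iota=\mathrm{id}$ so that $\rho_\#\iota_\# q=q$, yields the reverse inequality. Together these give equality.

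Absolute continuity needs a brief check. Lemma~\ref{lem:dp-kl} is stated under $P\ll Q$, but in the case where absolute continuity fails both sides are $+\infty$. Indeed, a set $A$ with $q_{t,\theta}(A)=0<q_t^\star(A)$ maps to $\iota(A)$, which witnesses the failure on the observed side, and the converse holds via $\rho^{-1}$. Alternatively, I would simply compute with densities: $dP^{\mathrm{obs}}/dQ^{\mathrm{obs}}(s,1)=dq^\star/dq_\theta(s)$, so the two KL integrals coincide after the change of variables $s=d$.

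The only delicate point, and the main obstacle, is the measure-theoretic bookkeeping. I need to confirm that the model-implied observed law at $x=B$ is exactly the pushforward of the same one-step kernel $q_{t,\theta}$ that appears in $d(h)$, with no leftover censored atom at $(B,0)$. I also need to handle the infinite-KL case cleanly.
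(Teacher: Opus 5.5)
Your proposal is correct, but your primary argument is not the one the paper uses. The paper computes directly. It treats $\phi(d)=(d,1)$ as a measurable bijection onto $[0,B]\times\{1\}$ and asserts that $dP^{\mathrm{obs}}_{h,B}/dQ^{\mathrm{obs}}_{h,B}(o)=R(\phi^{-1}(o))$ with $R=dq_t^\star(\cdot\mid h)/dq_{t,\theta}(\cdot\mid h)$. It then changes variables in the KL integral, which is exactly your fallback ``compute with densities.''

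Your main route instead sandwiches the KL by applying Lemma~\ref{lem:dp-kl} twice: once through $\iota$, and once through its measurable left inverse $\rho$, using $\rho\circ\iota=\mathrm{id}$. This has two advantages. You never need to justify the Radon--Nikodym formula for a pushforward, which the paper states without proof and which is precisely where bijectivity is used. The argument is also visibly an instance of the general fact that KL is invariant under any statistic with a measurable left inverse. The paper's computation, in exchange, is more direct and gives the explicit density correspondence $dP^{\mathrm{obs}}_{h,B}/dQ^{\mathrm{obs}}_{h,B}(s,1)=R(s)$.

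Your separate handling of the case $q_t^\star(\cdot\mid h)\not\ll q_{t,\theta}(\cdot\mid h)$ is worthwhile. The paper sets the latent side to $+\infty$ by convention but never checks that the observed side is also infinite. Only the direction through $\iota(A)$ is actually needed there. In the absolutely continuous case, pushforward preserves $\ll$, so both applications of Lemma~\ref{lem:dp-kl} are legitimate. Also, the witness for your ``converse'' direction, starting from an observed-side set $E$, is $\iota^{-1}(E)$, not something obtained from $\rho^{-1}$.
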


\begin{proof}
First, we recall that for any action $x\in[0,B]$, the induced conditional law of the censored observation $O=\psi(x,D)$ is defined by pushforward:
\(
P^{\mathrm{obs}}_{h,x} := q_t^\star(\cdot\mid h) \circ (\psi(x,\cdot))^{-1},
Q^{\mathrm{obs}}_{h,x} := q_{t,\theta}(\cdot\mid h) \circ (\psi(x,\cdot))^{-1}
\). Under the bounded demand assumption, for $x=B$ we have $\psi(B,d)=(d,1)$ for all $d\in[0,B]$.
Define the measurable bijection $\phi:[0,B]\to O_B:= [0,B]\times\{1\}$ by $\phi(d)=(d,1)$, whose measurable inverse is
$\phi^{-1}(s,1)=s$.
By construction, $P^{\mathrm{obs}}_{h,B}=q_t^\star(\cdot\mid h)\circ\phi^{-1}$ and $Q^{\mathrm{obs}}_{h,B}=q_{t,\theta}(\cdot\mid h)\circ\phi^{-1}$.

Let $R := \frac{dq_t^\star(\cdot\mid h)}{dq_{t,\theta}(\cdot\mid h)}$ be the Radon--Nikodym derivative (with the convention $\mathrm{KL}(q_t^\star(\cdot\mid h)\|q_{t,\theta}(\cdot\mid h))=+\infty$ if $q_t^\star(\cdot\mid h)\not\ll q_{t,\theta}(\cdot\mid h)$).
Then $P^{\mathrm{obs}}_{h,B}\ll Q^{\mathrm{obs}}_{h,B}$ and
\(
\frac{dP^{\mathrm{obs}}_{h,B}}{dQ^{\mathrm{obs}}_{h,B}}(o) = R(\phi^{-1}(o))
\)
for $o\in O_B$. Therefore, by change of variables under the pushforward,
\begin{align*}
\mathrm{KL}\big(P^{\mathrm{obs}}_{h,B}\,\|\,Q^{\mathrm{obs}}_{h,B}\big)
&= \int_{O_B} \log\!\Big(\frac{dP^{\mathrm{obs}}_{h,B}}{dQ^{\mathrm{obs}}_{h,B}}(o)\Big)\, dP^{\mathrm{obs}}_{h,B}(o)\\
&= \int_{[0,B]} \log(R(d))\, dq_t^\star(\cdot\mid h)(d)\\
&= \mathrm{KL}(q_t^\star(\cdot\mid h)\|q_{t,\theta}(\cdot\mid h)).  
\end{align*}
Informally, when $X_t = B$, the right-censoring map reveals the demand exactly, so the observed KL mismatch at B equals the latent one-step completion mismatch. The above is a formal proof for this simple intuition.
\end{proof}

\begin{lemma}[Coverage turns observed mismatch into completion mismatch]\label{lem:coverage-obs-to-latent}
Assume for all $t$, $D_t\in[0,B]$ almost surely under the true environment. Moreover, also assume Assumption ~\ref{ass:coverage}. Let $\mu(\cdot\mid h)$ be any action distribution on $[0,B]$ satisfying $\mu(\{B\}\mid h)\ge\eta$.
Then
\begin{equation}\label{eq:latent-by-obs}
\mathrm{KL}(q_t^\star(\cdot\mid h)\|q_{t,\theta}(\cdot\mid h))
\;\le\;
\frac{1}{\eta}\;
\mathbb{E}_{X\sim \mu(\cdot\mid h)}
\Big[\mathrm{KL}\big(P^{\mathrm{obs}}_{h,X}\,\|\,Q^{\mathrm{obs}}_{h,X}\big)\Big].
\end{equation}
\end{lemma}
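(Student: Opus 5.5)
The plan is to bound the expectation on the right-hand side of \eqref{eq:latent-by-obs} from below by its contribution from the atom $\{B\}$, and then apply Lemma~\ref{lem:invertible-B-preserves-KL}.

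First, the integrand $x\mapsto \mathrm{KL}(P^{\mathrm{obs}}_{h,x}\|Q^{\mathrm{obs}}_{h,x})$ is nonnegative by Gibbs' inequality. It is also measurable in $x$, since it is a KL between pushforwards of fixed measures through the jointly measurable map $\psi(x,\cdot)$. We allow the value $+\infty$, in which case the inequality is trivial. Hence
\[
\mathbb{E}_{X\sim\mu(\cdot\mid h)}\Big[\mathrm{KL}\big(P^{\mathrm{obs}}_{h,X}\,\|\,Q^{\mathrm{obs}}_{h,X}\big)\Big]
\;\ge\;
\int_{\{B\}}\mathrm{KL}\big(P^{\mathrm{obs}}_{h,x}\,\|\,Q^{\mathrm{obs}}_{h,x}\big)\,\mu(dx\mid h)
\;=\;
\mu(\{B\}\mid h)\,\mathrm{KL}\big(P^{\mathrm{obs}}_{h,B}\,\|\,Q^{\mathrm{obs}}_{h,B}\big).
\]

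Second, combining $\mu(\{B\}\mid h)\ge\eta$ with the nonnegativity of the KL term gives a lower bound of $\eta\,\mathrm{KL}(P^{\mathrm{obs}}_{h,B}\|Q^{\mathrm{obs}}_{h,B})$. Lemma~\ref{lem:invertible-B-preserves-KL} identifies this KL with $\mathrm{KL}(q_t^\star(\cdot\mid h)\|q_{t,\theta}(\cdot\mid h))$, because censoring at $x=B$ is a measurable bijection onto $[0,B]\times\{1\}$ when demand is bounded by $B$. Dividing by $\eta>0$ yields \eqref{eq:latent-by-obs}. To connect with Assumption~\ref{ass:coverage}, we take $\mu=\pi_{\theta,t}(\cdot\mid h)$, whose atom at $B$ has mass at least $\eta$ for a.e.\ $h$.

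The main subtlety is consistency of the objects at $x=B$. We need $P^{\mathrm{obs}}_{h,B}$ and $Q^{\mathrm{obs}}_{h,B}$ to be pushforwards of the \emph{same} one-step kernels $q_t^\star$ and $q_{t,\theta}$ for every action. This holds because the demand law given $H_{t-1}$ does not depend on the current action $X_t$: the action is chosen using independent sampling randomness.

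If $q_{t,\theta}$ places mass outside $[0,B]$, Lemma~\ref{lem:invertible-B-preserves-KL} would need care, since the bijection is only onto $[0,B]$. In that case we rely on the model being supported on $[0,B]$, as stipulated in Section~\ref{subsec:nv-offline}. Beyond this, the argument reduces to the one-line lower bound above.
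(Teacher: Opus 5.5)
Your proposal is correct and is essentially the paper's argument: discard the nonnegative contribution of $\{X\neq B\}$, evaluate the atom at $B$ to get the lower bound $\mu(\{B\}\mid h)\,\mathrm{KL}(P^{\mathrm{obs}}_{h,B}\|Q^{\mathrm{obs}}_{h,B})\ge \eta\,\mathrm{KL}(P^{\mathrm{obs}}_{h,B}\|Q^{\mathrm{obs}}_{h,B})$, and identify this KL with the one-step latent KL via Lemma~\ref{lem:invertible-B-preserves-KL}. Your extra remarks are sound refinements of points the paper leaves implicit or builds into its definitions: measurability, the $+\infty$ case, and that $P^{\mathrm{obs}}_{h,x},Q^{\mathrm{obs}}_{h,x}$ are pushforwards of the same one-step kernels supported on $[0,B]$.
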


\begin{proof}
Define the nonnegative function $m(x):=\mathrm{KL}\big(P^{\mathrm{obs}}_{h,x}\,\|\,Q^{\mathrm{obs}}_{h,x}\big)\ge 0$.
By Lemma~\ref{lem:invertible-B-preserves-KL}, $m(B)=\mathrm{KL}(q_t^\star(\cdot\mid h)\|q_{t,\theta}(\cdot\mid h))$.
Therefore, using these two results we get the foll
\begin{align*} 
\mathbb{E}_{X\sim\mu(\cdot\mid h)}[m(X)]
&=
\mathbb{E}[m(X)\mathbf{1}\{X=B\}] + \mathbb{E}[m(X)\mathbf{1}\{X \neq B\}] \\
&\ge
\mathbb{E}[m(X)\mathbf{1}\{X=B\}] \\
&=
m(B)\,\mu(\{B\}\mid h) \\
&\ge
\eta\,\mathrm{KL}(q_t^\star(\cdot\mid h)\|q_{t,\theta}(\cdot\mid h)),
\end{align*}
where the first equality is due to the non-negativity of KL divergence or Gibbs' inequality. Finally, we can rearrange to obtain \eqref{eq:latent-by-obs}, which is the desired result of this Lemma.
\end{proof}

\begin{lemma}[KL decomposition for AR-completion kernels]\label{lem:AR-KL-decomp}
Fix $(t,h)$ and suppose the (true and learned) completion kernels admit autoregressive factorizations
\(
Q^\star_t(d\tilde d_{1:T}\mid h) = q^\star_{t,1}(d\tilde d_1\mid h)\prod_{s=2}^T q^\star_{t,s}(d\tilde d_s\mid h,\tilde d_{1:s-1}),
\)
\(
Q_{t,\theta}(d\tilde d_{1:T}\mid h) = q_{t,1,\theta}(d\tilde d_1\mid h)\prod_{s=2}^T q_{t,s,\theta}(d\tilde d_s\mid h,\tilde d_{1:s-1}),
\)
where the product is the usual iterated kernel product construction.
Define
\begin{equation}\label{eq:def-dh}
d(h) := \mathrm{KL}\big(q^\star_{t,1}(\cdot\mid h)\,\|\,q_{t,1,\theta}(\cdot\mid h)\big),
\end{equation}
and for $s\ge 2$,
\begin{equation}\label{eq:def-delta-sh}
\delta_s(h)
:= \mathbb{E}_{\tilde D_{1:s-1}\sim Q^\star_t(\cdot\mid h)}
\Big[
\mathrm{KL}\big(q^\star_{t,s}(\cdot\mid h,\tilde D_{1:s-1})\,\|\,q_{t,s,\theta}(\cdot\mid h,\tilde D_{1:s-1})\big)
\Big].
\end{equation}
Then
\[
\mathrm{KL}\big(Q^\star_t(\cdot\mid h)\,\|\,Q_{t,\theta}(\cdot\mid h)\big)
=
d(h) + \sum_{s=2}^T \delta_s(h).
\]
\end{lemma}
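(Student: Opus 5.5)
This is the standard chain rule for KL divergence applied to the autoregressive factorization, iterated over coordinates. I would prove it by induction on the length of the prefix. For $k\in\{1,\dots,T\}$, write $Q^\star_{t,1:k}(\cdot\mid h)$ and $Q_{t,\theta,1:k}(\cdot\mid h)$ for the marginals of $\tilde D_{1:k}$ under the two kernels. By the iterated kernel construction, these marginals are given by the same products truncated at $s=k$.

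The inductive claim is
\[
\mathrm{KL}\big(Q^\star_{t,1:k}(\cdot\mid h)\,\|\,Q_{t,\theta,1:k}(\cdot\mid h)\big)=d(h)+\sum_{s=2}^k\delta_s(h).
\]
The base case $k=1$ is the definition \eqref{eq:def-dh} of $d(h)$. For the inductive step, apply the two-variable chain rule (Lemma~\ref{lem:two-var-chain-rule}) with $X=\tilde D_{1:k-1}$ and $Y=\tilde D_k$. The regular conditional laws of $Y$ given $X$ are exactly $q^\star_{t,k}(\cdot\mid h,\tilde d_{1:k-1})$ and $q_{t,k,\theta}(\cdot\mid h,\tilde d_{1:k-1})$. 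This holds because the kernel product defines the joint law with these kernels as disintegrations, so the conditionals agree $Q^\star_t$-a.s. The conditional KL term is averaged over $X\sim Q^\star_{t,1:k-1}(\cdot\mid h)$. That average coincides with $\delta_k(h)$ in \eqref{eq:def-delta-sh}, since the law of $\tilde D_{1:k-1}$ under $Q^\star_t(\cdot\mid h)$ is that marginal. Adding this term to the induction hypothesis gives the claim for $k$, and setting $k=T$ proves the lemma.

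The one step needing care is absolute continuity and infinite values, because Lemma~\ref{lem:two-var-chain-rule} is stated under $P_{XY}\ll Q_{XY}$. I would handle this by cases.
\begin{itemize}
\item If $Q^\star_t(\cdot\mid h)\ll Q_{t,\theta}(\cdot\mid h)$, then every marginal inherits absolute continuity and the chain rule applies at each step.
\item Otherwise the left side is $+\infty$, and I would argue that some term on the right is also $+\infty$. Take the first $k$ at which absolute continuity of the $1{:}k$ marginals fails. Either $k=1$, so $d(h)=\infty$, or the conditional at step $k$ fails to be absolutely continuous on a set of prefixes of positive $Q^\star$-probability, so $\delta_k(h)=\infty$.
\end{itemize}
Measurability of $\tilde d_{1:s-1}\mapsto\mathrm{KL}(q^\star_{t,s}\|q_{t,s,\theta})$, which makes the expectation in \eqref{eq:def-delta-sh} well defined, is standard for Markov kernels on Polish spaces. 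I expect this to be the only technical point. The rest is a routine telescoping of log-likelihood ratios, $\log\frac{dQ^\star_t}{dQ_{t,\theta}}=\sum_s\log\frac{dq^\star_{t,s}}{dq_{t,s,\theta}}$, integrated against $Q^\star_t(\cdot\mid h)$.
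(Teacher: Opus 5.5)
Your proof is correct and uses the same argument as the paper: repeated application of the two-variable chain rule (Lemma~\ref{lem:two-var-chain-rule}). You grow the prefix by forward induction, while the paper splits off $\tilde D_T$ and recurses backward, and these are the same telescoping. Your case split on absolute continuity goes beyond the paper: you show that $\delta_k(h)=+\infty$ at the first prefix where absolute continuity fails, which supplies a hypothesis of Lemma~\ref{lem:two-var-chain-rule} that the paper's proof uses without checking.
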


\begin{proof}
First we apply the two variable chain rule for KL (Lemma~\ref{lem:two-var-chain-rule}) with $(X,Y)=(\tilde D_{1:T-1},\tilde D_T)$. Then we recurse on the first term
$\mathrm{KL}(\mathrm{Law}(\tilde D_{1:T-1})\|\mathrm{Law}_\theta(\tilde D_{1:T-1}))$, splitting off $\tilde D_{T-1}$, etc.
After $T-1$ steps this yields the stated decomposition, with the first marginal term equal to $d(h)$ and the summation of the remaining terms equal to $\delta_s(h)$ by definition.
\end{proof}

\begin{lemma}[Self-contraction implies completion KL is controlled by one-step KL]\label{lem:self-contraction-bound}
Assume Assumption~\ref{ass:self-contraction}. Then for every $(t,h)$,
\[
\mathrm{KL}\big(Q^\star_t(\cdot\mid h)\,\|\,Q_{t,\theta}(\cdot\mid h)\big)
\;\le\;
\bigl(1 + c_{\mathrm{sc}}\log T\bigr)\, d(h).
\]
\end{lemma}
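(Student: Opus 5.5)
The proof combines the exact chain-rule identity of Lemma~\ref{lem:AR-KL-decomp} with the pointwise bound of Assumption~\ref{ass:self-contraction}, followed by a harmonic-sum estimate. Fix $(t,h)$. By Lemma~\ref{lem:AR-KL-decomp},
\[
\mathrm{KL}\big(Q^\star_t(\cdot\mid h)\,\|\,Q_{t,\theta}(\cdot\mid h)\big)
= d(h) + \sum_{s=2}^T \delta_s(h).
\]
Both completion kernels admit the required autoregressive factorizations. The true kernel is a disintegration of $\Law(D_{1:T}\mid H_{t-1}=h)$, and the learned one is the constrained sampler of Algorithm~\ref{alg:nv-gps}, which is exactly an iterated kernel product. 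The lemma therefore applies without further justification.

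Next, apply Assumption~\ref{ass:self-contraction} term by term. For each $s\ge 2$ we have $\delta_s(h)\le (c_{\mathrm{sc}}/s)\,d(h)$. Since $d(h)\ge 0$ by Gibbs' inequality, the bounds can be summed:
\[
\sum_{s=2}^T \delta_s(h) \le c_{\mathrm{sc}}\, d(h)\sum_{s=2}^T \frac1s .
\]
If $d(h)=+\infty$, the claimed bound is trivially true, so we may assume $d(h)<\infty$.

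It remains to bound the harmonic tail by $\log T$. Because $1/s\le\int_{s-1}^{s} u^{-1}\,du$ for $s\ge 2$, we get $\sum_{s=2}^T 1/s\le \int_1^T u^{-1}\,du=\log T$. Substituting this into the decomposition gives
\[
\mathrm{KL}\big(Q^\star_t(\cdot\mid h)\,\|\,Q_{t,\theta}(\cdot\mid h)\big)\le d(h)+c_{\mathrm{sc}}\log T\, d(h) = (1+c_{\mathrm{sc}}\log T)\,d(h),
\]
which is the stated inequality.

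The only step that is not purely mechanical is the appeal to the chain-rule decomposition. Its validity requires the conditional KL terms to be well defined, i.e.\ $Q^\star_t(\cdot\mid h)\ll Q_{t,\theta}(\cdot\mid h)$, so that the regular conditional distributions exist and the two-variable chain rule (Lemma~\ref{lem:two-var-chain-rule}) applies. If absolute continuity fails, the left-hand side is $+\infty$. In that case Lemma~\ref{lem:AR-KL-decomp}, read with the extended-valued convention, forces $d(h)$ or some $\delta_s(h)$ to be infinite. If some $\delta_s(h)=+\infty$, the self-contraction assumption forces $d(h)=+\infty$, so both sides are infinite and the inequality holds. With that case settled, the remainder is the harmonic-sum estimate above.
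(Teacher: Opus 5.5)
Your proposal is correct and follows the paper's proof: the chain-rule identity of Lemma~\ref{lem:AR-KL-decomp}, the termwise bound from Assumption~\ref{ass:self-contraction}, and the estimate $\sum_{s=2}^T 1/s\le\log T$. Your integral-comparison justification of the harmonic bound and your handling of the $+\infty$ cases are sound additions that the paper leaves implicit. One small nit: regular conditional distributions on $[0,B]^T$ exist whether or not absolute continuity holds, so $Q^\star_t\ll Q_{t,\theta}$ is needed only for the finite-valued form of Lemma~\ref{lem:two-var-chain-rule} as stated.
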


\begin{proof}
Using Lemma~\ref{lem:AR-KL-decomp} followed by Assumption~\ref{ass:self-contraction}, we obtain
\begin{align*}
\mathrm{KL}\big(Q^\star_t(\cdot\mid h)\,\|\,Q_{t,\theta}(\cdot\mid h)\big) 
&= d(h) + \sum_{s=2}^T \delta_s(h) \\
&\le d(h) + \sum_{s=2}^T \frac{c_{\mathrm{sc}}}{s}\, d(h) \\
&= \Bigl(1 + c_{\mathrm{sc}}\sum_{s=2}^T \frac{1}{s}\Bigr) d(h).
\end{align*}

Now, using $\sum_{s=2}^T \frac{1}{s} \le \log T$ to bound the last expression gives the claim of this Lemma.
\end{proof}

\subsubsection{Proof of the Theorem}

\begin{theorem*}[Relation between completion mismatch and censoring-aware predictive objective]\label{thm:deploy-vs-obs}
Assume bounded demand ($D_t\in[0,B]$ a.s.), max-order coverage (Assumption~\ref{ass:coverage}),
and self-contraction (Assumption~\ref{ass:self-contraction}). Then
\[
\Delta^{\mathrm{comp}}_T(\theta;\pi_\theta)
\;\le\;
\frac{1+c_{\mathrm{sc}}\log T}{\eta}\;
\Delta^{\mathrm{obs}}_T(\theta;\pi_\theta).
\]
\end{theorem*}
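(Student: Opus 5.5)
The plan is to prove the bound pointwise in $(t,h)$ and then average. Fix a round $t$ and a realized history $h$ in the $\P^\star_{\pi_\theta}$-support of $H_{t-1}$. The first step controls the full trajectory completion KL by the one-step latent mismatch. By Lemma~\ref{lem:AR-KL-decomp}, $\KL(Q^\star_t(\cdot\mid h)\|Q_{t,\theta}(\cdot\mid h)) = d(h)+\sum_{s\ge 2}\delta_s(h)$. Lemma~\ref{lem:self-contraction-bound}, which uses Assumption~\ref{ass:self-contraction}, then gives the factor $(1+c_{\mathrm{sc}}\log T)$ in front of $d(h)$.

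The second step identifies $d(h)$ with the one-step latent KL $\KL(q^\star_t(\cdot\mid h)\|q_{t,\theta}(\cdot\mid h))$ that appears in Lemma~\ref{lem:coverage-obs-to-latent}. This requires the first AR factor $q^\star_{t,1}$ in the factorization to coincide with the one-step completion kernel entering the observed kernels $P^{\mathrm{obs}}_{h,x}$ and $Q^{\mathrm{obs}}_{h,x}$. I expect this to be the main obstacle. The AR factorization is indexed by trajectory position $s=1$, whereas $q_t^\star$ is the law of the next demand $D_t$. I would handle it by ordering the autoregressive factorization so that the first generated coordinate is the not-yet-observed $D_t$. Since the KL chain rule of Lemma~\ref{lem:two-var-chain-rule} holds for any ordering of coordinates, Lemma~\ref{lem:AR-KL-decomp} remains valid under this ordering, and $d(h)$ then equals exactly the one-step latent KL. Assumption~\ref{ass:self-contraction} has to be read with respect to this same ordering, so I would state that convention explicitly.

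The third step converts latent to observed mismatch. Take $\mu(\cdot\mid h):=\pi_{\theta,t}(\cdot\mid h)$, the deployed action kernel, which satisfies $\mu(\{B\}\mid h)\ge\eta$ by Assumption~\ref{ass:coverage}. Lemma~\ref{lem:coverage-obs-to-latent}, which rests on the information-preserving censoring at $x=B$ from Lemma~\ref{lem:invertible-B-preserves-KL}, yields $d(h)\le \eta^{-1}\E_{X\sim\pi_{\theta,t}(\cdot\mid h)}[\KL(P^{\mathrm{obs}}_{h,X}\|Q^{\mathrm{obs}}_{h,X})]$. Combining the three steps gives
\[
\KL\bigl(Q^\star_t(\cdot\mid h)\,\|\,Q_{t,\theta}(\cdot\mid h)\bigr)\le \frac{1+c_{\mathrm{sc}}\log T}{\eta}\,\E_{X\sim\pi_{\theta,t}(\cdot\mid h)}\bigl[\KL(P^{\mathrm{obs}}_{h,X}\|Q^{\mathrm{obs}}_{h,X})\bigr].
\]

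Finally, take expectations over $H_{t-1}\sim\P^\star_{\pi_\theta}$. Under $\P^\star_{\pi_\theta}$ the conditional law of $X_t$ given $H_{t-1}$ is exactly $\pi_{\theta,t}(\cdot\mid H_{t-1})$, so the iterated expectation on the right becomes $\E_{(H_{t-1},X_t)\sim\P^\star_{\pi_\theta}}[\KL(P^{\mathrm{obs}}_{H_{t-1},X_t}\|Q^{\mathrm{obs}}_{H_{t-1},X_t})]$. Summing over $t\le T$ turns the left side into $\Delta^{\mathrm{comp}}_T(\theta;\pi_\theta)$ and the right side into the stated constant times $\Delta^{\mathrm{obs}}_T(\theta;\pi_\theta)$. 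The constant is uniform in $t$ and $h$, so it passes through the sum. The measure-zero exceptional histories do not affect the expectations.
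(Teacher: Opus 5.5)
Your proposal is correct and follows the paper's proof essentially step for step. You bound the completion KL by $(1+c_{\mathrm{sc}}\log T)\,d(h)$ via Lemma~\ref{lem:self-contraction-bound}, identify $d(h)$ with $\mathrm{KL}(q_t^\star(\cdot\mid h)\,\|\,q_{t,\theta}(\cdot\mid h))$, apply Lemma~\ref{lem:coverage-obs-to-latent} with the deployed action kernel, and average over $H_{t-1}\sim\mathbb{P}^\star_{\pi_\theta}$ before summing over $t$; your explicit convention of generating $D_t$ first in the autoregressive factorization (with Assumption~\ref{ass:self-contraction} read in that order) is what the paper means, more tersely, by ``by construction of the pseudo-generation order.''
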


\begin{proof}[Proof of Theorem \ref{thm:completion_to_ovserved}]
Fix $t$ and condition on $H_{t-1}=h$. The first step is to apply Lemma~\ref{lem:self-contraction-bound} and obtain
\begin{align}
\label{eq:thm3_last_1}
\mathrm{KL}\big(Q^\star_t(\cdot\mid h)\,\|\,Q_{t,\theta}(\cdot\mid h)\big)
\le (1+c_{\mathrm{sc}}\log T)\, d(h).  
\end{align}

The second step is to recognize that $d(h)=\mathrm{KL}(q_t^\star(\cdot\mid h)\|q_{t,\theta}(\cdot\mid h))$, due to the following reasoning. The first-step AR marginal $q^\star_{t,1}(\cdot\mid h)$ is (by construction of the pseudo-generation order)
the conditional law of the current-period latent demand $D_t$ given $h$, i.e. it equals $q_t^\star(\cdot\mid h)$, and similarly
$q_{t,1,\theta}(\cdot\mid h)=q_{t,\theta}(\cdot\mid h)$.

In the third step, letting $\mu(\cdot\mid h)$ be the deployed action distribution at $h$ (i.e. $X_t\mid H_{t-1}=h\sim \mu(\cdot\mid h)$), yields the following using Lemma~\ref{lem:coverage-obs-to-latent}
\begin{align}
\label{eq:thm3_last_2}
d(h)=\mathrm{KL}(q_t^\star(\cdot\mid h)\|q_{t,\theta}(\cdot\mid h))
\le \frac{1}{\eta}\;
\mathbb{E}_{X_t\sim \mu(\cdot\mid h)}
\Big[\mathrm{KL}\big(P^{\mathrm{obs}}_{h,X_t}\,\|\,Q^{\mathrm{obs}}_{h,X_t}\big)\Big].
\end{align}

Combining the \eqref{eq:thm3_last_1} and \eqref{eq:thm3_last_2} above we obtain,
\[
\mathrm{KL}\big(Q^\star_t(\cdot\mid h)\,\|\,Q_{t,\theta}(\cdot\mid h)\big)
\le
\frac{1+c_{\mathrm{sc}}\log T}{\eta}\;
\mathbb{E}_{X_t\sim \mu(\cdot\mid h)}
\Big[\mathrm{KL}\big(P^{\mathrm{obs}}_{h,X_t}\,\|\,Q^{\mathrm{obs}}_{h,X_t}\big)\Big].
\]

Finally, taking expectation over $H_{t-1}$ under $\P^\star_{\pi_\theta}$ and summing over $t=1,\dots,T$
\[
\Delta^{\mathrm{comp}}_T(\theta;\pi_\theta)
\le
\frac{1+c_{\mathrm{sc}}\log T}{\eta}\;
\Delta^{\mathrm{obs}}_T(\theta;\pi_\theta),
\]
as claimed. Moreover, Lemma \ref{lem:delta-obs-excess-Lobs} connects the definition of $\Delta^{\mathrm{obs}}_T(\theta;\pi_\theta)$ in the manuscript (based on excess observed log-loss) to the definition using observed KL mismatch, completing our claim in the Theorem.
\end{proof}

%% file: sec-apx_experiment.tex
\section{Experimental Details}
\label{app:exp-details}

\subsection{ChronosFlow-ICGPS Architecture}
\label{app:exp-implementation}

In this Appendix, we provide some further details regarding the ChronosFlow-ICGPS architecture. Similar to the main paper, we study the architecture under three headings: (a) conditioning context vector layer, (b) conditional normalizing flow (CNF) head, and (c) ICGPS sampler, the first two of which we elaborate below, along with details about training.

\paragraph{Conditioning context vector layer}
We start by describing the prompt structure for the Chronos-2 backbone. Note that Chronos-2 expects a time-series prompt. We encode $H_{t-1}$ using a demand-proxy series and optional auxiliary channels. The demand-proxy series is given by
\(
\widetilde Z_s := S_s\cdot \ind{C_s=1} + X_s\cdot \ind{C_s=0}, s<t,
\)
i.e., revealed demand when uncensored and the known lower bound when censored. We provide Chronos with $(Z_s)_{s<t}$ as the main series, and optionally $(X_s)_{s<t}$ and $(C_s)_{s<t}$ as covariates.

From this prompt, Chronos-2 returns predictive quantiles $\bigl\{q^{\mathrm{Chr}}_{\alpha}(t)\bigr\}_{\alpha\in\cJ}$ on a fixed grid
\(
\cJ := \{0.05,0.10,\dots,0.95\}
\). Secondly, we maintain a KM product-limit estimator treating $C_s=1$ as events at $S_s$ and $C_s=0$ as right-censoring
at $X_s$. Writing the KM survival estimate as $\widehat{S}_{t-1}(\cdot)$, we obtain the KM quantiles as \(
\bigl\{q^{\mathrm{KM}}_{\alpha}(t)\bigr\}_{\alpha\in\cJ},
\)
where $q^{\mathrm{KM}}_{\alpha}(t) := \inf\{z\in[0,B]: 1-\widehat{S}_{t-1}(z)\ge \alpha\}$. Thirdly, the summary statistics are obtained as
\(
\mathrm{stats}(H_{t-1})
:=
\Big(X_{t-1},S_{t-1},\ \widehat{\mu}(\widetilde Z)_{t-1}, \widehat{\mathrm{sd}}(\widetilde Z)_{t-1},\
\widehat{\rho}_{t-1},\ t-1\Big),
\)
where $\widehat{\rho}_{t-1}:=\frac{1}{t-1}\sum_{s<t}\ind{C_s=0}$ is the severity indicator of censoring. The conditioning vector is defined as concatenation of above
\(
\widebar H_t:=\omega(H_{t-1})
:= \Big(\{q^{\mathrm{Chr}}_{\alpha}(t)\}_{\alpha\in\cJ},
        \{q^{\mathrm{KM}}_{\alpha}(t)\}_{\alpha\in\cJ},
        \mathrm{stats}(H_{t-1})\Big)\in\R^{44}
\).
\paragraph{Conditional normalizing flow head.} We model demand via a monotone flow
\(
D_t = T_\theta(\widetilde Z; \widebar h_t),
\)
where $Z\sim\cN(0,1)$ and $T_\theta(\cdot; \widebar h)$ is strictly increasing. We use a piecewise-linear map on latent support
$[-L,L]$ (linear tails beyond) with $K$ bins: an MLP hypernetwork maps $\widebar h_t$ to positive bin widths/slopes via
\texttt{softplus}. The induced CDF and inverse-CDF are given by
\(
F_\theta(d\mid \widebar h_t)=\Phi\!\bigl(T_\theta^{-1}(d;\widebar h_t)\bigr),\) and
\(Q_\theta(u\mid \widebar h_t)=T_\theta\!\bigl(\Phi^{-1}(u);\widebar h_t\bigr)
\), respectively.

\paragraph{Training details.}
We use: AdamW optimizer with learning rate on the order of $10^{-4}$ when fine-tuning the Chronos backbone (when enabled for i.i.d. experiments), AdamW optimizer with learning rate on the order of $10^{-3}$ for CNF training,  minibatch sizes ranging from 128 (Chronos fine-tuning) to 2048 (CNF training), and optional global-norm gradient clipping for stability. We select the best checkpoint by validation censored-NLL.

\subsection{Experiment \ref{subsec:exp4} on real-world dataset}
\label{app:exp4}

This appendix records the dataset-to-episode mapping and evaluation protocol used in
Experiment~\ref{subsec:exp4}. We follow \citet[Sec.~6.3]{hssaine2024censored} for preprocessing,
splits, and the \(\lambda\)-controlled censoring procedure to enable direct comparison.

\paragraph{Dataset construction}
\label{app:exp4-data}

We map a transactional retail dataset, called SuperStore \citep{Sahoo2023SuperstoreSales}, into \emph{episodes} (selling seasons), where each episode is an ordered sequence of periods. We treat each (product, season/store) unit as an episode. Within an episode, records are ordered chronologically to form the sequential history \(H_{t-1}=\{(X_s,O_s)\}_{s\le t-1}\) required by
Algorithm~\ref{alg:nv-gps}. We apply the same filtering and preprocessing choices as
\citet{hssaine2024censored} (e.g., removing incomplete seasons and using their normalization/field
handling) so that the resulting benchmark instances match their setting.
We evaluate across the censoring-control values \(\lambda\) defined in
\citet{hssaine2024censored}; smaller \(\lambda\) induces heavier censoring. For each \(\lambda\), we run
the same season-based evaluation described below.
We meta-train on historical seasons and evaluate online on held-out seasons using the split
convention of \citet{hssaine2024censored}. Online evaluation treats each held-out season as a fresh
episode: the policy selects an order each period and updates only through the evolving censored
history \(H_t\).

\paragraph{Algorithm variants.}
We report two ChronosFlow-ICGPS variants under an identical online wrapper and evaluation protocol:
(i) \textsc{Native}, whose completion kernel is trained only on the target dataset’s historical seasons,
and (ii) \textsc{Meta}, whose completion kernel is pretrained on the other real datasets and then
deployed on the target dataset under the same season-based protocol. The purpose is to isolate
whether cross-dataset pretraining improves robustness.

\paragraph{Results and discussion}

Tables~\ref{tab:exp4-real_1}, \ref{tab:exp4-real_2}--\ref{tab:exp4-real_3} report performance across product categories and
\(\lambda\) values, compared to baselines SAA (sample average approximation), KM (Kaplan-Meier), and RCN \citep{hssaine2024censored}. The observations are summarized below

\begin{itemize}
\item 
\emph{Heavy censoring (\(\lambda\le 3\)): strongest gains.}
Under severe censoring, demand is weakly observed early in the season, making cold-start learning
difficult for methods that rely on effectively uncensored demand estimates. ChronosFlow-ICGPS
(\textsc{Meta}) achieves its largest improvements in this regime; for example, on \textsc{Furniture} at
\(\lambda=1\), \textsc{Meta} achieves \(4.1\) versus RCN \(19.8\) and SAA \(16.0\), and on
\textsc{Technology} at \(\lambda=3\), \textsc{Meta} achieves \(4.2\) (best).

\item
\emph{As censoring relaxes: gaps narrow, but not uniformly.}
As \(\lambda\) increases and more periods effectively reveal demand, performance differences often
shrink. The stabilization point is category-dependent (e.g., some categories tighten earlier than
others), reflecting heterogeneous demand dynamics and noise under the same protocol.

\item 
\emph{Meta vs.\ Native: cross-dataset pretraining improves stability.}
Across \(\lambda\), \textsc{Meta} is typically more stable than \textsc{Native}. A representative
instance is \textsc{Office Supplies} at \(\lambda=15\), where \textsc{Native} degrades to \(18.2\) while
\textsc{Meta} remains at \(12.8\). This suggests cross-dataset pretraining helps
both during heavy censoring and by regularizing adaptation when censoring is light.

\item 
\emph{Conclusion.}
The real-data benchmark supports that completion-based in-context GPS transfers beyond i.i.d.\ synthetic
demand to real seasonal episodes with heterogeneous censoring, with the clearest benefits under
severe censoring and competitive performance elsewhere.
\end{itemize}

\begin{table*}[h]
\centering
\caption{Results of Experiment \ref{subsec:exp4} for the \textsc{Office Supplies} Dataset}
\footnotesize
\setlength{\tabcolsep}{3pt}
\renewcommand{\arraystretch}{1.05}
\begin{adjustbox}{max width=\textwidth,center}
\begin{tabular}{l*{15}{c}}
\toprule
Algorithm & $\lambda$=1 & 2 & 3 & 4 & 5 & 6 & 7 & 8 & 9 & 10 & 11 & 12 & 13 & 14 & 15 \\
\midrule
ChronosFlow-ICGPS (\texttt{Native}) & 46.9 & 26.4 & \textbf{10.4} & 11.7 & 13.5 & 12.1 & \textbf{10.3} & 11.8 & \textbf{10.4} & 12.0 & 11.4 & 18.2 & 18.2 & 18.2 & 18.2 \\
ChronosFlow-ICGPS (\texttt{Meta}) & 48.2 & 34.1 & 12.7 & \textbf{9.8} & \textbf{13.6} & \textbf{10.3} & 12.0 & \textbf{9.9} & \textbf{9.8} & 12.8 & 12.8 & 12.8 & 12.8 & 12.8 & 12.8 \\
SAA & 52.6 & 52.6 & 44.3 & 36.9 & 30.5 & 25.0 & 20.6 & 17.1 & 17.1 & 14.6 & 14.4 & 11.5 & 10.4 & 10.4 & \textbf{9.9} \\
RCN \citep{hssaine2024censored} & \textbf{15.9} & \textbf{15.8} & 15.7 & 15.5 & 15.3 & 15.0 & 14.5 & 14.0 & 13.5 & 12.9 & 12.9 & 11.8 & \textbf{9.8} & \textbf{10.0} & 10.4 \\
Kaplan-Meier & 52.6 & 44.3 & 36.9 & 30.5 & 25.0 & 20.6 & 17.1 & 14.6 & 12.8 & 11.5 & \textbf{10.4} & \textbf{9.9} & \textbf{9.8} & \textbf{10.0} & 10.0 \\
\bottomrule
\end{tabular}
\end{adjustbox}
\label{tab:exp4-real_2}
\end{table*}

\begin{table*}[h]
\centering
\caption{Results of Experiment \ref{subsec:exp4} for the \textsc{Furniture} Dataset}
\footnotesize
\setlength{\tabcolsep}{3pt}
\renewcommand{\arraystretch}{1.05}
\begin{adjustbox}{max width=\textwidth,center}
\begin{tabular}{l*{15}{c}}
\toprule
Algorithm & $\lambda$=1 & 2 & 3 & 4 & 5 & 6 & 7 & 8 & 9 & 10 & 11 & 12 & 13 & 14 & 15 \\
\midrule
ChronosFlow-ICGPS (\texttt{Native}) & 12.6 & 6.9 & \textbf{3.8} & 4.3 & 6.8 & 4.0 & 5.8 & 4.3 & 8.9 & 3.9 & 4.3 & 4.4 & 4.4 & 4.3 & 4.3 \\
ChronosFlow-ICGPS (\texttt{Meta}) & \textbf{4.1} & \textbf{4.2} & 3.9 & 4.1 & 4.3 & 4.2 & 4.0 & 4.0 & 3.9 & 3.9 & 3.9 & 3.9 & 3.9 & 3.9 & \textbf{3.8} \\
SAA & 16.0 & 16.0 & 9.9 & 6.5 & 4.5 & \textbf{3.8} & \textbf{3.8} & \textbf{3.8} & \textbf{3.8} & \textbf{3.8} & \textbf{3.8} & \textbf{3.8} & \textbf{3.8} & \textbf{3.8} & \textbf{3.8} \\
RCN \citep{hssaine2024censored} & 19.8 & 19.0 & 17.4 & 15.8 & 11.4 & 4.0 & \textbf{3.8} & \textbf{3.8} & \textbf{3.8} & \textbf{3.8} & \textbf{3.8} & \textbf{3.8} & \textbf{3.8} & \textbf{3.8} & \textbf{3.8} \\
Kaplan-Meier & 16.0 & 9.9 & 6.5 & \textbf{4.5} & \textbf{3.8} & \textbf{3.8} & \textbf{3.8} & \textbf{3.8} & \textbf{3.8} & \textbf{3.8} & \textbf{3.8} & \textbf{3.8} & \textbf{3.8} & \textbf{3.8} & \textbf{3.8} \\
\bottomrule
\end{tabular}
\end{adjustbox}
\label{tab:exp4-real_3}
\end{table*}